\definecolor{darkgreen}{rgb}{0, 0.5, 0}
\definecolor{red}{rgb}{1, 0, 0}
\definecolor{purple}{rgb}{0.5, 0, 0.5}
\DeclareMathOperator*{\argmin}{argmin}
\newcommand\algoname{META}
\newcommand\ie{\textit{i.e.}}
\newcommand\eg{\textit{e.g.}}
\newcommand\st{\textit{s.t.}}
\newcommand\wrt{\textit{w.r.t.}}
\newcommand\etc{\textit{etc.}}
\newcommand\doubleE{\mathbb{E}}
\newcommand\doubleP{\mathbb{P}}
\newcommand\doubleR{\mathbb{R}}
\newcommand\scriptR{\mathcal{R}}
\newcommand\scriptS{\mathcal{S}}
\newcommand\scriptE{\mathcal{E}}
\newcommand\scriptO{\mathcal{O}}
\newcommand\scriptT{\mathcal{T}}
\newcommand\scriptA{\mathcal{A}}
\newcommand\scriptB{\mathcal{B}}
\newtheorem*{theorem*}{Theorem}
\newtheorem{theorem}{Theorem}
\newtheorem{fact}{Fact}
\newtheorem{definition}{Definition}
\newtheorem*{proposition*}{Proposition}
\newtheorem{proposition}{Proposition}
\newtheorem{corollary}{Corollary}
\newtheorem*{corollary*}{Corollary}
\title{
{META-Learning Eligibility Traces for More Sample Efficient Temporal Difference Learning}\\~\\~\\
{\large Mingde Zhao, School of Computer Science \\ 
	McGill University, Montr\'eal \\ 
	Apr, 2020 \\~\\~\\
	A thesis submitted to McGill University in partial fulfillment of the requirements of the degree of \\~\\ Master of Computer Science }\\~\\
}
\author{\textcopyright Mingde Zhao, 2020}
\date{}
\begin{document}
\maketitle

\chapter*{Abstract}
\label{sec:engAbstract}
\addcontentsline{toc}{section}{\nameref{sec:engAbstract}}
Temporal-Difference (TD) learning is a standard and very successful reinforcement learning approach, at the core of both algorithms that learn the value of a given policy, as well as algorithms which learn how to improve policies. TD-learning with eligibility traces provides a way to do temporal credit assignment, \ie{} decide which portion of a reward should be assigned to predecessor states that occurred at different previous times, controlled by a parameter $\lambda$. However, tuning this parameter can be time-consuming, and not tuning it can lead to inefficient learning. To improve the sample efficiency of TD-learning, we propose a meta-learning method for adjusting the eligibility trace parameter, in a state-dependent manner. The adaptation is achieved with the help of auxiliary learners that learn distributional information about the update targets online, incurring roughly the same computational complexity per step as the usual value learner. Our approach can be used both in on-policy and off-policy learning. We prove that, under some assumptions, the proposed method improves the overall quality of the update targets, by minimizing the overall target error. This method can be viewed as a plugin which can also be used to assist prediction with function approximation by meta-learning feature (observation)-based $\lambda$ online, or even in the control case to assist policy improvement. Our empirical evaluation demonstrates significant performance improvements, as well as improved robustness of the proposed algorithm to learning rate variation.

\chapter*{Abrégé}
\label{sec:frAbstract}
\addcontentsline{toc}{section}{\nameref{sec:frAbstract}}
L'apprentissage par différence temporelle (TD) est une approche d'apprentissage par renforcement standard et très réussie, au cœur des deux algorithmes qui apprennent la valeur d'une politique donnée, ainsi que des algorithmes qui apprennent à améliorer les politiques. L'apprentissage TD avec des traces d'éligibilité fournit un moyen de faire une attribution de crédit temporelle, \ie{} décide quelle portion d'une récompense doit être affectée aux états prédécesseurs qui se sont produits à différents moments précédents, contrôlé par un paramètre $\lambda$. Cependant, le réglage de ce paramètre peut prendre du temps et ne pas le régler peut conduire à un apprentissage inefficace. Pour améliorer l'efficacité de l'échantillon d'apprentissage TD, nous proposons une méthode de méta-apprentissage pour ajuster le paramètre de trace d'éligibilité, d'une manière dépendante de l'état. L'adaptation est réalisée avec l'aide d'apprenants auxiliaires qui apprennent en ligne les informations de distribution sur les cibles de mise à jour, entraînant à peu près la même complexité de calcul par étape que l'apprenant de valeur habituelle. Notre approche peut être utilisée à la fois dans l'apprentissage sur les politiques et hors politique. Nous prouvons que, sous certaines hypothèses, la méthode proposée améliore la qualité globale des cibles de mise à jour, en minimisant l'erreur cible globale. Cette méthode peut être considérée comme un plugin qui peut également être utilisé pour aider à la prédiction avec l'approximation des fonctions par la fonction de méta-apprentissage (observation) basée sur $\lambda$ en ligne, ou même dans le cas de contrôle pour aider à l'amélioration des politiques. Notre évaluation empirique démontre des améliorations significatives des performances, ainsi qu'une meilleure robustesse de l'algorithme proposé à la variation du taux d'apprentissage.

\chapter*{Acknowledgements}
\label{sec:acknowledgements}
\addcontentsline{toc}{section}{\nameref{sec:acknowledgements}}
I want to thank Prof. Doina Precup and Prof. Xiao-Wen Chang for funding and supervising me during my master's studies. I also want to thank my colleagues and friends for supporting my studies and my life, especially Sitao Luan and Ian Porada, who are not only good collaborators but also cheerful companions. I would also like to express my sincere gratitude to the people that shared their ideas with me and inspired me to go beyond!

\tableofcontents
\listoffigures %
\addcontentsline{toc}{section}{\listfigurename}
\listoftables
\addcontentsline{toc}{section}{\listtablename}

\newpage

\pagenumbering{arabic} % restart page numbers at one, now in arabic style

%% start of mainmatter

\chapter{Introduction}
\label{sec:intro}
%\addcontentsline{toc}{section}{\nameref{sec:intro}}
Temporal-difference learning is an important approach which enables an agent interacting with its environment to learn how to assign credit to different states it encounters and actions it takes.
Eligibility trace-based policy evaluation (prediction) methods, \eg, TD($\lambda$), use geometric sequences, controlled by a ``trace-decay'' parameter $\lambda$, to solve the temporal credit assignment problem. Eligibility traces weight multi-step returns and assemble compound update targets. The sample complexity (speed and accuracy of convergence given the number of samples) is in practice sensitive to the choice of $\lambda$.

To address this, in this thesis, we propose meta-learning as an approach for adapting the learning \textit{state-based} $\lambda$s. First, we propose the methodology of improving sample efficiency by improving the quality of the update targets during temporal difference learning. Then, we derive the method of achieving improvement on the overall update targets for each state. Finally, we propose an  approximate way to implement the method in an online learning setting, with the help of auxiliary learners and trust region-style updates.

The thesis is structured as followed
In Chapter 2, we introduce  the fundamentals of reinforcement learning that are \textit{directly related} to this thesis. The content is fully re-written for the state-based decay and discount settings that we consider in this thesis. The content of the thesis is topologically sorted so that all mentioned methods and settings could be found in this chapter. Then, in Chapter 3, a comprehensive review of the relevant literature, which focuses on the adaptation of trace-based temporal difference learning, is provided. Chapter 4 and 5 contain our main contribution: a meta-learning approach is proposed, discussed and then validated through experiments. Chapter 6 concludes the thesis and discusses avenues for future work.

%\clearpage 

\chapter{Basics of Reinforcement Learning}
\textit{A learning paradigm of learning by trial and error. Interacting with an environment to map situations to actions in a way that some notion of cumulative reward is maximized.}

\section{What is Reinforcement Learning?}
Learning from interaction is one of the essential and fundamental ideas of intelligence and learning theories \cite{sutton2018reinforcement}. Reinforcement Learning (RL) was introduced by A. Harry Klopf as a computational approach that focused on learning by interacting with the environment without explicit supervision.

In the RL setting, there is an \textit{agent} (the algorithm or the method), an environment (essentially a task that we are facing, a probabilistic system). The environment is an ensemble of a reward function (a scalar feedback for the decisions) and state dynamics (transition probabilities of states by actions). According to the state, the agent computes a series of action to be taken and in this way interacts with the environment.

From a ``dataset''-label perspective, we could say that in RL problems, the data samples are dynamically collected by the agents' decisions series. This also means the quality of the ``dataset'' is also determined by the quality of the decisioning processes of the agents. Therefore, RL is, like many problems, a exploration-exploitation tradeoff problem with no static datasets and thus it stands out from the classical machine learning paradigms. The learner is not told which actions to take but instead must discover which actions yield the most reward by trying them. 

%The knowledge that the agent gathers is not from an explicit supervision signal such as in classical supervised machine learning, \eg{} classification with labels as training supervision. Also, RL is different from semi-supervised or unsupervised learning which are typically about learning hidden structures in the collections of unlabeled or partially labeled data.

Another distinct feature of RL is that RL agents learn what to do - how to map situations to actions - all by the interactions after its deployment in the environment, \ie{} it does not need domain specific knowledge. 

% In challenging cases, actions may affect not only the immediate reward, but also the next situation and through that all subsequent rewards. These two characteristics - trial-and-error search and delayed reward - are the two most important distinguishing features of RL \cite{sutton2018reinforcement}.

\section{Basics}
Now, we formally identify the main subelements of a RL system: a policy, a reward signal, a value function and optionally a model of the environment.

A \textit{policy} defines the learning agent’s way of behaving for a given state at a given time. Roughly speaking, a policy is a mapping from perceived states of the environment to actions to be taken when in those states.

A \textit{reward} signal defines the goal of a RL problem. On each timestep, the environment sends to the RL agent a scalar feedback called the \textit{reward}. The agent’s sole objective is to maximize the total reward it receives over the long run. The reward signal thus defines what are the good and bad events for the agent.

A \textit{value function} specifies what is good in the long run, whereas the reward signal indicates what is good in an immediate sense. Roughly speaking, the value of a state is the total amount of reward an agent can expect to accumulate over the future, starting from that state. Whereas rewards determine the immediate, intrinsic desirability of environmental states, values indicate the long-term desirability of states after taking into account the states that are likely to follow and the rewards available in those states. Rewards are in a sense primary, whereas values, as predictions of rewards, are secondary. Without rewards there could be no values, and the only purpose of estimating values is to achieve more reward.

The fourth and final element of some reinforcement learning systems is a model of the environment. This is something that mimics the behavior of the environment, or more generally, that allows inferences to be made about how the environment will behave. For example, given a state and action, the model might predict the resultant next state and next reward. Models are used for planning, by which we mean any way of deciding on a course of action by considering possible future situations before they are actually experienced. Methods for solving reinforcement learning problems that use models and planning are called model-based methods, as opposed to simpler model-free methods that are explicitly trial-and-error learners—viewed as almost the opposite of planning.

\section{Scope of Reinforcement Learning}
RL relies heavily on the concept of \textit{state} — as input to the policy and value function, and as both input to and output from the model. We can think of the state as a signal conveying to the agent some sense of ``how the environment is'' at a particular time. The formal definition of state as we use it here is given by the framework of Markov decision processes.

\subsection{Markov Decision Processes}
Markov Decision Processes (MDPs), a mathematically idealized form of the RL problem for which precise theoretical statements can be made, are a classical formalization of sequential decision making. In MDPs, actions influence not just immediate rewards, which are essentially some feedback from the environment, but also subsequent situations, or states, and through the future rewards. Thus MDPs involve delayed reward and the need to tradeoff immediate and delayed reward.

The learner and decision maker is called the \textit{agent}. The thing it interacts with, comprising everything outside the agent, is called the \textit{environment}. They interact continually, the agent selecting actions and the environment responding to these actions and presenting new situations to the agent. The environment also gives \textit{rewards}, special numerical values which the agent seeks to maximize over time through its choice of actions.

As illustrated in Figure \ref{fig:agent_environment_interaction}, in an MDP, the agent and the environment interact at each of a sequence of discrete timesteps $t\in\{ 0, 1, 2, \dots \}$\footnote{The ideas of the discrete time case can be extended to the continuous-time case, \eg{} \cite{doya2000reinforcement,bertsekas1996neuro}.}. At each timestep $t$, the agent receives some representation of the environment's state, $S_t \in \scriptS{}$ and on that basis selects an action, $A_t \in \scriptA{}$. One timestep later, in part as a consequence of its action, the agent receives a numerical reward, $R_{t+1} \in \scriptR{} \in \doubleR{}$ and finds itself in a new state $S_{t+1}$. In a finite MDP, the state set $\scriptS$, the action set $\scriptA$ and the reward set $\scriptR$ are all finite. In this case, the random variables $R_t$ and $S_t$ have well defined discrete probability distributions that only depend on the preceding state and action. That is, for particular values of these random variables, $s' \in \scriptS$ and $r \in \scriptR$, there is a probability of those values occurring at timestep $t$, given particular preceding state and action:
$$p(s',r|s,a) \equiv \doubleP\{ S_t = s', R_t = r | S_{t-1} = s, A_{t-1} = a \}, \forall s, s' \in \scriptS, \forall r \in \scriptR, \forall a \in \scriptA$$

The function $p: \scriptS \times \scriptR \times \scriptS \times \scriptA \to [0,1]$ defines the \textit{dynamics} of the MDP, and is often recognized as the \textit{transition probability function} or simply \textit{transition function}.

The MDP and the agent together thereby give rise to a \textit{trajectory} like:

$$S_0,A_0,R_1,S_1,A_1,R_2,\dots$$

Note that we have used the uppercase letters to denote the random variables since we have not yet observed the states, the rewards or the actions. Yet if we have already, we would use lowercase to denote their specific instantiation. For example, at timestep $t$, the agent took action $a_t$ based on state $s_t$ and transitioned to the state $s_{t+1}$ while receiving the reward $r_{t+1}$.

The states that the agent start from in a finite MDP can be described using a probability distribution.

\begin{definition}[Initial State Distribution]
For a finite Markov decision process, the distribution $d_0: \scriptS \to [0, 1]$ of the first state $S_0$, from which the agent-environment interactions start, is called the \textit{starting state distribution} or \textit{initial state distribution}.
\end{definition}

For an RL agent, the distribution $d_0$ is part of the unknown environment and can be only learnt through interactions.

\subsubsection{Markov Property}
In an MDP, the probabilities given by $p$ completely characterize the environment's dynamics. That is, the probability of each possible value for $S_t$ and $R_t$ depends only on the immediately preceding state and action $S_{t-1}$ and $A_{t-1}$ not at all on earlier states and actions. This is best viewed a restriction not on the decision process but on the state, which means the state must include information about all aspects of the past agent-environment interaction that make a difference for the future.

The $4$-argument deterministic transition function is actually the most general form of a transition function defined in a MDP. There are also alternate forms of the transition function, which rely on either additional assumptions of the environment or exist as marginalized expectations.

Marginalizing over rewards yields the $3$ argument state-transition probability function:
$$p(s'|s,a) \equiv \doubleP\{ S_t = s' | S_{t-1} = s, A_{t-1} = a \} = \sum_{r \in \scriptR{}}{p(s',r|s,a)}$$
where $p$ is overloaded. However, if we assume that the state transition and the reward are jointly determined, \ie{} a fixed transition from one state to another always generates the same reward, then the $4$-argument transition function collapse into the $3$-argument version $p: \scriptS \times \scriptS \times \scriptA \to \times [0,1]$.

Many other useful expected statistics can be derived from the general $4$-argument $p$ by marginalizing. These include:

Expected rewards for state-action pairs as a $2$-argument function $r: \scriptS \times \scriptA \to \doubleR$:
$$r(s,a) \equiv \doubleE[ R_t | S_{t-1} = s, A_{t-1} = a ] = \sum_{r \in \scriptR{}}{r \sum_{s' \in \scriptS{}}{ p(s',r|s,a) }} $$
Since the only way that the agent could interact with the environment is through the action, there is no way for the agent to optimize the transition and reward by any other means, this $2$-argument expected reward function should be an appropriate choice when the agent tries to model the reward function for decisioning, through agent-environment interactions.

Expected rewards for state-action-next-state triples as a three argument function $r: \scriptS \times \scriptA \times \scriptS \to \doubleR$:

$$r(s,a,s') \equiv \doubleE[ R_t | S_{t-1} = s, A_{t-1} = a, S_{t} = s' ] = \sum_{r \in \scriptR{}}{r \frac{p(s',r|s,a)}{p(s'|s,a)}} $$

where $p(s'|s,a)$ is the $3$-argument transition function we derived earlier. This function can be estimated to predict the reward incurred by some certain transition, which is often used in model-based RL.

\begin{figure*}
\centering
\includegraphics[width=0.65\textwidth]{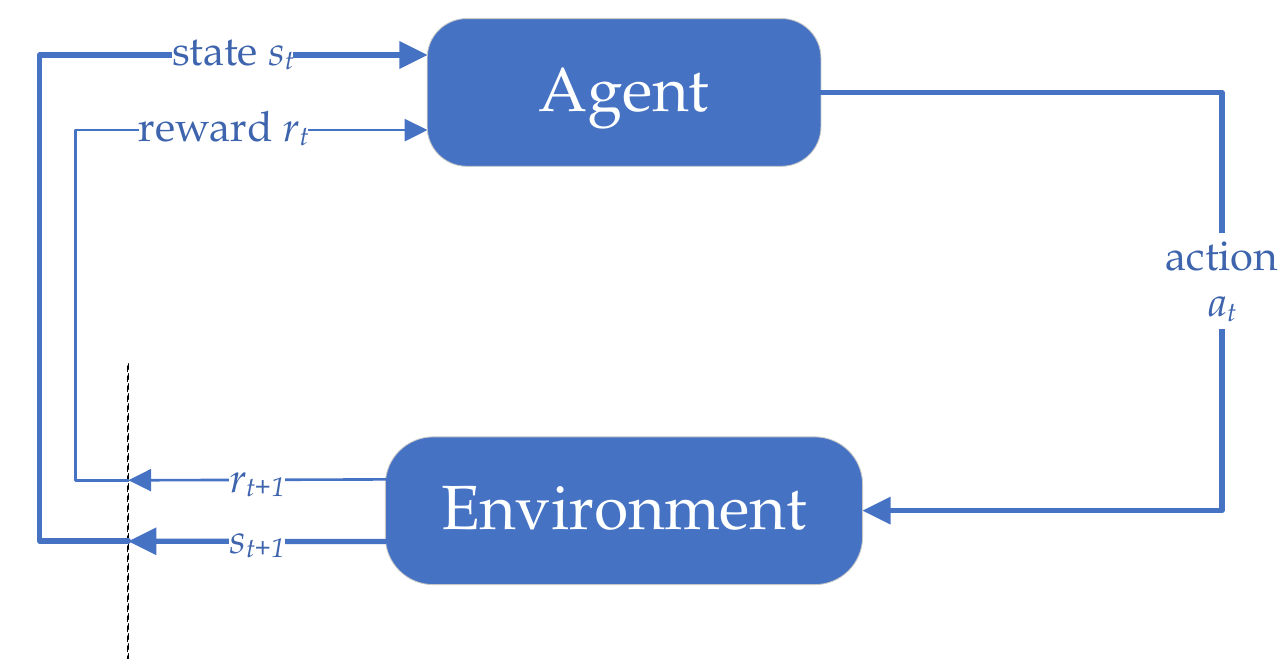}
\caption{Agent-Environment Interaction in a Markov Decision Process}
\label{fig:agent_environment_interaction}
\end{figure*}

\subsubsection{Implicit Pause-able Environment Assumption}
MDP implicitly assumes that the environment only changes when the agent is taking actions, whereas this assumption seems inappropriate in many decision problems and in the cases where the decisioning process takes in-negligible time, \eg{} when the agent is planning with a complex model.

Very recently, the notion of realtime reinforcement learning has been introduced in \cite{ramstedt2019real} to address such problem.

\subsection{Returns \& Episodes}

\subsubsection{Undiscounted Episodic Setting}
In general, RL seeks to maximize the expected return $G_t$, which, in the simplest case, is defined as sum of the rewards:

\begin{equation}
\label{eq:undiscounted_return}
G_t \equiv R_{t+1} + R_{t+2} + \cdots + R_{T}
\end{equation}

where $T$ is a final timestep. The notion of \textit{episodes} is naturally formed as the interaction sequence from the starting timestep $0$ until the terminal time $T$, a random variable that normally varies, for example when playing a game repeatedly.

Each episode ends in a \textit{terminal state}. In this setting, the episodes are assumed to be finite-length, \ie{} $T$ is finite, and independent with each other, \ie{} one episode does not affect the environment dynamics of the next.

In episodic tasks, it is sometimes necessary to distinguish the set of all nonterminal states, denoted $\scriptS$ from the set of all states plus the terminal states $\scriptS^{+}$.

\subsubsection{Discounted Episodic Setting}
In many cases however, the agent-environment interaction does not break naturally into identifiable episodes, but goes on continually without limit. The previous formulation of return is problematic in these \textit{continuing tasks} because not only the final step would be $T = \infty$, as well as that the return, which we seek to optimize, goes easily to infinite.

To fix this, an additional concept of \textit{discounting} is needed. According to this approach, the agent tries to select actions so that the sum of the discounted rewards it receives over the future is maximized. In particular, it chooses to maximize the \textit{expected discounted return}:

\begin{equation}
\label{eq:discounted_return}
G_t \equiv R_{t+1} + \gamma R_{t+2} + \gamma^2 R_{t+3} + \cdots = \sum_{k=0}^{\infty}{\gamma^{k}R_{t+k+1}}
\end{equation}

where $\gamma \in [0, 1]$ is the \textit{discount parameter}, also recognized as the \textit{discount rate}. Note that here the discount parameter is constant throughout the states and the episodes. While other more complicated discount settings are also possible. For example, in this thesis, we will adopt the \textit{per-state discount function}, which can be interpreted as discounting the future rewards by some degree after \textbf{entering} some state:

\begin{equation}
\label{eq:state_discounted_return}
G_t \equiv R_{t+1} + \gamma_{t+1} R_{t+2} + \gamma_{t+1}\gamma_{t+2} R_{t+3} + \cdots = \sum_{k=0}^{\infty}{R_{t+k+1} \cdot \prod_{m=1}^{k}{\gamma_{t+m}}}
\end{equation}

where $\gamma_{t+1} \equiv \gamma(S_{t+1})$, $\gamma: \scriptS{} \to [0, 1]$.

The discount rate determines the present value of the future rewards: a reward received $k$ timesteps in the future is worth only $\prod_{m=1}^{k-1}{\gamma_{t+m}}$ times what it  would be worth if it were received immediately.

If $\gamma: \scriptS{} \to [0, 1)$, the infinite sums in \ref{eq:state_discounted_return} will have finite values as long as the reward sequence $\{ R_{k} \}$ is bounded.

Returns at successive timesteps are recursively related:

% \begin{equation}
% \begin{aligned}
% G_t & \equiv R_{t+1} + \gamma R_{t+2} + \gamma^2 R_{t+3} + \gamma^3 R_{t+4} + \cdots\\
%     & = R_{t+1} + \gamma \left(R_{t+2} + \gamma R_{t+3} + \gamma^2 R_{t+4} + \cdots \right)\\
%     & = R_{t+1} + \gamma G_{t+1}
% \end{aligned}
% \end{equation}

\begin{equation}
\label{eq:recursive_return_general}
\begin{aligned}
G_t & \equiv R_{t+1} + \gamma_{t+1} R_{t+2} + \gamma_{t+1}\gamma_{t+2} R_{t+3}  + \cdots\\
    & = R_{t+1} + \gamma_{t+1} \left(R_{t+2} + \gamma_{t+2} R_{t+3} + \cdots \right)\\
    & = R_{t+1} + \gamma_{t+1} G_{t+1}
\end{aligned}
\end{equation}

This relation is generalizable for all timesteps $t<T$.

The fact that the discount parameters can be set as $1$ gives unified formulation for the episodic tasks as well as the continuing tasks. Ideally, the discount factors should come from the task itself, as its value reflects the goal. However, for complicated tasks with Deep Reinforcement Learning (DRL), which is essentially using artificial neural networks for reinforcement learning, it is generally observed that lowering the discount factor yields significantly more stable performance rather than using $\gamma = 1$, even if the goal tells that there should be no discounting \cite{mnih2015human}. These blur the line how we should see the discount factor, which classically should be seen as some kind of built-in characteristics of the environment yet now a parameter that could be set or learnt for some purposes.

In the following parts of this thesis, we will focus on the episodic setting with state-based discount functions.

\subsection{Policies \& Value Functions}

A \textit{policy} is a function used by an agent to decide what to do given the state it is in. Formally,

\begin{definition}[policy]
In an MDP, a policy $\pi$ is a mapping from states to probabilities of selecting each possible action, \ie{} $\pi: \scriptS{} \times \scriptA{} \to [0, 1]$.
\end{definition}

If the agent is following policy $\pi$ at timestep $t$, then $\pi(a|s)$ is the probability that $A_t = a$ if $S_t = s$. Note that it is also quite common to define state-based action sets, however we will stick loyal to the setting of \cite{sutton2018reinforcement} for this thesis. The policies in RL is essentially stationary decision rules defined for more general Markov chains \cite{puterman2014markov}, where ``stationary'' means that the decision rules are consistent for every possible states.

The \textit{value function} of a state $s$ under a policy $\pi$, denoted $v_{\pi}(s)$, is the expected (discounted) return when starting in $s$ and following $\pi$ thereafter. Formally,

\begin{definition}[state-value function]
In an MDP, the \textit{state-value function} for policy $\pi$ or simply \textit{value function} $v_\pi(s)$, given discount function $\gamma(s)$ and policy $\pi(s)$, is defined as
$$v_{\pi}(s) \equiv \doubleE_{\pi}[ G_t | S_t = s ] = \doubleE_{\pi} \left[ \sum_{k=0}^{\infty}{R_{t+k+1} \cdot \prod_{m=1}^{k}{\gamma_{t+m}}} | S_t = s \right]$$
\end{definition}
where $\doubleE_{\pi}$ denotes the expected value of random variable given that the agent follows policy $\pi$ and the values of the terminal states are defined as $0$.

There is also the state-action value function, which is more useful when searching for policies.

\begin{definition}[state-action-value function]
In an MDP, the \textit{state-action-value for policy $\pi$} $q_\pi(s)$, given discount function $\gamma$ and policy $\pi(s)$, is defined as the expected return starting from $s$, taking the action $a$ and thereafter following $\pi$:
$$q_{\pi}(s, a) \equiv \doubleE_{\pi}[ G_t | S_t = s, A_t = a ] = \doubleE_{\pi} \left[ \sum_{k=0}^{\infty}{R_{t+k+1} \cdot \prod_{m=1}^{k}{\gamma_{t+m}}} | S_t = s, A_t = a \right]$$
\end{definition}

One of the key subroutines of RL is to estimate $v_\pi$ or $q_\pi$ from experience, as $V_\pi$ or $Q_\pi$, which is often recognized as \textit{policy evaluation} or \textit{prediction}.

With the recursive relationship derived in \ref{eq:recursive_return_general}, we can obtain the following equation for the state-value function $v_\pi$, given a policy $\pi$:

\begin{equation}
\label{eq:bellman_v_general}
\begin{aligned}
v_{\pi}(s) & \equiv \doubleE_{\pi}[ G_t | S_t = s ]\\
& = \doubleE_{\pi}[ R_{t+1} + \gamma_{t+1} G_{t+1}]\\
& = \sum_{a}{\pi(a|s) \sum_{s',r}{p(s',r|s,a)\left[ r + \gamma(s') \cdot \doubleE_{\pi}[G_{t+1}|S_{t+1}=s'] \right]}}\\
& = \sum_{a}{\pi(a|s) \sum_{s',r}{p(s',r|s,a)\left[ r + \gamma(s') \cdot v_{\pi}(s')\right]}}
\end{aligned}
\end{equation}

where $\gamma_{t+1} \equiv \gamma(S_{t+1})$ is a conventional abbreviation, which will be frequently used hereafter. The equation is essentially one basic form of the \textit{Bellman equation} for $v_{\pi}$. Serving as the core equation of RL, it expresses the relationship between the value of one state and its successor states.% With this equation and the $4$-argument dynamics function $p$, the method of dynamic programming can be used to exactly compute the value function $v_{\pi}$ or $q_{\pi}$ given any valid policy $\pi$.

\subsubsection{Optimal Policies \& Optimal Value Functions}
Value functions define a partial ordering over policies in an MDP with certain discounting function $\gamma$.

\begin{definition}[partial order of policies]
In an MDP with certain discounting function $\gamma$, a policy $\pi$ is defined to be \textit{better than or equal to} a policy $\pi'$ if its expected return is greater than or equal to that of $\pi'$ for all states, \ie{} $\pi \geq \pi'$ \textit{\textbf{iff}} $\forall s \in \scriptS{}, v_{\pi}(s) \geq v_{\pi'}(s)$.
\end{definition}

There always exists at least one policy that is better than or equal to all other policies, which is identified as the \textit{optimal policy} $\pi_{*}$. More specifically,

\begin{fact}[Existence of Optimal Deterministic Markovian Policies]
Given a finite MDP, there always exist some deterministic policy $\pi_{*}$ that achieves the optimal return, which is independent of the transition history, \ie{} Markovian.
\end{fact}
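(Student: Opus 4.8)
The plan is to prove existence constructively, via the Bellman optimality equation, rather than by an abstract appeal to the partial order over policies. First I would define the \emph{optimal value function} $v_*(s) \equiv \sup_\pi v_\pi(s)$, where the supremum ranges over \emph{all} policies, including history-dependent, randomized, and non-stationary ones. The theorem then amounts to two claims: that this supremum is attained simultaneously at every state by a single policy, and that this policy may be chosen deterministic and Markovian (\ie{} a function of the current state alone).

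The central tool is the \emph{Bellman optimality operator} $T_*$ on value functions, defined by
$$(T_* v)(s) = \max_{a \in \scriptA} \sum_{s', r} p(s', r \mid s, a) \left[ r + \gamma(s') \, v(s') \right].$$
Because the MDP is finite, $\scriptA$ is finite and this maximum is always attained. I would first show $T_*$ is a contraction in the sup-norm: since $\sum_{s', r} p(s', r \mid s, a)\, \gamma(s') \le \max_{s'} \gamma(s') < 1$ under the state-discount assumption $\gamma: \scriptS \to [0, 1)$ of Equation~\ref{eq:state_discounted_return}, one gets $\| T_* v - T_* w \|_\infty \le (\max_s \gamma(s)) \, \| v - w \|_\infty$. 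By the Banach fixed-point theorem, $T_*$ has a unique fixed point $\bar v$ with $\bar v = T_* \bar v$.

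Next I would identify $\bar v$ with $v_*$ through two inequalities. For $v_\pi \le \bar v$ for every policy $\pi$: expanding an arbitrary policy's value one step via the recursion of Equation~\ref{eq:recursive_return_general}, and using that any action-averaged value is dominated by the per-state maximum, an induction on the horizon (followed by a limit justified by the discounting) gives $v_\pi(s) \le \bar v(s)$ for all $s$, hence $v_* \le \bar v$. For the reverse, I would construct the greedy deterministic Markovian policy
$$\pi_*(s) \in \argmax_{a \in \scriptA} \sum_{s', r} p(s', r \mid s, a) \left[ r + \gamma(s') \, \bar v(s') \right],$$
which is well-defined precisely because the argmax over the finite $\scriptA$ is attained. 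Writing the Bellman equation~\ref{eq:bellman_v_general} for $v_{\pi_*}$ reproduces exactly the fixed-point equation $v_{\pi_*} = T_* v_{\pi_*}$; by uniqueness, $v_{\pi_*} = \bar v$. Chaining the two, $v_{\pi_*} = \bar v \ge v_\pi$ for every $\pi$ at every state, so $\pi_*$ attains the supremum and is the desired optimal deterministic Markovian policy.

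The hard part will be the upper-bound direction, namely that \emph{no} history-dependent or randomized policy can beat $\bar v$. The one-step domination inequality must be shown to propagate faithfully through an infinite horizon, which requires the discounting to control the tail so that the horizon limit and the supremum may be exchanged; the finiteness of the MDP together with $\max_s \gamma(s) < 1$ are exactly the hypotheses that license this and cannot simply be dropped. Relaxing them to the merely-proper episodic case (where some $\gamma(s) = 1$ but every policy reaches a terminal state almost surely) would force replacing the clean contraction argument with a stopping-time / properness analysis.
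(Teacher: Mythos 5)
The paper never actually proves this statement: it is recorded as a background \textit{Fact}, with the surrounding text deferring to the classical MDP literature (the stationary decision rules of Puterman cited nearby), so there is no internal proof to compare yours against. Your constructive route --- Bellman optimality operator, sup-norm contraction, Banach fixed point, then a greedy deterministic Markovian policy --- is the standard textbook argument and is essentially sound, including the part you correctly identify as hard: showing that no history-dependent, randomized policy can exceed $\bar v$, via one-step domination by the per-state maximum, induction on the horizon, and a tail bound supplied by the discounting.

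One step, however, is stated loosely enough to be wrong as written. You claim that writing the Bellman equation for $v_{\pi_*}$ ``reproduces exactly'' the fixed-point equation $v_{\pi_*} = T_* v_{\pi_*}$. It does not: the Bellman equation gives $v_{\pi_*} = T_{\pi_*} v_{\pi_*}$, and equating $T_{\pi_*} v_{\pi_*}$ with $T_* v_{\pi_*}$ would require $\pi_*$ to be greedy with respect to $v_{\pi_*}$ --- but you defined it greedy with respect to $\bar v$, so the claim is circular until you already know the two functions agree. The repair is standard and one line: since $\pi_*$ attains the maximum at $\bar v$, one has $T_{\pi_*} \bar v = T_* \bar v = \bar v$, so $\bar v$ is a fixed point of the contraction $T_{\pi_*}$, whose unique fixed point is $v_{\pi_*}$; hence $v_{\pi_*} = \bar v$. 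Beyond that, keep the scope caveat you already flag explicit: the Fact is asserted for any finite MDP in a thesis that works in the episodic setting where $\gamma(s) = 1$ is allowed, whereas your contraction argument needs $\max_s \gamma(s) < 1$; covering the undiscounted-but-terminating case genuinely requires the properness/stopping-time analysis you mention, which is consistent with the paper's own existence fact for $v_\pi$ (requiring either $\gamma(\cdot) < 1$ or guaranteed termination).
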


This fact is the reason why we could have confidently focus on the world of Markovian policies, since an optimal policy always exist inside. This also justifies the value-based methods that uses $\epsilon$-greedy as policies since deterministic policies can achieve optimal return. There can be more than one optimal policies but we denote all of them by $\pi_{*}$. The optimal policies share the same state-value function, which is called the \textit{optimal state-value function} $v_{*}$, which is defined as:

$$v_{*}(s) \equiv \max_{\pi}{v_{\pi} (s)}, \forall s \in \scriptS{}$$

where the $\max$ operator is defined upon the policy partial orders.

Optimal policies also share the same optimal action-value function $q_{*}$, which is defined as

$$q_{*}(s,a) \equiv \max_{\pi}{q_{\pi} (s, a)}, \forall s \in \scriptS{}, \forall a \in \scriptA{}$$

The function gives the expected return for taking action $a$ in state $s$ and thereafter following an optimal policy. Thus,

$$q_{*}(s,a) = \doubleE \left[ R_{t+1} + \gamma(S_{t+1}) \cdot v_{*}(S_{t+1}) | S_t = s, A_t = a \right]$$

There are also Bellman equations for optimal policies, which are recognized as \textit{Bellman optimality equations}.

The methods for finding better policies, are recognized as \textit{policy improvement} methods. A representative family of such methods, which are called policy gradient methods, will be introduced in Section \ref{sec:policy_gradient}.

\section{Dynamic Programming}\label{sec:dynamic_programming}

Dynamic Programming (DP) refers to a family of algorithms that can be used for solving the value function given a policy, or finding better, even the optimal policies given a perfect model of the environment in the form of an MDP.

Though guaranteed with optimality, DP algorithms are limited for practical use for the their need of a perfect environment model as well as their expensive computational cost. However, they are still widely used to calculate the ground truth values of relatively small environments for the analyses of new RL methods.

\subsection{DP for Policy Evaluation}
First, we consider the method of computing the state-value function $v_{\pi}$ for an arbitrary policy $\pi$, which is called \textit{policy evaluation} or \textit{prediction}.

\begin{fact}[Existence \& Uniqueness of State-Value Function]
The existence and the uniqueness of state-value function $v_{\pi}$ are guaranteed as long as either $\gamma(\cdot) < 1$ or termination will be reached from any state following $\pi$.
\end{fact}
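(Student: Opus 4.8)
The plan is to characterize $v_{\pi}$ as the unique fixed point of the Bellman operator and invoke the contraction mapping theorem. Define the operator $T_{\pi}$ acting on functions $V:\scriptS{}\to\doubleR{}$ by
$$(T_{\pi}V)(s) = \sum_{a}{\pi(a|s)\sum_{s',r}{p(s',r|s,a)\left[ r + \gamma(s')\,V(s')\right]}},$$
which, after collecting the expected immediate reward $r_{\pi}(s) = \sum_{a}\pi(a|s)\sum_{s',r}p(s',r|s,a)\,r$ and the discounted transition kernel $P_{\pi}(s,s') = \sum_{a}\pi(a|s)\,p(s'|s,a)\,\gamma(s')$, can be written compactly as $T_{\pi}V = r_{\pi} + P_{\pi}V$. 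By the recursion \ref{eq:recursive_return_general} and the derivation in \ref{eq:bellman_v_general}, any well-defined $v_{\pi}$ is a fixed point of $T_{\pi}$, and conversely a fixed point solves the Bellman equation. Hence existence and uniqueness of $v_{\pi}$ reduce to showing that $T_{\pi}$ admits exactly one fixed point, equivalently that $(I - P_{\pi})$ is invertible.

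For the first hypothesis, suppose $\gamma(s) < 1$ for all $s$; since $\scriptS{}$ is finite, $\gamma_{\max} = \max_{s}\gamma(s) < 1$. I would show $T_{\pi}$ is a $\gamma_{\max}$-contraction in the supremum norm: for any $V,W$,
$$|(T_{\pi}V - T_{\pi}W)(s)| = \left|\sum_{s'}P_{\pi}(s,s')\,(V(s')-W(s'))\right| \le \gamma_{\max}\,\|V-W\|_{\infty},$$
using that the row sums satisfy $\sum_{s'}P_{\pi}(s,s') \le \gamma_{\max}$ because $\sum_{s'}\sum_{a}\pi(a|s)\,p(s'|s,a) = 1$. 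Taking the supremum over $s$ gives $\|T_{\pi}V - T_{\pi}W\|_{\infty} \le \gamma_{\max}\|V-W\|_{\infty}$, so by the Banach fixed-point theorem $T_{\pi}$ has a unique fixed point. Finiteness of the defining expectation follows from the same bound, since the return \ref{eq:state_discounted_return} is dominated by $\sum_{k}\gamma_{\max}^{k}\sup|R|$, which converges when rewards are bounded.

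For the second hypothesis, $\gamma$ may equal $1$ at some states, so $T_{\pi}$ need no longer be a sup-norm contraction and the direct argument fails; this is the main obstacle. Here I would restrict attention to the non-terminal states and treat $P_{\pi}$ as the substochastic matrix of an absorbing Markov chain whose absorbing class is the set of terminal states. The assumption that termination is reached from every state following $\pi$ means every non-terminal state is transient, so the probability of remaining among the non-terminal states after $n$ steps tends to $0$; concretely, since $\scriptS{}$ is finite there exist an integer $N$ and a constant $c<1$ with $\max_{s}\sum_{s'}(P_{\pi}^{N})(s,s') \le c$, whence $P_{\pi}^{n}\to 0$. This forces the spectral radius of $P_{\pi}$ strictly below $1$, so the Neumann series $\sum_{n=0}^{\infty}P_{\pi}^{n}$ converges to $(I-P_{\pi})^{-1}$ and the Bellman equation has the unique solution $v_{\pi} = (I-P_{\pi})^{-1}r_{\pi}$, with the corresponding expectation finite because the expected time to absorption is finite. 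The crux is precisely this case, where contraction in the uniform norm is unavailable and one must instead exploit the absorbing structure to bound the powers of $P_{\pi}$ and recover invertibility of $I-P_{\pi}$.
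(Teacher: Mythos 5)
Your proposal is correct, and it is worth noting that the paper never actually proves this Fact: it is stated without proof, with the intended justification only gestured at through the surrounding machinery (the matrix form of the Bellman equation, the Matrix Splitting fact with its spectral-radius condition, and the Banach--Caccioppoli theorem together with the remark that the Bellman operator is a contraction). Your first case is precisely that intended route: writing $T_{\pi}V = r_{\pi} + P_{\pi}V$ with the discounted kernel, bounding the row sums by $\gamma_{\max} < 1$, and invoking the Banach fixed-point theorem, which is the argument the paper's Definition of the Bellman operator and Theorem \ref{thm:fixed_point} are set up to deliver. Your second case covers what the paper leaves entirely implicit, and it is the genuinely harder half: when $\gamma$ may equal $1$ at some states the sup-norm contraction is unavailable, and you correctly switch to the absorbing-chain argument --- almost-sure termination plus finiteness of $\scriptS$ gives a uniform $N$-step absorption bound $\max_{s}\sum_{s'}(P_{\pi}^{N})(s,s') \le c < 1$, hence $P_{\pi}^{n}\to 0$, hence $\rho(P_{\pi})<1$ and the Neumann series inverts $I-P_{\pi}$. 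This also dovetails cleanly with the paper's Matrix Splitting fact with $M=I$ and $N=P_{\pi}\Gamma$: the splitting iteration converges exactly when $\rho(P_{\pi}\Gamma)<1$, which is what your absorption bound establishes, so your argument simultaneously justifies the paper's iterative policy evaluation in the undiscounted episodic setting. The only points deserving explicit mention are the ones you already flag in passing: existence of $v_{\pi}$ as an expectation needs bounded rewards (automatic here since $\scriptR$ is finite) and, in the episodic case, finite expected absorption time, both of which your bounds supply.
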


From the Bellman equation (\ref{eq:bellman_v_general}), we have

$$v_{\pi}(s) = \sum_{a}{\pi(a|s) \sum_{s',r}{p(s',r|s,a)\left[ r + \gamma(s') \cdot v_{\pi}(s')\right]}}$$

Suppose for a specific MDP, we have a fixed indexing method for all the possible states. When the environment dynamics ($4$-argument $p$) is known, the Bellman equation gives a system of $|\scriptS|$ linear equations with $|\scriptS|$ unknowns, \ie{} solvable. It is desirable to transform it to matrix form and then compute the true values. After some algebraic manipulation, we end up in the following system.

$$\bm{v}_{\pi} = \bm{r}_{\pi} + P_{\pi} \bm{v}_{\pi} \Gamma$$

where $\bm{r}_{\pi}$ is a $|\scriptS{}| \times 1$ vector in which $\bm{r}_{\pi}[i] = \sum_{a}{\pi(a|s_i) \sum_{s_j, r}{r \cdot p(s_j, r | s_i, a)}}$, $P$ is a $|\scriptS{}| \times |\scriptS{}|$ matrix in which $P_{\pi}[i,j] = \sum_{a}{\pi(a|s_i) p(s_j, r | s_i, a)}$ and $\Gamma$ is a diagonal matrix whose $\Gamma(i,i) \equiv \gamma(s_i)$. With the system, the rest is just to solve it, whose complexity is no lower than $\scriptO({|\scriptS{}|}^3)$. The method is elaborated in Algorithm \ref{alg:PEM}.

\begin{algorithm*}[htbp]
\caption{Policy Evaluation in Matrix Form}
\label{alg:PEM}
\KwIn{$p(s', r | s, a)$ (transition probability function), $\pi$ (policy to be evaluated), $\gamma$ (discount function)}
\KwOut{$\bm{v}_{\pi}$ (the state value vector for policy $\pi$, where the components correspond to indexed states in $\scriptS{}^{+}$)}

Form policy-conditioned transition matrix $P_{\pi}$, where $P_{\pi}[i,j] = \sum_{a}{\pi(a|s_i) p(s_j, r | s_i, a)}$\\
Form policy-conditioned reward vector $\bm{r}_{\pi}$, where $\bm{r}_{\pi}[i] = \sum_{a}{\pi(a|s_i) \sum_{s_j, r}{r \cdot p(s_j, r | s_i, a)}}$\\
Form a diagonal matrix $\Gamma$, where $\Gamma(i,i) \equiv \gamma(s_i)$\\
$\bm{v}_{\pi} = (I - P_{\pi} \Gamma) \backslash \bm{r}_{\pi}$ \textcolor{darkgreen}{// or $\bm{v}_{\pi} = (I - P_{\pi} \Gamma)^{-1} \bm{r}_{\pi}$}\\
\end{algorithm*}

The $\scriptO({|\scriptS|}^3)$ complexity is a nightmare for problems with large state sets. Thus, it is desirable to change this method into methods with lower computational complexity. The tool for this conversion is the matrix splitting methods.

Matrix splitting is a method for converting the problems of solving linear equations into iterative methods with presumably lower computational complexity.

\begin{fact}[Matrix Splitting]
To solve linear equation $A\bm{x} = \bm{b}$, where $A$ is non-singular, we can split matrix $A$ as $A = M - N$, where $M$ and $N$ are greater or equal to $0$ element-wise. Then, the formula $$\bm{x}_{t+1} = M^{-1} \bm{b} + M^{-1} N \bm{x}_t$$
leads to the solution if the spectral radius $\rho(M^{-1}N)$, \ie{} the largest absolute value of the eigenvalues of $M^{-1}N$, is less than $1$ and the spectral radius is also the convergence rate of the iteration. 
\end{fact}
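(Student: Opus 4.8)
The plan is to recognize this as the classical convergence theorem for stationary iterative solvers and to reduce the whole claim to a single spectral fact: that the powers of a matrix vanish if and only if its spectral radius is below one. First I would rewrite the update in fixed-point form by setting the \emph{iteration matrix} $G \equiv M^{-1} N$ and the constant vector $\bm{c} \equiv M^{-1} \bm{b}$, so that the scheme becomes simply $\bm{x}_{t+1} = G \bm{x}_t + \bm{c}$. (This presumes $M$ is invertible; that, rather than the element-wise nonnegativity of $M$ and $N$, is what the convergence argument actually uses — the nonnegativity belongs to the separate theory of \emph{regular splittings}.) I would then check that the true solution $\bm{x}^{*}$ is a fixed point of this map: since $A\bm{x}^{*} = (M - N)\bm{x}^{*} = \bm{b}$, we get $M\bm{x}^{*} = N\bm{x}^{*} + \bm{b}$, and multiplying by $M^{-1}$ gives $\bm{x}^{*} = G\bm{x}^{*} + \bm{c}$.

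Next I would pass to the error. Defining $\bm{e}_t \equiv \bm{x}_t - \bm{x}^{*}$ and subtracting the fixed-point identity from the update kills the constant term and yields the clean linear recursion $\bm{e}_{t+1} = G\bm{e}_t$, so that by induction $\bm{e}_t = G^{t}\bm{e}_0$. Convergence of the iterate to $\bm{x}^{*}$ from an \emph{arbitrary} starting point $\bm{x}_0$ is therefore exactly equivalent to $G^{t}\bm{e}_0 \to \bm{0}$ for every $\bm{e}_0 \in \doubleR^{|\scriptS{}|}$, \ie{} to $G^{t} \to 0$ as $t \to \infty$.

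The crux is then the spectral fact $G^{t} \to 0 \iff \rho(G) < 1$. For the forward direction I would argue that any eigenvalue $\mu$ with eigenvector $\bm{u}$ gives $G^{t}\bm{u} = \mu^{t}\bm{u}$, which can vanish only if $|\mu| < 1$, forcing $\rho(G) < 1$. For the converse I would invoke the Jordan canonical form $G = P J P^{-1}$: each Jordan block $J_i$ with eigenvalue $\mu_i$ has entries of $J_i^{t}$ of the form $\binom{t}{k}\mu_i^{t-k}$, and since $|\mu_i| \le \rho(G) < 1$ the polynomial factor is dominated by the geometric decay, so $J^{t} \to 0$ and hence $G^{t} \to 0$. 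Finally, for the rate claim I would appeal to Gelfand's spectral-radius formula $\rho(G) = \lim_{t \to \infty}\|G^{t}\|^{1/t}$, which shows that the asymptotic root-convergence factor of $\|\bm{e}_t\|$ is precisely $\rho(G)$, matching the asserted convergence rate.

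The main obstacle is the converse half of this spectral fact (together with Gelfand's formula); everything else — the fixed-point identity and the error recursion — is routine algebra. The delicate point is that $\rho(G) < 1$ does not immediately bound $\|G\|$ in the standard norms, so the polynomially growing off-diagonal entries of the Jordan blocks must be controlled; handling these via the Jordan form, or equivalently by constructing an operator norm with $\|G\| \le \rho(G) + \epsilon < 1$, is where the real work lies.
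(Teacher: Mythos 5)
Your proposal is correct, and it is the classical argument: fixed-point identity, error recursion $\bm{e}_{t+1} = M^{-1}N\,\bm{e}_t$, the spectral characterization $G^{t}\to 0 \iff \rho(G)<1$ via Jordan form, and Gelfand's formula for the asymptotic rate. There is nothing to compare it against in the text itself: the thesis states this as a \emph{Fact}, i.e.\ imported background from numerical linear algebra, and never proves it — the only proof route it gestures at is a different and more specialized one, namely showing the Bellman operator is a contraction and invoking the Banach fixed-point theorem (Theorem \ref{thm:fixed_point}), which for the policy-evaluation instance $M=I$, $N=P_\pi\Gamma$ gives convergence under a norm bound $\|P_\pi\Gamma\|<1$ but does not deliver the sharp spectral-radius criterion or the rate statement you establish. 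Two of your side remarks are also worth keeping: first, your observation that the element-wise nonnegativity of $M$ and $N$ plays no role in the convergence argument is accurate — that hypothesis belongs to the theory of regular splittings (where it is used to relate $\rho(M^{-1}N)<1$ to properties of $A$), and what the iteration actually requires is invertibility of $M$, which the stated fact never assumes explicitly; second, your point that the Jordan-form step (or equivalently constructing a norm with $\|G\|\le\rho(G)+\epsilon$) is the genuinely nontrivial part is exactly right, since $\rho(G)<1$ alone does not bound any standard operator norm of $G$ below one.
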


With this, when we set $M=I$ and $N=P_{\pi} \Gamma$ and $\bm{b} = \bm{r}_\pi$, we achieve the \textit{iterative policy evaluation method}. It is simple and powerful, turning Bellman equation into an iterative formula achieves the convergence to the true values. One may also prove the convergence of iterative policy evaluation using Banach's fixed point theorem, by showing that the Bellman operator is a contraction.

\begin{definition}[Bellman Operator]
Given an MDP with its dynamics $p$, a policy $\pi$ and discount function $\gamma$, the \textit{Bellman operator} $\scriptB_{\pi}: \doubleR^{|\scriptS|} \to \doubleR^{|\scriptS|}$ is defined by
\begin{equation}\label{eq:bellman_operator}
(\scriptB_{\pi}v)(s) \equiv \sum_{a}{\pi(a|s) \sum_{s', r}{p(s',r|s,a)[r + \gamma(s') v(s')]}}
\end{equation}
Or equivalently in matrix form,
\begin{equation}\label{eq:bellman_operator_matrix}
\scriptB_{\pi} \bm{V} \equiv \bm{r}_{\pi} + P_{\pi} \Gamma \bm{V}
\end{equation}
where $\bm{r}_{\pi}$ is a $|\scriptS{}| \times 1$ vector in which $\bm{r}_{\pi}[i] = \sum_{a}{\pi(a|s_i) \sum_{s_j, r}{r \cdot p(s_j, r | s_i, a)}}$, $P$ is a $|\scriptS{}| \times |\scriptS{}|$ matrix in which $P_{\pi}[i,j] = \sum_{a}{\pi(a|s_i) p(s_j, r | s_i, a)}$ and $\Gamma$ is a diagonal matrix whose $\Gamma(i,i) \equiv \gamma(s_i)$.
\end{definition}

\begin{definition}[Contraction]
Let $\langle X, d \rangle$ be a complete metric space. Then a map $\scriptT: X \to X$ is called a \textbf{contraction mapping} on $X$ if there exists $q \in [0, 1)$ \st{}
$$\forall x, y \in X, d(\scriptT(x), \scriptT(y)) \leq q \cdot d(x, y)$$
\end{definition}

\begin{theorem}[Banach–Caccioppoli]\label{thm:fixed_point}
Let $\langle X, d \rangle$ be a non-empty complete metric space with a contraction mapping $\scriptT: X \to X$, then $\scriptT$ admits a \textbf{unique} fixed point $x^* \in X$, \ie{} $\scriptT(x^*) = x^*$. Moreover, $x^*$ can be found as follows: start with an arbitrary element $x_0 \in X$ and define a sequence $\{ x_n \}$ by $x_n = \scriptT(x_{n-1})$ for $n \geq 1$, then $x_n \to x^*$. 
\end{theorem}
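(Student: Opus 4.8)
The plan is to prove existence by a constructive argument built on the iterate sequence $\{x_n\}$, and to establish uniqueness separately by a short contraction argument. First I would fix an arbitrary starting point $x_0 \in X$ and define the iterates $x_n = \scriptT(x_{n-1})$ as in the statement. The workhorse estimate comes from repeatedly applying the contraction inequality: since $d(x_{n+1}, x_n) = d(\scriptT(x_n), \scriptT(x_{n-1})) \leq q \cdot d(x_n, x_{n-1})$, a straightforward induction on $n$ yields $d(x_{n+1}, x_n) \leq q^n d(x_1, x_0)$.

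Next I would use this to show $\{x_n\}$ is Cauchy. For any $m > n$, the triangle inequality together with the previous bound gives a telescoping sum dominated by the geometric tail $\sum_{k \geq n} q^k d(x_1, x_0) = \frac{q^n}{1-q} d(x_1, x_0)$. Because $q \in [0,1)$, this bound is independent of $m$ and vanishes as $n \to \infty$, so the sequence is Cauchy. Invoking completeness of $\langle X, d\rangle$, the sequence converges to some limit $x^* \in X$; this is exactly the object the second claim of the theorem produces, and it is our candidate fixed point.

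To confirm $\scriptT(x^*) = x^*$, I would observe that any contraction is Lipschitz, hence continuous, so $\scriptT$ commutes with limits; passing to the limit in $x_{n+1} = \scriptT(x_n)$ gives $x^* = \scriptT(x^*)$. Finally, for uniqueness, suppose $x^*$ and $y^*$ are both fixed points. Then $d(x^*, y^*) = d(\scriptT(x^*), \scriptT(y^*)) \leq q\, d(x^*, y^*)$, and since $q < 1$ this forces $d(x^*, y^*) = 0$, hence $x^* = y^*$.

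The main obstacle is the Cauchy step: the entire argument hinges on converting the pointwise contraction estimate $d(x_{n+1}, x_n) \leq q^n d(x_1, x_0)$ into a bound on $d(x_m, x_n)$ that is uniform in $m$, which is precisely what lets completeness deliver a limit. Everything downstream — continuity for the fixed-point property, and the two-line contraction argument for uniqueness — is comparatively routine once the limit is in hand.
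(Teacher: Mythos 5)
Your proof is correct and complete: the geometric bound $d(x_{n+1},x_n) \leq q^n d(x_1,x_0)$, the telescoping/Cauchy argument, continuity of the contraction to pass to the limit, and the two-line uniqueness argument together constitute the standard proof of the Banach--Caccioppoli theorem. The paper states this theorem as a classical result without providing any proof (it is only invoked to justify convergence of iterative policy evaluation via the contraction property of the Bellman operator), so there is nothing to compare against; your argument is exactly the canonical one and fills that gap correctly.
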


We can prove that the Bellman operator, which is essentially turning the Bellman equation into an iterative formula, on the \textit{estimated value function} or simply \textit{value estimate} is a contraction. The unique fixed point must be the true value because that is when the Bellman equation holds. The vanilla version is provided in Algorithm \ref{alg:IPEM}.

% \begin{algorithm*}[htbp]
% \caption{Iterative Policy Evaluation}
% \label{alg:IPE}
% \KwIn{$p(s', r | s, a)$ (transition probability function), $\pi$ (policy to be evaluated), $\gamma$ (discount factor), $\theta$ (accuracy threshold)}
% \KwOut{$V(s), \forall s \in \scriptS{}^{+}$ (state values for policy $\pi$)}
% Initialize $V(s), \forall s \in \scriptS{}^{+}$ to be $0$\\
% $\Delta = \infty$\\
% \While{$\Delta \geq \theta$}{
%     $\Delta = 0$\\
%     \For{each $s \in \scriptS{}^{+}$}{
%         $v = V(s)$\\
%         $V(s) = \sum_{a}{\pi(a|s) \sum_{s',r}{p(s',r|s,a)[r+\gamma(s') \cdot V(s')]}}$\\
%         $\Delta = max(\Delta, |{v - V(s)}|)$\\
%     }
% }
% \end{algorithm*}

% And it also has a matrix version, as provided in Algorithm \ref{alg:IPEM}.

\begin{algorithm*}[htbp]
\caption{Iterative Policy Evaluation (Matrix)}
\label{alg:IPEM}
\KwIn{$p(s', r | s, a)$ (transition probability function), $\pi$ (policy to be evaluated), $\gamma$ (discount factor), $\theta$ (accuracy threshold)}
\KwOut{$\bm{v}_{\pi}$ (the state value vector for policy $\pi$, where the components correspond to indexed states in $\scriptS{}^{+}$)}

Initialize $\bm{v}_{|\scriptS{}|\times1} = \bm{0}$\\
Form policy-conditioned transition matrix $P_{\pi}$, where $P_{\pi}[i,j] = \sum_{a}{\pi(a|s_i) p(s_j, r | s_i, a)}$\\
Form policy-conditioned reward vector $\bm{r}_{\pi}$, where $\bm{r}_{\pi}[i] = \sum_{a}{\pi(a|s_i) \sum_{s_j, r}{r \cdot p(s_j, r | s_i, a)}}$\\
$\Delta = \infty$\\
Form a diagonal matrix $\Gamma$, where $\Gamma(i,i) \equiv \gamma(s_i)$\\
\While{$\Delta \geq \theta$}{
    $\bm{v}' = \bm{r}_{\pi} + P_{\pi} \Gamma \bm{v}$\\
    $\Delta = \| \bm{v}' - \bm{v} \|_{\infty}$\\
    $\bm{v} = \bm{v}'$
}
\end{algorithm*}

\subsection{DP for State Distribution}
Given a policy, the expected frequency of states that the agent meet in the environment, which is recognized as the \textit{state frequency},  or the \textit{on-policy distribution}, can also be computed using DP. Note that, given a fixed policy, this frequency is neither dependent on discount function nor the reward function. 

In RL literature, the state on-policy distribution is often assumed to be equal to the ``stationary distribution''\footnote{This ``stationary distribution'' is actually not a stationary distribution, as the terminal states of MDP eliminates the ergodicity. The ``stationary distribution'' $\bm{d}$ in the episodic setting means only the solution of $\bm{d}^T P_{\pi} = \bm{d}^T$.} of the Markov chain induced by the MDP and the policy $\pi$. However, this assumption is inappropriate because they are never the same in the episodic setting.

One can use an iterative method to solve the true on-policy distribution, given initial state distribution $d_0$ and dynamics $P_{\pi}$. The idea is to calculate the averaged number of state visits for agents within their lifetime, as shown in Algorithm \ref{alg:IDPSF}.

\begin{algorithm*}[htbp]
\caption{Iterative DP for State Frequencies in Matrix Form}
\label{alg:IDPSF}
\KwIn{$p(s', r | s, a)$ (transition probability function), $\pi$ (policy to be evaluated), $\theta$ (accuracy threshold), $\bm{d}_0$ (initial distribution of states, represented with an indexed vector whose components are the discrete probabilities)}
\KwOut{$\bm{d}_{\pi}$ (expected frequencies of states under policy $\pi$, where the components correspond to indexed states in $\scriptS{}^{+}$)}

Initialize $\bm{d}_{\pi} = \bm{d}_0$\\
Form policy-conditioned transition matrix $P_{\pi}$, where $P_{\pi}[i,j] = \sum_{a}{\pi(a|s_i) p(s_j, r | s_i, a)}$\\
$\bm{d}' = \bm{d}_0$\\
\For{$i \in \{ 1, \dots, |\scriptS| \}$}{
    \If{$s_i$ is terminal}{
        $\bm{d}'(i) = 0$ \textcolor{darkgreen}{// terminal states can only be visited once}\\
    }
}
\While{$\|\bm{d}'\|_1 \geq \theta$}{
    $\bm{d}' = P_{\pi}^T \bm{d}'$\\
    $\bm{d}_{\pi} = \bm{d}_{\pi} + \bm{d}'$\\
    \For{$i \in \{ 1, \dots, |\scriptS| \}$}{
        \If{$s_i$ is terminal}{
            $\bm{d}'(i) = 0$
        }
    }
}
$\bm{d}_{\pi} = \bm{d}_{\pi} / \|\bm{d}_{\pi}\|_1$ \textcolor{darkgreen}{//normalize}\\
\end{algorithm*}

To achieve a more compact form of this algorithm, we need the following proposition:
\begin{theorem}
In an MDP, the expected state frequency of an agent during one lifetime (from the beginning of an episode to the end) is the same as the expected state frequency of an agent that is redeployed to the MDP after termination.

Also, the state-frequency $\bm{d}_{\pi}$ satisfies:
\begin{equation}
\bm{d}_{\pi}^T \tilde{P}_{\pi} = \bm{d}_{\pi}^T
\end{equation}
where $\tilde{P}_{\pi}$ is the matrix that replaces the rows corresponding to the terminal states in $P_{\pi}$ with $\bm{d}_0^T$.
\end{theorem}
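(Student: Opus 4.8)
The plan is to reduce both assertions to a single flow-conservation (balance) equation for the expected per-lifetime visit counts, and then to tie the redeployment frequency to the stationary distribution of $\tilde{P}_{\pi}$ via uniqueness. First I would introduce the \emph{unnormalized} expected visit vector $\bm{f}$, where $\bm{f}[i]$ is the expected number of times state $s_i$ is visited in a single episode started from $\bm{d}_0$ and run under $\pi$ until termination. Writing $\hat{P}$ for the matrix obtained from $P_{\pi}$ by zeroing every row indexed by a terminal state (so that mass reaching a terminal state is no longer propagated, exactly as in the zeroing step of Algorithm~\ref{alg:IDPSF}), a first-step decomposition of the expected inflow into each state yields the balance equation $\bm{f}^T = \bm{d}_0^T + \bm{f}^T \hat{P}$, i.e. the expected number of visits equals the initial mass placed there plus the mass arriving from nonterminal predecessors. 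Equivalently $\bm{f}^T(I - \hat{P}) = \bm{d}_0^T$, which identifies $\bm{f}$ with the vector accumulated by the loop of Algorithm~\ref{alg:IDPSF}, so that the normalized frequency is $\bm{d}_{\pi} = \bm{f}/\|\bm{f}\|_1$.

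Next I would prove the stationarity equation algebraically. Let $\bm{1}_T$ denote the indicator column vector of the terminal states. Since $\tilde{P}_{\pi}$ and $\hat{P}$ coincide on every nonterminal row and differ on terminal rows by $\bm{d}_0^T$ versus the zero row, we may write $\tilde{P}_{\pi} = \hat{P} + \bm{1}_T \bm{d}_0^T$. Then $\bm{f}^T \tilde{P}_{\pi} = \bm{f}^T \hat{P} + (\bm{f}^T \bm{1}_T)\,\bm{d}_0^T$. Substituting the balance equation in the form $\bm{f}^T \hat{P} = \bm{f}^T - \bm{d}_0^T$ and using the fact that the expected number of terminal-state visits per episode is exactly one, i.e. $\bm{f}^T \bm{1}_T = 1$, collapses the right-hand side to $\bm{f}^T$. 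Dividing by $\|\bm{f}\|_1$, and noting that $\tilde{P}_{\pi}$ is row-stochastic (nonterminal rows sum to one as transition probabilities, and the replaced terminal rows sum to one because $\bm{d}_0$ is a distribution), gives $\bm{d}_{\pi}^T \tilde{P}_{\pi} = \bm{d}_{\pi}^T$, which is the claimed equation.

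For the equivalence of the two frequencies I would regard the redeployed agent as a regenerative process whose regeneration epochs are the restarts at $\bm{d}_0$, and apply a renewal-reward argument: the long-run fraction of time spent in $s_i$ equals the expected visits to $s_i$ per episode divided by the expected episode length $\|\bm{f}\|_1$, which is exactly $\bm{d}_{\pi}[i]$. Because $\tilde{P}_{\pi}$ is irreducible and aperiodic on the states reachable from $\mathrm{supp}(\bm{d}_0)$, its stationary distribution is unique; hence the long-run (redeployment) frequency, the unique solution of $\bm{\mu}^T \tilde{P}_{\pi} = \bm{\mu}^T$, must coincide with the per-lifetime frequency $\bm{d}_{\pi}$ produced above. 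This establishes the first assertion simultaneously with the second.

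I expect the main obstacle to be the justification of $\bm{f}^T \bm{1}_T = 1$ together with the finiteness of $\bm{f}$: both rest on termination being reached almost surely from every state with finite expected episode length, which is the standing episodic assumption and should be stated explicitly. A secondary technical point is guaranteeing uniqueness of the stationary distribution of $\tilde{P}_{\pi}$, which requires the restarting chain to be irreducible on the recurrent (reachable) class; restricting attention to states reachable from $\mathrm{supp}(\bm{d}_0)$, or assuming $\bm{d}_0$ has full support, removes this ambiguity.
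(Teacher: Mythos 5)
Your proof is correct, and it is considerably more rigorous than the one the paper actually gives. The paper's entire proof is two sentences of regeneration intuition: redeployment happens according to $\bm{d}_0$, so terminated agents simply replay the one-life frequency pattern in each of their following lives; no algebra for the stationarity equation appears at all. Your renewal-reward argument for the equivalence of the one-life and redeployment frequencies is the rigorous form of exactly that intuition, so on the first claim the two proofs share the same core idea. Where you genuinely depart from the paper is on the second claim: the rank-one decomposition $\tilde{P}_{\pi} = \hat{P} + \bm{1}_T \bm{d}_0^T$, combined with the balance equation $\bm{f}^T = \bm{d}_0^T + \bm{f}^T \hat{P}$ and the observation $\bm{f}^T \bm{1}_T = 1$, yields a clean algebraic derivation of $\bm{d}_{\pi}^T \tilde{P}_{\pi} = \bm{d}_{\pi}^T$ that the paper omits entirely. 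Your route also buys two things the paper's does not: it makes explicit the assumptions left implicit there (almost-sure termination with finite expected episode length, so that $\bm{f}$ is finite and each episode visits exactly one terminal state exactly once; irreducibility of the restarting chain on the states reachable from $\mathrm{supp}(\bm{d}_0)$, which follows since every reachable state reaches termination and terminal states restart at $\bm{d}_0$), and it identifies $\bm{f}$ with the vector accumulated by Algorithm \ref{alg:IDPSF}, thereby justifying that algorithm as a byproduct. One minor remark: aperiodicity is not needed anywhere — uniqueness of the stationary distribution of an irreducible, positive recurrent chain holds regardless of periodicity, and the time-average renewal-reward limit does not require it either; only convergence of the $n$-step distributions would.
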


\begin{proof}
Redeploying a terminated agent back to the MDP is upon the initial state distribution. Thus the portion of terminated agents will follow exactly the one-life state frequencies in each of their following lives.
\end{proof}

With this we have a compact matrix-form algorithm and can spot the distribution mismatch more easily. Having spotted the problem, in this thesis, we stick to the more accurate definition of on-policy distribution.

Note that to use this augmented $\tilde{P}_{\pi}$ for policy evaluation, we need to replace the core operation $\bm{V} = \bm{r}_\pi + P_\pi \Gamma \bm{V}$ with $\bm{V} = \bm{r}_\pi + \tilde{P}_\pi \tilde{\Gamma} \bm{V}$, where $\tilde{\Gamma}$ satisfies that the discount for terminal states are $0$\footnote{We think it is more appropriate to call $P_\pi$ a ``value'' transition matrix, and $\tilde{P}_\pi$ the ``state'' transition matrix.}, as suggested in \cite{white2016unifying}.

Also, in the linear case for projected Bellman operator, which we will discuss in future sections, the contraction of policy evaluation heavily relies on the assumption that the ``stationary distribution'' is the on-policy distribution, which is critical for a required lemma\footnote{$\| P_\pi \bm{z} \|_{\bm{d}} \leq \| \bm{z} \|_{\bm{d}}$ if $\bm{d}^T P_{\pi} = \bm{d}^T$}. We will try to re-prove it with the correct setting\footnote{$\| P_\pi \bm{z} \|_{\bm{d}} \leq \| \bm{z} \|_{\bm{d}}$ if $\bm{d}^T \tilde{P}_{\pi} = \bm{d}^T$}. Please refer to \cite{bertsekas1995dynamic} theorem 6.3.1 for more details.

\subsection{DP for Policy Improvement}
DP can also be used to find better policies, which ultimately leads to the optimal policies. Finding better policies requires \textit{policy iteration}, which is essentially alternating policy evaluation and policy improvement. The related details will not be covered since they are too distantly related to this thesis.

\section{Methods without Perfect Models}
DP can solve the value function perfectly and improve the policy only when the perfect model of the environment is given. When not, other methods are needed. Learning from actual experience is striking because it requires no prior knowledge of the environment’s dynamics, yet can still attain optimal behavior.

\subsection{Monte-Carlo Methods for Episodic Tasks}
For policy evaluation, the simplest strategy would be to repeatedly utilize the policy in the environment and observe its average performance. Such strategy leads us to the the simplest policy evaluation method named \textit{Monte-Carlo} or simply \textit{MC}, whose effectiveness is backed by the law of large numbers.

Monte Carlo methods are ways of solving the RL problem based on averaging sample returns. MC is well-defined for episodic setting.

\begin{algorithm*}[htbp]
\caption{Episodic First-visit Monte-Carlo Prediction}
\label{alg:MC}
\KwIn{$\pi$ (policy to be evaluated), $\gamma$ (discount function), $N$ (maximum number of episodes)}
\KwOut{$V(s), \forall s \in \scriptS{}^{+}$ (state values for policy $\pi$)}
Initialize $V(s)$ arbitrarily, $\forall s \in \scriptS$\\
$\text{Returns}(s) = \text{an emptylist}$, $\forall s \in \scriptS$\\
\For{$n \in \{ 1, \dots, N \}$}{
    generate an episode following $\pi$: $S_0, A_0, R_1, S_1, A_1, R_2, \dots, S_{T-1}, A_{T-1}, R_{T}$\\
    $G = 0$\\
    \For{$t \in \{ T-1, T-2, \dots, 0 \}$}{
        $G = \gamma{S_{t+1}} G + R_{t+1}$\\
        \If{$S_t \notin \{ S_0, \dots, S_{t-1} \} $}{ \textcolor{darkgreen}{// visit every state only once, prefer the longer sum if many}\\
            Append $G$ to $\text{Returns}(S_t)$\\
        }
    }
}
\For{$s \in \scriptS{}$}{
    $V(s) = \text{mean}(\text{Returns}(s))$
}
\end{algorithm*}

The state-action values can also be predicted using a very similar method. Policy improvement can also be achieved using generalized policy improvement. The details are omitted.

\subsection{Temporal Difference Learning}
Temporal Difference (TD) learning is the most central idea and methodology of RL, which is a combination of MC and DP. Like MC, TD methods can learn directly from raw experience without a model of the environment’s dynamics. Like DP, TD methods \textit{bootstrap}: they update estimates based in part on other learned estimates, without waiting for a final outcome.

Whereas MC must wait until the end of the episode to determine the increment to $V(S_t)$ (only when $G_t$ is known), TD methods need to wait only until the next timestep. At time $t+1$ they
immediately make a useful update using the observed reward $R_{t+1}$ and the estimate $V(S_{t+1})$. The simplest TD method makes the update
\begin{equation}
\label{eq:TD_1}
V(S_t) = V(S_t) + \alpha \left[ R_{t+1} + \gamma(S_{t+1}) V(S_{t+1}) - V(S_{t}) \right]
\end{equation}
immediately on transition to $S_{t+1}$ and receiving $R_{t+1}$. The update rule \ref{eq:TD_1} is called the \textit{$1$-step TD update}, where we recognize $R_{t+1} + \gamma(S_{t+1}) V(S_{t+1}) - V(S_{t})$ as the \textit{($1$-step) TD error} and $R_{t+1} + \gamma(S_{t+1}) V(S_{t+1})$ as the \textit{update target}. Every $1$-step TD update can be understood as: walk towards the update target $R_{t+1} + \gamma(S_{t+1}) V(S_{t+1})$ from the current (estimated) value $V(S_{t})$ with a step length of $\alpha \left[ R_{t+1} + \gamma(S_{t+1}) V(S_{t+1}) - V(S_{t}) \right]$ (decreasing the distance by ratio $\alpha$). Note that the update target $R_{t+1} + \gamma(S_{t+1}) V(S_{t+1})$ is also a random variable. The $1$-step update yields the simplest TD method, which is named TD($0$) and presented in Algorithm \ref{alg:TD0}.

\begin{algorithm*}[htbp]
\caption{Tabular $1$-step TD Policy Evaluation (TD(0))}
\label{alg:TD0}
\KwIn{$\pi$ (policy to be evaluated), $\gamma$ (discount function), $\alpha \in (0, 1]$ (learning rate), $N$ (maximum number of episodes)}
\KwOut{$V(s), \forall s \in \scriptS{}^{+}$ (state values for policy $\pi$)}
Initialize $V(s) = 0$, $\forall s \in \scriptS$\\

\For{$n \in \{1, \dots, N\}$}{
    Initialize $S$\\
    \While{$S$ is not terminal}{
        $A = \text{action given by} \pi(\cdot|s)$\\
        Take action $A$, observe $R, S'$\\
        $V(S_t) = V(S_t) + \alpha \left[ R_{t+1} + \gamma(S_{t+1}) V(S_{t+1}) - V(S_{t}) \right]$\\
        $S = S'$
        $G = \gamma{S_{t+1}} G + R_{t+1}$
    }
}
\end{algorithm*}

Because TD($0$) bases its update in part on an existing estimate, we say that it is a \textit{bootstrapping} method, like DP.

\begin{fact}
Under the episodic setting, given an MDP and a policy $\pi$, either discounted or not, TD($0$) achieves convergence to $v_\pi$ asymptotically.
\end{fact}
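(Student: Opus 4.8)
The plan is to recognize tabular TD($0$) as an instance of asynchronous stochastic approximation (Robbins--Monro) whose mean field is governed by the Bellman operator $\scriptB_{\pi}$, and then to invoke a contraction-based convergence theorem for such schemes. First I would rewrite the update \ref{eq:TD_1} so that its random target splits into a conditional expectation plus zero-mean noise. Conditioned on $S_t = s$ and the current estimate, the expected target is
$$\doubleE_{\pi}[R_{t+1} + \gamma(S_{t+1}) V(S_{t+1}) \mid S_t = s] = (\scriptB_{\pi}V)(s),$$
so defining the sampling noise $w_t(s) \equiv R_{t+1} + \gamma(S_{t+1}) V(S_{t+1}) - (\scriptB_{\pi}V)(s)$ lets me write the update as $V(s) \leftarrow V(s) + \alpha[(\scriptB_{\pi}V)(s) - V(s) + w_t(s)]$, a noisy fixed-point iteration of $\scriptB_{\pi}$ that touches only the visited coordinate $s$.

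Next I would pin down the iteration's target. By the Fact on existence and uniqueness of the state-value function, under either $\gamma(\cdot) < 1$ or guaranteed termination the operator $\scriptB_{\pi}$ is a contraction in the max norm, so by the Banach--Caccioppoli theorem (Theorem \ref{thm:fixed_point}) it has the unique fixed point $v_{\pi}$. Writing the error $\Delta_t(s) \equiv V_t(s) - v_{\pi}(s)$ and subtracting $v_{\pi}(s) = (\scriptB_{\pi}v_{\pi})(s)$ from the update, the per-coordinate recursion becomes $\Delta_{t+1}(s) = (1 - \alpha_t(s))\Delta_t(s) + \alpha_t(s) F_t(s)$, where the conditional mean of $F_t$ equals $(\scriptB_{\pi}V_t)(s) - (\scriptB_{\pi}v_{\pi})(s)$ and is therefore bounded in magnitude by $q\,\|\Delta_t\|_{\infty}$ for the contraction factor $q < 1$. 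This matches exactly the hypothesis of the standard stochastic-approximation lemma (Jaakkola--Jordan--Singh / Tsitsiklis) that guarantees $\Delta_t \to 0$ almost surely.

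To apply that lemma I would check its remaining hypotheses in the present episodic setting. The step sizes must satisfy the Robbins--Monro conditions $\sum_t \alpha_t(s) = \infty$ and $\sum_t \alpha_t(s)^2 < \infty$ for every $s$; I would assume a schedule meeting these. Each state must be updated infinitely often, which holds because over infinitely many episodes every state reachable under $\pi$ is visited infinitely often. Finally the noise variance must be bounded: since the rewards are bounded, the discounts lie in $[0,1]$, and the value estimates stay bounded (again by $\gamma(\cdot) < 1$ or guaranteed termination), $\doubleE[w_t(s)^2 \mid \mathcal{F}_t]$ is uniformly bounded.

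The hard part will be the asynchronous, single-coordinate nature of the updates: unlike synchronous value iteration, TD($0$) refreshes only the current state per step, so a naive Banach-iteration argument does not directly apply. The structural fact that rescues the argument is that $\scriptB_{\pi}$ contracts in the \emph{max} norm rather than merely in some state-frequency-weighted norm; this is what lets the coordinatewise recursion above close, since each $\Delta_{t+1}(s)$ is controlled by $\|\Delta_t\|_{\infty}$ regardless of which coordinates were recently updated. Establishing (or citing) the asynchronous stochastic-approximation theorem under this max-norm contraction is therefore the technical core, after which the conclusion $V \to v_{\pi}$ follows.
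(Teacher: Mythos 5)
The paper itself never proves this statement: it is presented as a \emph{Fact}, i.e.\ a known result imported from the literature, so there is no in-paper argument to compare yours against. Your proposal is the standard route by which this fact is actually established (Jaakkola--Jordan--Singh / Tsitsiklis style asynchronous stochastic approximation), and the overall skeleton --- splitting the target into $(\scriptB_{\pi}V)(s)$ plus conditionally zero-mean noise, subtracting the fixed point, and closing a coordinatewise recursion via a sup-norm contraction --- is the right one. Two caveats, one cosmetic and one genuine. The cosmetic one: the paper's Algorithm for TD($0$) uses a \emph{constant} step size $\alpha$, whereas your argument requires a Robbins--Monro schedule per state; you flag this assumption explicitly, which is fine, but it means you are proving a slightly idealized version of the stated fact.

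The genuine gap is in the undiscounted half of the statement. The Fact explicitly covers the episodic setting ``either discounted or not,'' and you claim that under guaranteed termination $\scriptB_{\pi}$ is a contraction in the max norm, calling this the structural fact that ``rescues the argument.'' That claim is false in general for $\gamma \equiv 1$: consider a deterministic two-state chain where state $1$ transitions to state $2$ and state $2$ terminates; then $|(\scriptB_{\pi}V)(1) - (\scriptB_{\pi}V')(1)| = |V(2) - V'(2)|$, which can equal $\|V - V'\|_{\infty}$, so the operator is merely nonexpansive, not a contraction, in the unweighted sup norm. Banach--Caccioppoli therefore does not apply as you invoke it, and the coordinatewise bound $|\doubleE[F_t(s)]| \leq q\,\|\Delta_t\|_{\infty}$ with $q<1$ fails. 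The standard repair is to use the proper-policy machinery: for an episodic MDP in which termination is reached with probability one under $\pi$, the operator $\scriptB_{\pi}$ is a contraction with respect to a \emph{weighted} sup norm $\|\cdot\|_{\xi}$ (equivalently, some finite power $\scriptB_{\pi}^{m}$ is an unweighted sup-norm contraction), and the asynchronous stochastic-approximation theorem goes through verbatim with that norm. Your argument for the discounted case $\gamma(\cdot) < 1$ is sound as written; the undiscounted case needs this substitution before the conclusion follows.
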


\subsection{Multi-Step TD}
Besides the $1$-step TD target, the update target could be many things as long as the convergence to the true value can be guaranteed. In this subsection, we introduce $n$-step TD methods that generalize both TD and MC methods so that one can shift from one to the other smoothly as needed to meet the demands of a particular task. $n$-step methods span a spectrum with MC methods at one end and $1$-step TD methods at the other.

\begin{definition}[$n$-step return]
The $n$-step return is defined as:
\begin{equation}
\label{eq:n_step_return}
G_{t:t+n} = R_{t+1} + \gamma(S_{t+1})R_{t+2} + \cdots + \prod_{k=1}^{n}{\gamma(S_{t+k})} V_{t+n-1}(S_{t+n}), \forall n \geq 1, \forall 0 \leq t \leq T-n
\end{equation}
\end{definition}

The $n$-step return serves as the update target for an \textit{$n$-step TD update}.

\begin{algorithm*}[htbp]
\caption{Tabular $n$-step TD Policy Evaluation (TD($n$))}
\label{alg:TDn}
\KwIn{$\pi$ (policy to be evaluated), $\gamma$ (discount function), $\alpha$ (learning rate), $M$ (maximum number of episodes), $n$ (step parameter)}
\KwOut{$V(s), \forall s \in \scriptS{}^{+}$ (state values for policy $\pi$)}
Initialize $V(s) = 0$, $\forall s \in \scriptS$\\

\For{$m \in \{1, \dots, M\}$}{
    Initialize and store non-terminal $S_0$\\
    $T = \infty$\\
    \For{$t \in \{0, 1, 2, \dots\}$}{
        \If{$t < T$}{
            Take action according to $\pi(\cdot|s)$, observe and store $R_{t+1}, S_{t+1}$\\
            \If{$S_{t+1}$ is terminal}{
                $T = t + 1$\\
            }
        }
        $\tau = t - n + 1$\\
        \If{$\tau \geq 0$}{
            $G = \sum_{i=\tau+1}^{min(\tau+n, T)}{\left(\prod_{k=1}^{i}{\gamma(S_{k})}\right) \cdot R_i}$\\
            \If{$\tau + n < T$}{
                $G = G + \prod_{k=1}^{n}{\gamma(S_{\tau + k})} V(S_{\tau + n})$\\
            }
            $V(S_{\tau}) = V(S_{\tau}) + \alpha [ G - V(S_{\tau}) ] $\\
        }
        \If{$\tau = T - 1$}{
            \textbf{break}
        }
    }
}
\end{algorithm*}

The $n$-step return uses the value function $V_{t+n-1}$ to correct for the missing rewards beyond $R_{t+n}$. The error reduction property of $n$-step returns lead to its convergence under appropriate technical conditions \cite{sutton2018reinforcement}. 

We notice that, when $n$ is set to be the timestep difference between the current timestep and the timestep for the end of the episode, the $n$-step return becomes the MC return. And when $n=1$, the method collapses to TD($0$). This is to say that $n$-step returns, as update targets, generalizes the TD and MC and yields all the shades between them.

\section{Off-Policy Learning Using Importance Sampling}
How can an agent estimate the values of one policy when acting upon another? Let us call the policy to learn about the \textit{target policy}, and the policy used to generate behavior the \textit{behavior policy}. In this case we say that learning is from data ``off'' the \textit{target policy}, and the overall process is termed off-policy learning.

\subsection{Importance Sampling}
Importance sampling is a technique to use a sample of examples from a different distribution to estimate the expectation of some target distribution. It requires the knowledge to explicitly compute the probability of each sample under the two distributions. Let the target distribution be $\pi$ and the distribution that generated the sample be $b$, we have the following.

\begin{definition}[Importance Sampling Estimator]
Suppose that $\pi$ and $b$ are probability density (mass) functions that satisfy $b(x)=0 \implies \pi(x)=0$, \ie{} $\pi$ is absolutely continuous \wrt{} $b$, we define the importance sampling estimator $\hat{\mu}_{IS}$ of $\doubleE[f(x)]$ as:
\begin{equation}\label{eq:is_estimator}
\hat{\mu}_{IS} \equiv \frac{1}{n} \sum_{i=1}^{n}{f(x_i)\rho(x_i)}
\end{equation}
where $\rho(x_i) \equiv \frac{\pi(x_i)}{b(x_i)}, x_i \sim b$ is recognized as an \textit{importance weight function} and its value is recognized as an \textit{importance sampling ratio}.
\end{definition}

The importance sampling estimator has some properties that we need to know.

\begin{theorem}
Let $\mu \equiv \doubleE_{\pi}[f(x)]$,
$$\doubleE_{b}[\hat{\mu}_{IS}] = \mu, {Var}_{b}[\mu_{IS}] = \int{\frac{(f(x)\pi(x) - \mu b(x))^2}{b(x)} dx}$$
\end{theorem}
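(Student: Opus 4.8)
The plan is to reduce both claims to the single change-of-measure identity that underlies importance sampling: for any integrable $g$, the fact that $x \sim b$ gives $\doubleE_{b}[g(x)\rho(x)] = \int g(x)\frac{\pi(x)}{b(x)}\,b(x)\,dx = \int g(x)\pi(x)\,dx = \doubleE_{\pi}[g(x)]$. The absolute continuity assumption $b(x)=0 \implies \pi(x)=0$ is exactly what makes this legitimate: wherever $b(x)=0$ the ratio $\rho$ is undefined, but $\pi(x)=0$ there as well, so that region contributes nothing and every integral involving $1/b(x)$ is well-defined. I would state this identity first and invoke it twice.

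For unbiasedness, I would apply linearity of expectation to $\hat{\mu}_{IS} = \frac{1}{n}\sum_{i=1}^{n} f(x_i)\rho(x_i)$ and use that the $x_i$ are i.i.d.\ draws from $b$. Each summand then satisfies $\doubleE_{b}[f(x_i)\rho(x_i)] = \doubleE_{\pi}[f(x)] = \mu$ by the identity above with $g=f$, so the $n$ identical terms average to $\mu$, yielding $\doubleE_{b}[\hat{\mu}_{IS}] = \mu$. For the variance, I would work with the single-draw estimator (the stated formula carries no $1/n$ factor, so it is the per-sample variance) and write $\mathrm{Var}_{b}[\mu_{IS}] = \doubleE_{b}\!\left[(f(x)\rho(x))^2\right] - \mu^2$. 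Applying the identity with $g = f^2\rho$ gives $\doubleE_{b}\!\left[(f(x)\rho(x))^2\right] = \int \frac{f(x)^2\pi(x)^2}{b(x)}\,dx$, so the target reduces to showing the claimed integral equals $\int \frac{f(x)^2\pi(x)^2}{b(x)}\,dx - \mu^2$.

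The remaining step is purely algebraic: expand the claimed integrand $(f(x)\pi(x) - \mu b(x))^2/b(x)$ as $f(x)^2\pi(x)^2/b(x) - 2\mu f(x)\pi(x) + \mu^2 b(x)$, then integrate term by term using $\int f(x)\pi(x)\,dx = \mu$ and $\int b(x)\,dx = 1$; the cross term gives $-2\mu^2$ and the last term gives $+\mu^2$, reproducing $\int f^2\pi^2/b - \mu^2$. I do not expect any genuine obstacle here, since there is no deep argument to carry out. The only points requiring care are bookkeeping: confirming that absolute continuity keeps every integral finite despite the $b(x)$ in the denominator, and being explicit that the stated variance is the per-sample quantity (the full $n$-sample estimator divides it by $n$ under independence), so that the appearance of the formula without a $1/n$ is not a typographical inconsistency.
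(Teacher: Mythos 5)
Your proposal is correct and is precisely the ``straightforward algebra'' the paper invokes without writing out: the change-of-measure identity gives unbiasedness, and expanding $(f\pi-\mu b)^2/b$ term by term recovers $\int f^2\pi^2/b\,dx - \mu^2$, the per-sample variance. Your explicit note that the stated formula carries no $1/n$ (so it is the single-draw variance, with the $n$-sample estimator's variance obtained by dividing by $n$) is a sensible clarification of the paper's slightly loose notation ${Var}_b[\mu_{IS}]$, not a deviation from its intended argument.
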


The proof is straightforward algebra. The unbiasedness shows that importance sampling ratios can be used to estimate the statistics of data even if they are generated using from different sources. However, the variance will bring trouble, if $\pi$ and $b$ are different.

\subsection{Importance Sampling based Off-Policy Learning}
Now we plug the theory of importance sampling into RL. Let the target policy be $\pi$ and the behavior policy be $b$. Given a starting state $S_t$, the probability of the subsequent state-action trajectory, $A_t, S_{t+1}, \dots, S_T$, occurring under $\pi$ is:
\begin{equation}
\nonumber
\begin{aligned}
& \doubleP\{ A_t, S_{t+1}, \dots, S_T |S_t, A_{t: T-1} \sim \pi \}\\
& = \pi(A_t | S_t) p(S_{t+1} | S_t, A_t) \pi(A_{t+1} | S_{t+1}) \cdots p(S_T | S_{T-1}, A_{T-1})
& = \prod_{k = t}^{T-1}{\pi(A_k | S_k) p(S_{k+1} | S_k, A_k)}
\end{aligned}
\end{equation}
where $p$ is the $3$-argument transition function. The relative probability (importance sampling ratio) of the trajectory under the policies $\pi$ and $b$ is:

$$\rho_{t: T-1} \equiv \frac{\prod_{k = t}^{T-1}{\pi(A_k | S_k) p(S_{k+1} | S_k, A_k)}}{\prod_{k = t}^{T-1}{b(A_k | S_k) p(S_{k+1} | S_k, A_k)}} = \prod_{k = t}^{T-1}{\frac{\pi(A_k | S_k)}{b(A_k | S_k)}}$$

The canceling of the terms show that importance sampling ratio of trajectories does not depend on the MDP's dynamics. With this we have

$$\doubleE_b[\rho_{t:T-1}G_t | S_t = s] = v_{\pi}(s)$$

This means that Monte-Carlo method can learn the target values as long as we have the computational access to the importance sampling ratios. We will not cover the details of off-policy MC.

\subsection{Per-Decision Importance Sampling}
The off-policy MC estimator, the unbiased one with high-variance, of return is:

\begin{equation}
\begin{aligned}
\rho_{t:T-1}G_t & = \rho_{t:T-1}\left( R_{t+1} + \gamma_{t+1}R_{t+2} + \cdots + \prod_{k=t+1}^{T-1}{\gamma_k}R_T \right)\\
& = \rho_{t:T-1} R_{t+1} + \gamma_{t+1} \rho_{t:T-1} R_{t+2} + \cdots + \prod_{k=t+1}^{T-1}{\gamma_k} \rho_{t:T-1} R_T
\end{aligned}
\end{equation}

Each sub-term is a product of a random reward and a random importance sampling ratio. For example, the first sub-term is:

$$\rho_{t:T-1}R_{t+1} = \frac{\pi(A_t | S_t)}{b(A_t | S_t)}\frac{\pi(A_{t+1} | S_{t+1})}{b(A_{t+1} | S_{t+1})}\frac{\pi(A_{t+2} | S_{t+2})}{b(A_{t+2} | S_{t+2})}\cdots\frac{\pi(A_{T-1} | S_{T-1})}{b(A_{T-1} | S_{T-1})}R_{t+1}$$

For this term, it is intuitive to see that only $\frac{\pi(A_t | S_t)}{b(A_t | S_t)}$ and $R_{t+1}$ are related, as one can easily show:

$$\doubleE_{b}\left[\frac{\pi(A_k | S_k)}{b(A_k | S_k)}\right] \equiv \sum_{a \sim b}{\frac{\pi(a | S_k)}{b(a | S_k)}} = \sum_{a}{b(a | S_k) \cdot \frac{\pi(a | S_k)}{b(a | S_k)}} = \sum_{a}{\pi(a | S_k)} = 1$$

Thus,

$$\doubleE_b[\rho_{t:T-1} R_{t+1}] = \doubleE_b[\rho_{t:t} R_{t+1}]$$

and also

$$\doubleE_b[\rho_{t:T-1} R_{t+k}] = \doubleE_b[\rho_{t:t+k-1} R_{t+k}]$$

With this we can get another unbiased return estimator, which is recognized as the \textit{per-decision} importance sampling estimator $\tilde{G}_t$ for return:

$$\doubleE_b[\rho_{t:T-1}G_t] = \doubleE_b[\tilde{G}_t]$$

and

\begin{equation}
\begin{aligned}
\tilde{G}_t &\equiv \rho_{t:t} R_{t+1} + \gamma_{t+1} \rho_{t:t+1} R_{t+2} + \cdots + \prod_{k=t+1}^{T-1}{\gamma_k} \rho_{t:T-1} R_T\\
& = \rho_{t} R_{t+1} + \gamma_{t+1} \rho_{t}\rho_{t+1} R_{t+2} + \cdots + \prod_{k=t+1}^{T-1}{\gamma_k} \cdot \prod_{j=t}^{T-1}{\rho_{j}} \cdot R_T &&\text{($\rho_{j} \equiv \frac{\pi(A_j | S_j)}{b(A_j | S_j)}$)}\\
& = \rho_{t} \left( R_{t+1} + \gamma_{t+1} \rho_{t+1} \left( R_{t+2} + \gamma_{t+2} \rho_{t+2} \left( \cdots \right) \right)  \right)
\end{aligned}
\end{equation}

Per-decision importance sampling enables off-policy bootstrapping. The change of the algorithm is just to multiply the learning rates of the TD updates by the per-decision importance sampling ratio. %TODO: to incorporate the idea at least somewhere

\section{Function Approximation}
The notion of state that we have discussed before are recognized as \textit{tabular}, in a sense that we can list a table for all the states and their corresponding properties. An agent, at a particular time, can only be in exactly one state, and these states do not influence each other. However, this setting is problematic for the cases in which the state space is too large to be discretized as tables, \eg{} when the state space is continuous. In this case, which is recognized as the \textit{function approximation case}, we have to use the approximate value function, which is not represented as a table but as a parameterized functional form with some corresponding weight vector $\bm{w}$.

% TODO: The power of generalization!

In the tabular case a continuous measure of prediction quality was not necessary because the learned value function could converge to the true value function exactly. Moreover, the learned values at each state were decoupled  —an update at one state affected no other. But with function approximation, an update at one state affects many others, and it is not possible to get the values of all states exactly correct. By assumption we have far more states than weights, so making one state’s estimate more accurate invariably means making others’ less accurate. However, this could also mean that making one state more accurate will make some similar states also more accurate. This is often recognized as the dilemma of generalization: on one hand it introduces the forgetting problem; On the other hand it could significantly accelerate learning for its updates to the similar states. 

Since there could be many states, generally the states' importance are weighted using the state frequency distribution $d_{\pi}$. With this we obtain a natural objective function, the state-value error, which is essentially the weighted mean squared value error between the true value and the value estimate.

\begin{definition}[State-Value Error]
Given an MDP, for a state $s$, let its true value under target policy $\pi$ be $v_{\pi}(s)$. Given an estimated value $V_{\pi}(s)$ of the state $s$, the \textit{state value error} of the estimate $V_{\pi}(s)$ is defined as:
\begin{equation}
\nonumber
J(s) \equiv 1/2 \cdot (V_{\pi}(s) - v_{\pi}(s))^2
\end{equation}
\end{definition}

The state value error of a value estimate is its squared distance to the true value. Weighting the state value error by their state frequency $d_{\pi}$ yields the following:

\begin{definition}[Overall Value Error]
Given an MDP and a particular fixed indexing of its states, let its true state-values under target policy $\pi$ be $\bm{v}_{\pi}$, where each element of the vector corresponds to the true value of an indexed state and an value estimate. Given an estimate $\bm{V}_{\pi}$ of all the states, the \textbf{overall value error} of the estimate $\bm{V}_{\pi}$ is defined as:
\begin{equation}
\nonumber
J(\bm{V}_{\pi}) \equiv 1/2 \cdot {\| D_{\pi}^{1/2} \cdot (\bm{V}_{\pi} - \bm{v}_{\pi}) \|}_2^2
\end{equation}
where $D_{\pi}$ is the diagonalized state frequencies under $\pi$, \ie{}
\begin{equation}
\label{eq:Ddef}
D_{\pi} \equiv diag(d_\pi(s_1), d_\pi(s_2), \cdots, d_\pi(s_{|S|}))
\end{equation}
\end{definition}

This criterion can be used with any form of value estimator, either tabular or with function approximators. The weights $D_{\pi}$ favor the states that will be met with higher frequency. The overall value error is often used to evaluate the performance of policy evaluation \cite{singh1997analytical}. When a perfect model of the environment MDP is known, the $\bm{v}_{\pi}$ and $D_{\pi}$ can be exactly solved using DP, as discussed in Section \ref{sec:dynamic_programming}. Thus, DP-solvable MDPs are the first-choices of testing the policy evaluation algorithms.

An ideal goal in terms of policy evaluation would be to find a global optimum, a weight vector $\bm{w}^{*}$ for which $J(\bm{V}_{\pi}(\bm{w}^{*})) \leq J(\bm{V}_{\pi}(\bm{w}))$ for all possible $\bm{w}$. Reaching this goal is sometimes possible for simple function approximators such as linear function approximators, which are to be introduced later, but is rarely possible for complex function approximators, \eg{} artificial neural networks.

Usually, we will use differentiable value estimate functions $V(s; \bm{w})$ parameterized by a weight vector $\bm{w}$ to enable stochastic gradient-descent methods for approaching the update targets.% The weight vector $\bm{w}$ is a column vector with a fixed number of real valued components, $\bm{w} \equiv [w_1, \dots, w_d]^T$, and the value estimate $V(s, \bm{w})$ is a differentiable function of $\bm{w}$ for all $s \in \scriptS{}$.

$\bm{w}$ will be updated at each of a series of discrete timesteps as before, $t \in \{1, 2, \dots\}$, trying to minimize the state-value error. Stochastic gradient-descent (SGD) methods do this by adjusting the weight vector after each example by a small amount in the direction that would most reduce the error on that example:

\begin{equation}
\label{eq:fa_true_gradient}
\bm{w}_{t+1} = \bm{w}_{t} - \frac{1}{2} \alpha \nabla \left[v_{\pi}(S_t) - V_{\pi}(S_t, \bm{w}_t)\right]^2 = \bm{w}_{t} + \alpha \left[v_{\pi}(S_t) - V_{\pi}(S_t, \bm{w}_t)\right] \nabla V_{\pi}(S_t, \bm{w}_t)
\end{equation}
where $\alpha$ is the learning rate, a positive step-size parameter.

Gradient descent methods are called ``stochastic'' when the update is done on only a single example, selected stochastically. Over many steps, the overall effect is to minimize an average performance measure such as the overall value error.

Obviously, we cannot use (\ref{eq:fa_true_gradient}) to do update because the true value $v_{\pi}(S_t)$ is unknown. Thus, we must replace the update target $v_{\pi}(S_t)$ with an estimate $U_{t}$. If $U_{t}$ is unbiased, \ie{} $\doubleE[U_t | S_t = s] = v_{\pi}(S_t), \forall t$, then $\bm{w}_t$ is guaranteed to converge to a local optimum under the usual SGD conditions with decreasing $\alpha$. One simplest instance of this kind of method is to use the MC returns as the update targets, which leads to the gradient-MC method.

Such unbiasedness cannot be achieved with TD updates, which are essentially using bootstrapping estimates as targets. Bootstrapping targets or DP target depend on the current value of the value estimate and the parameter $\bm{w}_t$ for the value estimate. This implies that they are biased and will not produce a true gradient method. It has been proved that bootstrapping methods are not in fact instances of true gradient descent \cite{barnard1993temporal}, as they take into account the effect of changing the weight $\bm{w}_t$ on the estimate but ignore its effect on the target. They are recognized as \textit{semi-gradient} methods because they only take into consideration a part of the gradient.

Although semi-gradient bootstrapping methods do not converge as robustly as gradient methods, they do converge reliably in important cases such as the linear case. Algorithm \ref{alg:SGTD0} shows the simplest instance, the semi-gradient TD(0), which uses $1$-step target as the update target.

\begin{algorithm*}[htbp]
\caption{Semi-Gradient TD($0$) for Policy Evaluation}
\label{alg:SGTD0}
\KwIn{$\pi$ (policy to be evaluated), $\gamma$ (discount function), $\alpha \in (0, 1]$ (learning rate), $N$ (maximum number of episodes)}
\KwOut{$V(s), \forall s \in \scriptS{}^{+}$ (state values for policy $\pi$)}
Initialize $V(s) = 0$, $\forall s \in \scriptS$\\

\For{$n \in \{1, \dots, N\}$}{
    Initialize $S$\\
    \While{$S$ is not terminal}{
        $A = \text{action given by} \pi(\cdot|s)$\\
        Take action $A$, observe $R, S'$\\
        $\bm{w} = \bm{w} + \alpha \left[ R + \gamma(S') \cdot V(S'; \bm{w}) - V(S; \bm{w}) \right] \nabla_{\bm{w}} V(S; \bm{w})$\\
        $S = S'$
    }
}
\end{algorithm*}

\subsection{Linear Methods}
One of the simplest and most important special cases of function approximation is the \textit{linear function}, where $V(\bm{x}; \bm{w}) = \bm{w}^T\bm{x}$, is a linear function of the weight vector $\bm{w}$, and $\bm{x}$ is some real-valued feature vector corresponding to some state $s$. The linear case brings some important properties:

% Linear methods approximate state-value function by the inner product between $\bm{w}$ and $\bm{x}(s)$:

% $$V(s; \bm{w}) \equiv \bm{w}^T \bm{x}(s)$$

% where the approximate value function $V$ is said to be \textit{linear} and the vector $\bm{x}(s)$ is called a feature vector representing state $s$.

First, the gradient of the parameter $\bm{w}$ has a special form that is independent of the parameter $\bm{w}$ - the gradient of the approximate value function with respect to $\bm{w}$ is $\nabla_{\bm{w}} V(s; \bm{w}) = \bm{x}(s)$. This means that a once a gradient of yielded by some feature is calculated, it remains valid, \ie{} remains a true gradient, forever. This is a very special property that empowers eligibility traces, a fundamental RL method for policy evaluation.

Second, in particular, in the linear case there is only one optimum (or, in degenerate cases, one set of equally good optima), and thus any method that is guaranteed to converge to or near a local optimum is automatically guaranteed to converge to or near the global optimum.

Note that the convergence of linear semi-gradient TD(0) algorithm presented in Algorithm \ref{alg:SGTD0} does not follow from general results on SGD but a separate theorem. The weight vector converged to is also not the optimum, but rather a point nearby. The update for linear semi-gradient TD(0) is
\begin{equation}
\nonumber
\begin{aligned}
\bm{w}_{t+1} & = \bm{w}_{t} + \alpha (R_{t+1} + \gamma(\bm{x}_{t+1}) \bm{w}_t^T \bm{x}_{t+1} - \bm{w}_{t}^T \bm{x}_{t}) \bm{x}_t\\
& = \bm{w}_{t} + \alpha (R_{t+1}\bm{x}_t - \bm{x}_t(\bm{x}_t - \gamma(\bm{x}_{t+1})\bm{x}_{t+1})^T \bm{w}_t)
\end{aligned}
\end{equation}
where $\bm{x}_t \equiv \bm{x}(S_t)$.

If we use onehot encoding for tabular states, which is to use a binary vector with length $|\scriptS|$ and mark the corresponding state with $1$, we can see that tabular TD($0$) is a special case of semi-gradient TD($0$).

\subsection{Tile Coding}
Tile coding uses overlapping tilings to generate binary features for multi-dimensional continuous spaces, which are beneficial for generalization.

In tile coding, multiple tilings are used. Each \textit{tiling} is a grid partition of the state space and each element of the partition is called a \textit{tile}. The tilings are put on the state space, each offset by a fraction of a tile width. A simple case with $4$ tilings is shown on the right side of Figure \ref{fig:example_tile_coding}. Every state, such as that indicated by the white spot, falls in exactly one tile in each of the $4$ tilings. These $4$ tiles correspond to $4$ features that become active. Specifically, the feature vector $\bm{x}(s)$ has one component for each tile in each tiling. In this example there are $4 \times 4 \times 4 = 64$ components, all of which will be $0$ except for the $4$ corresponding to the tiles that $s$ is within.

\begin{figure*}
\centering
\includegraphics[width=0.9\textwidth]{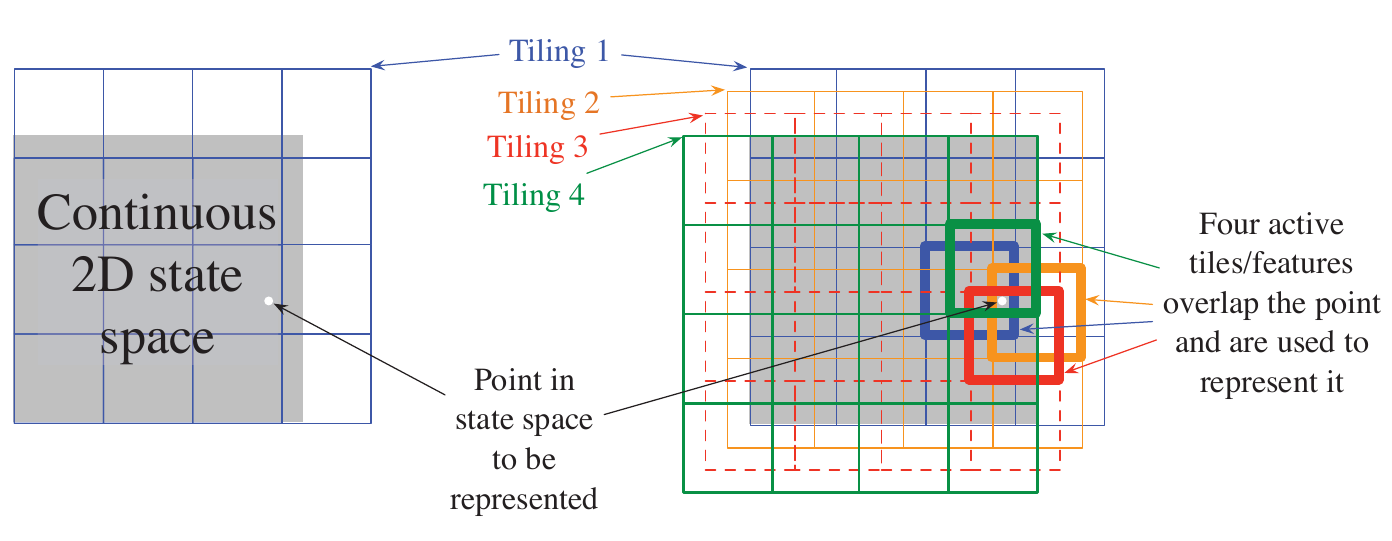}
\caption{Multiple overlapping grid-tilings on a 2D box space with uniform offset in each dimension. Figure from \cite{sutton2018reinforcement}.}
\label{fig:example_tile_coding}
\end{figure*}

There are several advantages of using tile coding for feature construction. 1) Since the overall number of activated tiles are always the same for any state, the learning rate parameter can be set intuitively and easily; 2) Since the features are always binary, efficient binary-float multiplication can be implemented for calculation; 3) Since there is no constraint on the shape of the tilings as well as the offsets, significant degrees of freedom can be utilized to design effective feature constructors.

Tile coding will be used as one of the feature construction methods in the experimental studies section.

\subsection{Off-policy Methods with Function Approximation}
When learning off-policy with function approximation, new troubles emerge for semi-gradient methods. First, the update targets need to be fixed with importance sampling ratios; Second and most importantly, the state distribution will no longer match the target policy. This means that the cumulative effect of gradient or semi-gradient updates does not optimize the overall value error.

Let us first look into why off-policy learning is much more difficult for the case of function approximation than the tabular case. In the function approximation case, generally the updates for one state affect all the similar states whereas in the tabular case, the updates for one state have no influence on others. This means that, in the off-policy case, the tabular updates do not have to care about the state-frequencies when doing updates as long as the update targets are fixed using the importance sampling ratios. The blessing that the tabular case updates do not rely on any special distribution for stability has not been passed to the function approximation case. In the function approximation case, the semi-gradient methods that we have introduced before rely on the state-frequencies for updates. This means we either have to ``reweight'' the updates, \ie{} to warp the update distribution back to the on-policy distribution using importance sampling methods, or we have to develop true gradient methods that do not rely on any special distribution for stability. In fact, the problem of the coexistence of bootstrapping, off-policy learning and function approximation is so troublesome that it is considered as ``the deadly triad''.

Here, we focus on the gradient methods.

\subsubsection{Gradient Methods for Linear Case}

\begin{definition}[Bellman Error]
Given an MDP with its dynamics $p$, a policy $\pi$ and discount function $\gamma$, let the corresponding Bellman operator be $\scriptB_{\pi}$, the \textit{Bellman Error (BE)} for a value estimate $V_{\bm{w}}$ is defined as the norm of the \textit{Bellman Error vector}, \ie{} the expected TD error vector, induced by the state-frequencies $d_{\pi}$.
\begin{equation}
\overline{\text{BE}}(\bm{w}) \equiv \| \overline{\bm{\delta}}_{\bm{w}} \|_{d_\pi}^2 \equiv \| D_{\pi}^{1/2} \cdot \overline{\bm{\delta}}_{\bm{w}} \|_2^2
% = \| \Pi \overline{\bm{\delta}}_{\bm{w}} \|_{d_\pi}^2
\end{equation}
where $D_\pi$ is the diagonalized $\bm{d}_\pi$ and the BE vector $\overline{\bm{\delta}}_{\bm{w}}$ is defined as:
\begin{equation}\label{eq:bellman_error}
\overline{\bm{\delta}}_{\bm{w}} \equiv \scriptB_{\pi}V_{\bm{w}} - V_{\bm{w}}
\end{equation}
\end{definition}

It has been proved that, unfortunately, true gradient methods optimizing the Bellman error, named \textit{residual methods}, cannot be realized for general RL settings unless the environment transitions are deterministic or if the transitions of the environment can be somehow reversed. Also, geometric analyses in the linear case show that the minimizers of the Bellman error may not be desirable. These lead to the gradient methods seeking to optimize the Mean Squared Projected Bellman Error (MSPBE).

\begin{definition}[Projected Bellman Error Vector \& Mean Squared Projected Bellman Error]
Given an MDP, target policy $\pi$ and a parameterized value estimate $\bm{V}_{\bm{w}}$, the \textit{Mean Squared Projected Bellman Error (MSPBE)} is defined as the norm of the \textit{Projected Bellman Error vector (PBE)}, induced by the state-frequencies $d_\pi$. 
\begin{equation}
\overline{\text{PBE}}(\bm{w}) \equiv \| \hat{\bm{\delta}}_{\bm{w}} \|_{d_\pi}^2 \equiv \| D_{\pi}^{1/2} \cdot \hat{\bm{\delta}}_{\bm{w}} \|_2^2
% = \| \Pi \overline{\bm{\delta}}_{\bm{w}} \|_{d_\pi}^2
\end{equation}
where $D_\pi$ is the diagonalized $\bm{d}_\pi$ and the PBE vector $\hat{\bm{\delta}}_{\bm{w}}$ is defined as:
$$\hat{\bm{\delta}}_{\bm{w}} \equiv \Pi \overline{\bm{\delta}}_{\bm{w}}$$
where $\overline{\bm{\delta}}_{\bm{w}}$ is the Bellman error vector of $\bm{V}_{\bm{w}}$ and $\Pi$ is the projection operator that takes an arbitrary value function $\bm{V}'$ to the representable function that is closest in the weighted norm, which is:
$$\Pi \bm{V}' \equiv \bm{V}_{\bm{w}'} \text{ and } \bm{w}' = \argmin_{\bm{w}}{\|\bm{V}' - \bm{V}_{\bm{w}}\|_{d_\pi}^2}$$
\end{definition}

In the linear case, given a fixed state-to-feature mapping and a fixed policy $\pi$, the projection operator is a static linear transformation which does not depend on the parameter $\bm{w}$. This gives birth to the linear gradient-TD methods which achieve convergence by minimizing MSPBE under some conditions. In this thesis, we propose an assistive method for general function approximation based policy evaluation, but test on these linear methods with the back of the convergence guarantees.

\section{From $\lambda$-Return to Eligibility Traces}
For a given MDP, the update targets of different steps, \eg{} $1$-step target, $2$-step target, \etc{}, yield different biases and variances, which lead to different qualities of the estimates. Naturally, one would like to combine them in a way \st{} we get better estimates, \eg{} with lower MSE towards the tru value, and without losing the properties of convergence. The following fact unlocks such possibility.

\begin{fact}[Compound Targets \& Compound Updates]
Given a MDP and appropriate learning rate, using convex combinations of multi-step returns as update targets for policy evaluation achieves convergence to fixed-points near the true value.
\end{fact}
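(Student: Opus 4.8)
The plan is to recognize that each multi-step return, in expectation, induces an operator on the space of value estimates, and that a convex combination of returns induces the corresponding convex combination of these operators. First I would define, for each $n \geq 1$, the $n$-step Bellman operator $\scriptB_\pi^{(n)}$ whose action on a value estimate $\bm{V}$ returns the expectation of the $n$-step return $G_{t:t+n}$ conditioned on the starting state. Because the $n$-step return is built by unrolling the one-step recursion (\ref{eq:recursive_return_general}) exactly $n$ times before bootstrapping, this operator is nothing but the $n$-fold composition $\scriptB_\pi^{(n)} = (\scriptB_\pi)^n$, with the Monte-Carlo return corresponding to the limiting case whose operator is constant (independent of $\bm{V}$).

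Next I would establish that each $\scriptB_\pi^{(n)}$ is a contraction whose unique fixed point is $v_\pi$. Since $\scriptB_\pi$ is a contraction with some modulus $q \in [0,1)$ in the sup-norm (guaranteed by $\gamma(\cdot) < 1$ or the termination assumption, exactly the condition of the existence/uniqueness fact) and fixes $v_\pi$ by the Bellman equation (\ref{eq:bellman_v_general}), the composition $(\scriptB_\pi)^n$ is a $q^n$-contraction fixing the same point. Writing the compound target as $\sum_n w_n G_{t:t+n}$ with $w_n \geq 0$ and $\sum_n w_n = 1$, the induced operator is $\scriptB_\pi^{\mathrm{comp}} = \sum_n w_n (\scriptB_\pi)^n$. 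By the triangle inequality and convexity of the weights,
\[
\| \scriptB_\pi^{\mathrm{comp}} \bm{U} - \scriptB_\pi^{\mathrm{comp}} \bm{V} \|_\infty \leq \sum_n w_n q^n \| \bm{U} - \bm{V} \|_\infty \leq q \, \| \bm{U} - \bm{V} \|_\infty ,
\]
so $\scriptB_\pi^{\mathrm{comp}}$ is again a contraction, and since every summand fixes $v_\pi$ so does the convex combination. Theorem \ref{thm:fixed_point} then yields a unique fixed point together with convergence of the iteration $\bm{V} \leftarrow \scriptB_\pi^{\mathrm{comp}} \bm{V}$; in the tabular (exactly representable) case this fixed point is precisely $v_\pi$.

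To pass from this expected-operator statement to the stochastic online updates that actually use sampled compound returns, I would invoke the standard stochastic-approximation argument: the expected increment of the update equals $\scriptB_\pi^{\mathrm{comp}} \bm{V} - \bm{V}$, a contractive (hence stabilizing) drift, so under Robbins--Monro step-size conditions --- the ``appropriate learning rate'' in the statement --- the iterates track the deterministic operator and converge to its fixed point. For the function-approximation case the only change is that the compound operator is followed by the projection $\Pi$ onto the representable subspace; using that $P_\pi$ is a nonexpansion in the $\bm{d}_\pi$-weighted norm (the lemma noted in the state-frequency discussion), $\Pi \scriptB_\pi^{\mathrm{comp}}$ remains a contraction in that norm, and its unique fixed point is in general not $v_\pi$ but the nearby projected solution --- which is exactly the ``fixed-points near the true value'' of the claim.

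The hardest part will be this final bridge: the contraction-in-expectation argument is routine, but making the stochastic-approximation step rigorous requires verifying the martingale-noise and bounded-variance conditions for compound returns (whose variance grows with the support of the weights $w_n$), and carefully switching from the sup-norm used in the tabular contraction to the $\bm{d}_\pi$-weighted norm needed for the projected operator in the off-policy function-approximation regime.
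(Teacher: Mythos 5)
Your proposal is correct and takes essentially the same route as the paper, which justifies this Fact only with the one-line observation that compound updates inherit convergence because each multi-step component has its own convergence guarantee (the error-reduction property of $n$-step returns, which is exactly the $q^n$-contraction you prove). Your convex-combination-of-contractions argument, Banach fixed-point step, and stochastic-approximation bridge are the standard rigorous filling-in of that sentence, not a different approach.
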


Often, these targets are recognized as \textit{compound targets}. The updates using compound targets are called \textit{compound updates}. The convergence of compound updates relies on the fact that compound targets are composed of multi-step updates, and each of them has convergence guarantees.

There are potentially many ways to mix the multi-step targets to achieve compound updates. Optimizing on the way of mixing should be beneficial for the sample efficiency of policy evaluation.

One of the most popular way is to mix the multi-step targets with a geometric weight sequence, which yields the famous $\lambda$-return:

\begin{definition}[$\lambda$-return]
Given timestep $t$ and the corresponding state $S_t$, the $\lambda$-return of $S_t$ is defined as a convex combination of the multi-step targets of $S_t$, with the weights specified using a geometric sequence, controlled by a scalar parameter $\lambda$. Specifically, the $\lambda$-return $G_{t}^{\lambda}$ combines all $n$-step returns $G_t^{(n)}$ using weight  $(1-\lambda)\lambda^{n-1}$:
$$G_{t}^{\lambda} \equiv (1 - \lambda) \sum_{n=1}^{\infty}{\lambda^{n-1} G_t^{(n)}}$$
where $\lambda \in [0, 1]$.
\end{definition}

Methods that use $\lambda$-return as update target, given that parameter $\lambda$ is set to be appropriate, often achieve significantly higher sample efficiency than using only some fixed-step returns as targets, for the fact that $\lambda$-return has a better bias-variance tradeoff.

Calculating this na\"ively requires knowing all the multi-step targets up until the end of the trajectory. Thus, updates using $\lambda$-return directly can only be done offline, which is recognized as the \textit{offline $\lambda$-return algorithm}. Offline algorithms can be unsatisfactory for many reasons. Fortunately, there is a way to approximate the updates towards $\lambda$-return in an online fashion\footnote{Actually, the online method that approximates the $\lambda$-return updates was discovered before the identification of $\lambda$-return as a compound target. However the introduction to the online method via the notion of compound targets are beneficial for understanding.}. The compound update using $\lambda$-return can be approximated using the eligibility traces from a backward view, with the help of the eligibility trace vectors:

\begin{proposition}[Trace Approximation]
Given an MDP and an infinitely long trajectory under policy $\pi$, the updates using $\lambda$-return as targets can be approximated with online updates. More specifically, the update rules are:
$$\bm{z}_t = \gamma \lambda \bm{z}_{t-1} + \bm{\nabla} V(\bm{x}_{t}, \bm{w}_t)$$
$$\bm{w}^{(t+1)} = \bm{w}_t + \alpha [R_{t+1} + \gamma V(\bm{x}_{t+1}, \bm{w}_t) -V(\bm{x}_{t}, \bm{w}_t)] \bm{z}_t$$
where $\bm{z}$ is the \textit{eligibility trace} vector which is initialized as $\bm{0}$ at the beginning of the episode and $\alpha$ is some appropriate learning rate.
\end{proposition}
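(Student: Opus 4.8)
The plan is to establish the classical equivalence between the forward view (the offline $\lambda$-return algorithm) and the backward view (the trace-based updates): this equivalence is exact when the weight vector is frozen across the episode, and it degrades to a controlled approximation in the genuinely online regime. First I would write down the forward-view increment prescribed at each timestep, namely $\alpha[G_t^{\lambda} - V(\bm{x}_t, \bm{w})]\,\bm{\nabla} V(\bm{x}_t, \bm{w})$, and invoke the linear case so that $\bm{\nabla} V(\bm{x}_t, \bm{w}) = \bm{x}_t$ is independent of $\bm{w}$ — precisely the property highlighted earlier that ``empowers eligibility traces'' by keeping an already-computed gradient valid forever.

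The crux is an algebraic identity expressing the $\lambda$-return error as a discounted sum of one-step TD errors. Writing $\delta_k \equiv R_{k+1} + \gamma V(\bm{x}_{k+1}, \bm{w}) - V(\bm{x}_k, \bm{w})$ for the TD error under a fixed $\bm{w}$, I would prove
$$G_t^{\lambda} - V(\bm{x}_t, \bm{w}) = \sum_{k=t}^{\infty}(\gamma\lambda)^{k-t}\,\delta_k.$$
The derivation telescopes each $n$-step return into $G_t^{(n)} - V(\bm{x}_t, \bm{w}) = \sum_{j=0}^{n-1}\gamma^{j}\delta_{t+j}$, substitutes this into $G_t^{\lambda} = (1-\lambda)\sum_{n}\lambda^{n-1}G_t^{(n)}$, and exchanges the two sums; collecting the coefficient of each $\delta_{t+j}$ gives the geometric series $(1-\lambda)\sum_{n>j}\lambda^{n-1} = \lambda^{j}$, which recombines with $\gamma^{j}$ into $(\gamma\lambda)^{j}$. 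The infinite sum converges because $\gamma\lambda < 1$ whenever $\gamma < 1$ or $\lambda < 1$, the standing assumption.

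With the identity in hand I would sum the forward-view increments over the whole trajectory and interchange the order of summation:
$$\sum_{t}\alpha\Bigl(\sum_{k\ge t}(\gamma\lambda)^{k-t}\delta_k\Bigr)\bm{x}_t = \sum_{k}\alpha\,\delta_k\,\Bigl(\sum_{t\le k}(\gamma\lambda)^{k-t}\bm{x}_t\Bigr) = \sum_{k}\alpha\,\delta_k\,\bm{z}_k,$$
where $\bm{z}_k \equiv \sum_{t\le k}(\gamma\lambda)^{k-t}\bm{x}_t$ is exactly the eligibility trace: it starts at $\bm{0}$ and obeys the claimed recursion $\bm{z}_k = \gamma\lambda\,\bm{z}_{k-1} + \bm{\nabla} V(\bm{x}_k, \bm{w}_k)$. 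Hence the total weight change from the backward-view trace updates matches that of the offline forward-view updates.

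The main obstacle is justifying the word \emph{approximated}. The interchange above is an exact identity only when $\bm{w}$ is held fixed throughout the episode, so that every $\delta_k$ and every $\bm{x}_t$ refer to one common parameter — this is the offline setting. In the genuine online setting the trace updates perturb $\bm{w}_t$ at each step, so the online TD errors are evaluated at continually changing weights that differ from those the forward view assumes. I would therefore argue that the gap between the two accumulated updates is governed by how far the weights move within an episode, which is $O(\alpha)$ per step, so that for small $\alpha$ the forward and backward views agree to first order, yielding the stated approximation. I would note in passing that exact online equivalence is recoverable only with a corrected ``true online'' trace, which sharpens but lies outside this proposition.
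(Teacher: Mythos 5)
Your proof is correct, and it rests on exactly the same key identity as the paper's: $G_t^{\lambda} - V(\bm{x}_t) = \sum_{k\ge t}(\gamma\lambda)^{k-t}\delta_k$, the rewriting of the $\lambda$-return error as a $(\gamma\lambda)$-discounted sum of one-step TD errors. The route to the identity differs in detail: the paper expands $G_t^{\lambda}$ as the geometric mixture of $n$-step returns and directly regroups the coefficients of each reward $R_{t+j+1}$ and value $V(S_{t+j})$, summing the geometric series $(1-\lambda)(\lambda^0+\lambda^1+\cdots)$ in place, whereas you first telescope each $n$-step error into $\sum_{j=0}^{n-1}\gamma^j\delta_{t+j}$ and then exchange the two sums to collect $(1-\lambda)\sum_{n>j}\lambda^{n-1}=\lambda^j$ --- algebraically equivalent, and your version is arguably cleaner since the telescoping lemma isolates the cancellation once rather than tracking it term by term. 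Beyond that, you supply two steps the paper leaves implicit: the summation over $t$ with the order interchange $\sum_t\sum_{k\ge t} = \sum_k\sum_{t\le k}$ that actually produces the trace vector $\bm{z}_k=\sum_{t\le k}(\gamma\lambda)^{k-t}\bm{x}_t$ and its recursion (the paper stops at ``this is exactly the form of accumulating trace''), and a precise account of where the word \emph{approximated} comes from. On the latter you are in fact sharper than the paper: the paper's remark attributes the forward--backward gap to finite trajectories, but since the proposition itself assumes an infinitely long trajectory, the residual discrepancy under its own hypotheses is the one you name --- the online updates evaluate $\delta_k$ at drifting weights $\bm{w}_k$ while the identity is proved for a frozen $\bm{w}$, giving an $O(\alpha)$ per-step deviation --- and your pointer to the true online traces as the exact-equivalence fix matches the paper's own later discussion. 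One small caveat: your convergence remark should require $\gamma\lambda<1$ strictly (\ie{} at least one of $\gamma,\lambda$ strictly below $1$); in the episodic case with $\gamma=\lambda=1$ the sums are finite anyway because the trajectory terminates, which is worth stating rather than leaning on the geometric bound.
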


\begin{proof}
\begin{equation}
\begin{aligned}
& G_t^\lambda - V(S_t)\\
& = -V(S_t) + (1 - \lambda) \lambda^0 (R_{t+1} + \gamma^1 V(S_{t+1})) + (1 - \lambda) \lambda^1 (R_{t+1} + \gamma^1 R_{t+2} + \gamma^2 V(S_{t+2})) + \cdots\\
& = -V(S_t) + (1 - \lambda)[\lambda^0\gamma^0(\lambda^0 + \lambda^1 + \cdots)R_{t+1} + \lambda^1\gamma^1(\lambda^0 + \lambda^1 + \cdots)R_{t+2} + \cdots + \gamma^1\lambda^0 V(S_{t+1}) + \gamma^2\lambda^1 V(S_{t+2}) + \cdots]\\
& = -V(S_t) + (1 - \lambda)[\frac{\lambda^0\gamma^0}{1-\lambda}R_{t+1} + \frac{\lambda^1\gamma^1}{1-\lambda}R_{t+2} + \cdots + \gamma^1\lambda^0 V(S_{t+1}) + \gamma^2\lambda^1 V(S_{t+2}) + \cdots]\\
& = (\gamma \lambda)^0(R_{t+1} + \gamma V(S_{t+1}) - \gamma \lambda V(S_{t+1})) + (\gamma \lambda)^1(R_{t+2} + \gamma V(S_{t+2}) - \gamma \lambda V(S_{t+2})) + \cdots\\
& = \sum_{k=t}^{\infty}{(\gamma \lambda)^{k-t}(R_{k+1} + \gamma V(S_{k+1}) - V(S_k))}
\end{aligned}
\nonumber
\end{equation}
This is exactly the form of accumulating trace\footnote{People refer to ``eligibility'' trace as the family of the algorithms that employ trace vectors to incrementally approximate something that cannot be calculated directly. There are several variants of the eligibility traces. The accumulating traces are the original and the most classical one.}:
\end{proof}
Note that the equality holds only when the trajectory is infinitely long. When it is not (as it will always be), there will be differences between this online approximation algorithm, which is named $TD(\lambda$) and the offline $\lambda$-return algorithm. Interestingly, even if so, it is proved in \cite{dayan1994td} that the TD($\lambda$), the algorithm that approximates with eligibility traces vectors, achieves convergence to a fixed-point near the true value almost surely.

Naturally, scalar parameter $\lambda$ for all states can be generalized to state-based $\lambda(\cdot)$ \cite{sutton1995models,sutton2018reinforcement,maei2010general,sutton2014interim,yu2012least}. If we index the states and concatenate all the $\lambda(s) \forall s \in \scriptS{}$, we arrive at the generalized $\bm{\lambda}$-return\footnote{$\bm{\lambda}$ here is in bold font to emphasize that it is a vector.}:

\begin{definition}[$\bm{\lambda}$-return]
The generalized state-based $\bm{\lambda}$-return $G_t^{\bm{\lambda}}$, where $\bm{\lambda} \equiv {[\lambda_1, \dots, \lambda_i \equiv \lambda(s_i), \dots, \lambda_{|\scriptS|}]}^T$, for state $S_t$ in a trajectory $\tau$ is recursively defined as
\begin{equation}\nonumber
G_t^{\bm{\lambda}} = R_{t+1} + \gamma_{t+1} [(1 - \lambda_{t+1})V(S_{t+1}) + \lambda_{t+1}G_{t+1}^{\bm{\lambda}}]
\end{equation}
where $G_t^{\bm{\lambda}} = 0$ for $t \geq |\tau|$.
\end{definition}

Convergence of method using $\bm{\lambda}$-return as the update target can be shown based on the fact that the generalized Bellman operator with state-dependent discount and trace decay is a contraction \cite{sutton2015emphatic}. On the other hand, time-dependent decay may not be equipped with well-defined fixed points.

Similarly, online approximations using traces exist for offline updates with $\lambda$-return.

\begin{proposition}[Generalized Trace Approximation]
Given an MDP, an infinitely long trajectory under policy $\pi$, the state-based discount function $\gamma: \scriptS \to [0, 1]$ and the state-based trace-decay function $\lambda: \scriptS \to [0, 1]$, the updates using $\bm{\lambda}$-return as targets can be approximated with online updates. More specifically, the update rules are:
$$\bm{z}_t = \gamma(\bm{x}_{t}) \lambda(\bm{x}_{t}) \bm{z}_{t-1} + \bm{\nabla} V(\bm{x}_{t}, \bm{w}_t)$$
$$\bm{w}^{(t+1)} = \bm{w}_t + \alpha [R_{t+1} + \gamma V(\bm{x}_{t+1}, \bm{w}_t) -V(\bm{x}_{t}, \bm{w}_t)] \bm{z}_t$$
where $\alpha$ is some appropriate learning rate.
\end{proposition}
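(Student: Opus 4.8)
The plan is to mirror the scalar Trace Approximation proof above, replacing the constant factor $\gamma\lambda$ by the state-dependent product $\gamma(S_t)\lambda(S_t)$ throughout. Writing $\gamma_k \equiv \gamma(S_k)$, $\lambda_k \equiv \lambda(S_k)$, and $\delta_k \equiv R_{k+1} + \gamma_{k+1}V(S_{k+1}) - V(S_k)$ for the (state-discounted) TD error, the first step is to exploit the recursive definition of the $\bm{\lambda}$-return directly rather than re-summing a geometric series. Subtracting $V(S_t)$ from both sides of $G_t^{\bm{\lambda}} = R_{t+1} + \gamma_{t+1}[(1-\lambda_{t+1})V(S_{t+1}) + \lambda_{t+1}G_{t+1}^{\bm{\lambda}}]$ and regrouping the bracketed value term with $R_{t+1}$ and $-V(S_t)$ gives the clean one-step recursion
$$G_t^{\bm{\lambda}} - V(S_t) = \delta_t + \gamma_{t+1}\lambda_{t+1}\left(G_{t+1}^{\bm{\lambda}} - V(S_{t+1})\right).$$
This is arguably simpler than the scalar derivation, since the recursive form of $G_t^{\bm{\lambda}}$ makes the identity immediate and sidesteps any term-by-term resummation of the $n$-step returns.

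The second step is to unroll this recursion. Because $G_t^{\bm{\lambda}} = 0$ past the end of the trajectory, iterating yields
$$G_t^{\bm{\lambda}} - V(S_t) = \sum_{k=t}^{\infty}\left(\prod_{m=t+1}^{k}\gamma_m\lambda_m\right)\delta_k,$$
with the convention that the empty product (the $k=t$ term) equals $1$. This is the exact analogue of the scalar identity, now carrying the state-dependent cumulative product $\prod_{m=t+1}^{k}\gamma_m\lambda_m$ in place of $(\gamma\lambda)^{k-t}$.

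Third, I would assemble the total offline $\bm{\lambda}$-return update $\sum_t \alpha[G_t^{\bm{\lambda}} - V(S_t)]\nabla V(S_t)$, substitute the identity above, and interchange the order of summation so the outer sum runs over $k$ (the TD errors) and the inner sum over the predecessors $t\leq k$. The inner sum
$$\bm{z}_k \equiv \sum_{t=0}^{k}\left(\prod_{m=t+1}^{k}\gamma_m\lambda_m\right)\nabla V(S_t)$$
is then shown to satisfy $\bm{z}_k = \nabla V(S_k) + \gamma_k\lambda_k\bm{z}_{k-1}$ by peeling off the $t=k$ term and factoring $\gamma_k\lambda_k$ out of the remainder; invoking the linear identity $\nabla V(\bm{x}_k,\bm{w}) = \bm{x}_k$ this is exactly the stated trace update. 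Hence the sum of online increments $\sum_k \alpha\,\delta_k\,\bm{z}_k$ matches the offline update, which establishes the approximation.

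The \emph{main obstacle}, exactly as in the scalar case, is that the two manipulations in the third step are only strictly valid under the hypotheses the paper already flags. The interchange of the infinite double sum needs the tail products $\prod_{m}\gamma_m\lambda_m$ to decay; this follows since $\gamma,\lambda$ map into $[0,1]$ and, for finite returns, termination or $\gamma(\cdot)<1$ is assumed. More importantly, identifying the accumulated online increments with the single offline update is exact only when $\nabla V(S_t)$ is unchanged as $\bm{w}$ is updated within the episode — precisely the special property of the linear case emphasized earlier. I would therefore state the equality for the linear (or fixed-weight) setting and note, just as the scalar proposition does, that for finite trajectories with weight-varying gradients TD($\bm{\lambda}$) approximates the offline $\bm{\lambda}$-return algorithm rather than equaling it identically.
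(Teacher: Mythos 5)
Your proof is correct and follows essentially the same route as the paper's: unroll the recursive definition of $G_t^{\bm{\lambda}}$ to write $G_t^{\bm{\lambda}} - V(S_t)$ as a sum of one-step TD errors weighted by cumulative products of $\gamma\lambda$. If anything, yours is tighter and more complete: your weight $\prod_{m=t+1}^{k}\gamma_m\lambda_m$ is the correct state-based generalization (the paper's final line writes $(\gamma_k \lambda^{(k)})^{k-t}$, a sloppy carry-over from the constant-$\lambda$ case), and the paper stops at that forward-view identity, leaving implicit exactly the things you spell out — the interchange of summation, the trace recursion $\bm{z}_k = \gamma_k\lambda_k\bm{z}_{k-1} + \nabla V(S_k)$, and the linear/fixed-weight caveat under which the online and offline updates coincide.
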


\begin{proof}
Let $\gamma_k \equiv \gamma(S_k)$ and $\lambda_k \equiv \lambda(S_k)$.
\begin{equation}
\begin{aligned}
& G_t^{\bm{\lambda}} - V(\bm{x}_t)\\
& = -V(\bm{x}_t) + R_{t+1} + \gamma_{t+1}(1 - \lambda_{t+1})V(\bm{x}_{t+1}) + \gamma_{t+1}\lambda_{t+1}G_{t+1}^{\bm{\lambda}}\\
& = -V(\bm{x}_t) + R_{t+1} + \gamma_{t+1}(1 - \lambda_{t+1})V(\bm{x}_{t+1}) + \gamma_{t+1}\lambda_{t+1}(R_{t+2} + \gamma_{t+2}(1 - \lambda^{(t+2)})V(\bm{x}_{t+2}) + \gamma_{t+2}\lambda^{(t+2)}G_{t+2}^{\bm{\lambda}})\\
& = -V(\bm{x}_t) + R_{t+1} + \gamma_{t+1}(1 - \lambda_{t+1})V(\bm{x}_{t+1}) + \gamma_{t+1}\lambda_{t+1}R_{t+2} + \gamma_{t+1}\lambda_{t+1}\gamma_{t+2}(1 - \lambda^{(t+2)})V(\bm{x}_{t+2}) + \gamma_{t+1}\lambda_{t+1}\gamma_{t+2}\lambda^{(t+2)}G_{t+2}^{\bm{\lambda}}\\
& = \cdots\\
& = \sum_{k=t}^{\infty}{(\gamma_k \lambda^{(k)})^{k-t}(R_{k+1} + \gamma_k V(\bm{x}_{k+1}) - V(\bm{x}_k))}
\end{aligned}
\nonumber
\end{equation}
\end{proof}

Generalization from scalar $\lambda$ to state-based $\bm{\lambda}$ greatly increases the potential of trace-based policy evaluation in a sense that significantly more degrees of freedom are unlocked for the mixing of multi-step targets and potentially the achievement of update targets with significantly less bias and variance.

The complexity of the online trace updates stops anyone from optimizing it, unless the true equivalence can be established between the online and offline algorithms.

Recently, a new family of ``true online'' algorithms has been discovered, which achieves exact equivalence of the online approximation and the offline updates. The equivalence is achieved by maintaining extra traces that correct the updates towards a true convex combination of multi-step targets.

\begin{fact}[True Online Equivalence]
The following incremental update rules achieve exact equivalence to updates using ${\bm{\lambda}}$-return targets using linear function approximations:
$$\delta_t = R_{t+1} + \gamma_{t+1} \bm{x}_{t+1}^T w_{(t)} - \bm{x}_{t}^T w_{(t)} $$
$$\bm{z}_{(t)} = \gamma_{t}\lambda_{(t)}\bm{z}_{(t-1)} + \bm{x}_{t} - \alpha\gamma_{t+1}\lambda_{t+1}(\bm{z}_{(t)}^T\bm{x}_{t})\bm{x}_{t}$$
$$\bm{w}_{(t+1)} = \bm{w}_{(t)} + \alpha \delta_t \bm{z}_{(t)} + \alpha (\bm{w}_{(t)}^T \bm{x}_t - \bm{w}_{(t-1)}^T \bm{x}_t)(\bm{z}_{(t)} - \bm{x}_t)$$
where $\alpha$ is the step-size hyperparameter that is consistent from both the forward view and the backward view.
\end{fact}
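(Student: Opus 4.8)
The plan is to prove equivalence by showing that these backward-view recursions reproduce, step for step, the weight vectors generated by the \emph{forward view}, \ie{} the online (truncated) $\bm{\lambda}$-return algorithm. First I would set up that forward view precisely. For each horizon $h$, define a truncated return $G_t^{\bm{\lambda}|h}$ that bootstraps at step $h$ using the most recently available weights, and define a sweep of updates $\bm{w}_{k+1}^h = \bm{w}_k^h + \alpha(G_k^{\bm{\lambda}|h} - \bm{x}_k^T\bm{w}_k^h)\bm{x}_k$ for $0 \le k < h$, restarted from a common initial weight $\bm{w}_0^h = \bm{w}_0$ at every horizon. The exact choice of bootstrapping weight inside $G_t^{\bm{\lambda}|h}$ is the delicate definitional point, since it is what makes the two views coincide. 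The quantity of interest is the diagonal sequence $\bm{w}_h^h$, and the goal is to show $\bm{w}_{(t)} = \bm{w}_t^t$ for all $t$; this yields exact equivalence because the online $\bm{\lambda}$-return algorithm is, by construction, a faithful online realization of the $\bm{\lambda}$-return targets of the previous proposition.

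Next I would prove the key lemma describing how a single extension of the horizon perturbs the interim return. Using the recursive definition of $G_t^{\bm{\lambda}}$ established earlier, I would show that $G_t^{\bm{\lambda}|h+1} - G_t^{\bm{\lambda}|h} = \left(\prod_{j=t+1}^{h}\gamma_j\lambda_j\right)\delta_h'$ for a TD-error-like term $\delta_h'$ evaluated at the appropriate weights; the state-dependent factors $\gamma_j = \gamma(\bm{x}_j)$ and $\lambda_j = \lambda(\bm{x}_j)$ simply replace the constant powers $(\gamma\lambda)^{h-t}$ that would appear in the stationary case. Summing these horizon-increments telescopes the difference between consecutive sweeps, letting me write $\bm{w}_{h+1}^{h+1}$ as $\bm{w}_h^h$ plus an update that is a TD error scaled by an accumulated, $\gamma\lambda$-weighted sum of past features.

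I would then identify that accumulated feature sum with the eligibility trace $\bm{z}_{(t)}$ and verify by induction on $h$ that it satisfies the recursion in the statement, \ie{} $\bm{z}_{(t)} = \gamma_t\lambda_{(t)}\bm{z}_{(t-1)} + \bm{x}_t - \alpha\gamma_{t+1}\lambda_{t+1}(\bm{z}_{(t)}^T\bm{x}_t)\bm{x}_t$, while the residual correction $\alpha(\bm{w}_{(t)}^T\bm{x}_t - \bm{w}_{(t-1)}^T\bm{x}_t)(\bm{z}_{(t)}-\bm{x}_t)$ turns out to be exactly the contribution arising from the fact that extending the horizon alters the bootstrapped portion of every earlier target. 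The main obstacle I anticipate is the derivation of the ``dutch'' self-correction term $-\alpha\gamma_{t+1}\lambda_{t+1}(\bm{z}_{(t)}^T\bm{x}_t)\bm{x}_t$: within a single sweep the update at step $k$ changes $\bm{w}_k^h$, which then feeds back into both the target and the value estimate at later steps through the inner products $\bm{x}^T\bm{w}$, so the naive accumulating trace over-counts repeated updates to correlated features. Carefully collecting the first-order-in-$\alpha$ terms that couple $\bm{x}_t$ with the current trace is where the bulk of the bookkeeping lies, and it is precisely this feedback that forces the exact coefficients in the trace and weight-correction recursions. I would close by invoking the induction to conclude $\bm{w}_{(t)} = \bm{w}_t^t$ for every $t$, giving the claimed exact equivalence.
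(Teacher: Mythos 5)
The paper never proves this statement: it is recorded as a \emph{Fact}, i.e., a result imported from the true-online literature (van Seijen and Sutton's true online TD($\lambda$), van Hasselt et al.'s true online GTD($\lambda$)), so there is no in-paper argument to compare yours against. Your plan reconstructs exactly the standard derivation from that literature, and its skeleton is sound: define the online (interim) $\bm{\lambda}$-return forward view restarted from $\bm{w}_0$ at every horizon, prove the horizon-increment lemma $G_t^{\bm{\lambda}|h+1} - G_t^{\bm{\lambda}|h} = \bigl(\prod_{j=t+1}^{h}\gamma_j\lambda_j\bigr)\delta_h'$ (your indices here are correct for the state-based decay), telescope across consecutive sweeps, identify the accumulated feature sum with the dutch trace, and conclude $\bm{w}_{(t)} = \bm{w}_t^t$ by induction. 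You also correctly isolate the two points on which exactness hinges: which weights the interim return bootstraps with, and the first-order-in-$\alpha$ feedback within a sweep that forces a dutch trace rather than an accumulating one. Note that what this route establishes is equivalence to the \emph{online} $\bm{\lambda}$-return algorithm (interim targets bootstrapping with the evolving diagonal weights), not to the offline $\bm{\lambda}$-return algorithm; your reading of the Fact in that sense is the right one, but it should be stated explicitly as the thing being proven.

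One concrete issue you will hit when closing the induction: the recursion as printed in the statement cannot be what your derivation produces. It is implicit ($\bm{z}_{(t)}$ appears on both sides of its own update) and its decay indices are inconsistent ($\gamma_t\lambda_{(t)}$ in the decay term versus $\gamma_{t+1}\lambda_{t+1}$ in the correction term). The form your telescoping argument yields, and the form in the cited literature, is $\bm{z}_{(t)} = \gamma_t\lambda_t\bm{z}_{(t-1)} + \bm{x}_t - \alpha\gamma_t\lambda_t(\bm{z}_{(t-1)}^T\bm{x}_t)\bm{x}_t$, with the same decay factor in both places and the \emph{previous} trace inside the correction; solving the implicit printed equation gives a genuinely different vector. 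Since you copied the statement verbatim into your induction hypothesis, the inductive step as planned will not close; you need to prove the corrected recursion and flag the statement's typo. Beyond that, what remains is the bookkeeping you explicitly defer, which is the bulk of the content but is located in the right place.
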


With the true online algorithms, we can enjoy the efficiency of online updates and the guarantee of convergence from the offline returns simultaneously. The equivalence also could serve as a bridge to optimize the online updates using the mathematical properties of the backward view.

% Eligibility traces are one of the basic mechanisms of reinforcement learning. For example, in the popular TD($\lambda$) algorithm, the $\lambda$ refers to the use of an eligibility trace. Almost any temporal-difference (TD) method, such as Q-learning or Sarsa, can be combined with eligibility traces to obtain a more general method that may learn more efficiently. Eligibility traces unify and generalize TD and Monte Carlo methods. When TD methods are augmented with eligibility traces, they produce a family of methods spanning a spectrum that has Monte Carlo methods at one end ($\lambda=1$) and one-step TD methods at the other ($\lambda=0$). In between are intermediate methods that are often better than either extreme method. Eligibility traces also provide a way of implementing Monte Carlo methods online and on continuing problems without episodes.

Very recently, it is discovered that, it is possible to learn the statistics, \eg{} variance, second-moment, of $\bm{\lambda}$-return online using eligibility traces. The online learning of these statistics as auxiliary tasks provides more information for online adaptation of the learning parameters. Here we introduce $2$ relevant ones.

The first method VTD comes from \cite{white2016greedy}, which seeks to learn the second moment of $\bm{\lambda}$-return. A Bellman operator is constructed for the squared $\bm{\lambda}$-return, which is the second moment of return. A fixed point objective Var-MSPBE, similar to MSPBE, is introduced. The recursive form for the squared return is:

\begin{equation}\label{eq:vtd}
\begin{aligned}
\left( G_{t}^{\bm{\lambda}} \right)^2 & = \left(\rho_t R_{t+1} + \gamma_{t+1} [(1 - \lambda_{t+1})V(S_{t+1}) + \lambda_{t+1}G_{t+1}^{\bm{\lambda}}] \right)^2\\
& = \overline{R}_{t+1} + \overline{\gamma}_{t+1} (G_{t+1}^{\bm{\lambda}})^2
\end{aligned}
\end{equation}
where for a given $\bm{\lambda} \in [0, 1]^{|\scriptS{}|}$,
$$\overline{G}_{t} \equiv R_{t+1} + \gamma_{t+1} (1 - \lambda_{t+1}) M(\bm{x}_{t+1}; \bm{w}_{M}) = R_{t+1} + \gamma_{t+1} (1 - \lambda_{t+1}) \bm{w}_{M}^T \bm{x}_{t+1}$$
$$\overline{R}_{t+1} \equiv \rho_t^2 \overline{G}_{t}^2 + 2 \rho_t^2 \gamma_{t+1} \lambda_{t+1} \overline{G}_{t} G_{t+1}^{\bm{\lambda}}$$
$$\overline{\gamma}_{t+1} \equiv \rho_t^2 \gamma_{t+1}^2 \lambda_{t+1}^2$$

Here $M$ is the linear approximation of the expected squared return, parameterized by $\bm{w}_{M}$. It has been proved that even if $\overline{\gamma}_{t+1} < 1$ cannot be guaranteed, the Bellman operator for these new types of ``reward'' $\overline{R}_{t+1}$ and discount function $\overline{\gamma}_{t+1}$ is still a contraction under appropriate conditions, which is when the second moment of return is finite. These tell us that the second moment of return can be learned in the same way as the first moment of return (which is the expected $\bm{\lambda}$-return) using MC, (trace-enabled) TD or even GTDs. Notice that, using estimated first moment and second moment we can indirectly estimate the variance of $\bm{\lambda}$-return by constructing $Var[G_t^{\bm{\lambda}}] = \doubleE[(G_t^{\bm{\lambda}})^2] - \doubleE^2[G_t^{\bm{\lambda}}]$. Interestingly, such estimation is proved to be possible if carried out directly, which leads to the following.

The second method DVTD is more recent, published in \cite{sherstan2018directly}, which seeks to learn the variance of the $\bm{\lambda}$-return directly by constructing a Bellman operator for the variance of the $\bm{\lambda}$-return. The recursive form for the variance of the $\bm{\lambda}$-return is:
\begin{equation}\label{eq:dvtd}
\begin{aligned}
Var[G_t^{\bm{\lambda}}] = \doubleE \left[ \delta_t^2 + (\gamma_{t+1}\lambda_{t+1})^2 Var[G_{t+1}^{\bm{\lambda}}] | S_t = s \right]
\end{aligned}
\end{equation}
It has been proved that the induced Bellman operator converges under appropriate assumptions. Similar to how we would apply VTD, this method can be used to learn the variance of $\bm{\lambda}$-return flexibly if we treat reward as $\delta_t^2$ and discount to be $(\gamma_{t+1}\lambda_{t+1})^2$.

Trace based true online methods will be used as baseline methods to validate the empirical performance of the contributed method in this thesis, in Chapter \ref{chap:experiments}.

\section{Policy Gradient Methods}\label{sec:policy_gradient}
Just like the value estimates, policies can also make use of the power of generalization by parameterization. In this section, we introduce the ways of dealing with parameterized policies that selects actions, independent of the value estimates\footnote{We focus on the type of parameterized policies disentangled from the value estimates.}.

We can use the notation $\bm{\theta}$ for the policy's parameter vector. Thus we write $\pi(a|s; \bm{\theta}) = \doubleP\{ A_t = a | S_t = s, \bm{\theta}_t = \bm{\theta} \}$ for the probability that action $a$ is taken at time $t$ given that the environment is in state $s$ at time $t$ with parameter $\bm{\theta}$. 

We seek to maximize the value function of the starting states by optimizing $\bm{\theta}$. The representative method is to conduct gradient ascent, whose performance is guaranteed by the following\footnote{This is an original proof for the generalized state-based $\gamma$ case.}:

\begin{theorem}[Episodic Discounted Policy Gradient]
Given a policy $\pi(\cdot; \bm{\theta})$, let the true state-action value function be $q_\pi(\cdot)$, the true state value function be $v_\pi(\cdot)$, the starting state distribution be $d_0$ and the state distribution be $d_\pi$. The gradient of $v_\pi(\cdot)$ \wrt{} $\bm{\theta}$ satisfies:
\begin{equation}
\begin{aligned}
\bm{\nabla} v_\pi(s_0) &\propto \sum_{s}{ d_\pi(s) \cdot \gamma(s_0 \to s, \pi) \cdot  \sum_{a}{\left(q_\pi(s, a) - h(s)) \bm{\nabla} \pi(a|s; \bm{\theta} \right)} }%\\
%&= \sum_{s \sim \pi}{\gamma_{acc} \cdot  \sum_{a}{\left(q_\pi(s, a) - h(s)) \bm{\nabla} \pi(a|s; \bm{\theta} \right)}}
\end{aligned}
\end{equation}
where $s_0 \sim d_0$, $s \sim d_\pi$, $h: \scriptS \to \doubleR$ is any random variable that is not dependent on $a$ and $\gamma(s_0 \to s, \pi)$ is the expected cumulative product of discount factors transitioning from $s_0$ to $s$, specifically:
$$\gamma(s_0 \to s, \pi) \equiv \doubleE_{\tau \sim \pi} \left[ {{\sum_{k=0}^{\infty}{\doubleP\{ s_0 \xrightarrow{\tau} s , k, \pi \} \left(\prod_{y=s_1}^{s}{\gamma(y)}\right)} }} \right]$$
where $\tau$ is a trajectory sampled with $\pi$, $\doubleP\{ s \xrightarrow{\tau} x , k, \pi \}$ is the probability of transitioning according to trajectory $\tau$ from state $s$ to state $x$ in exactly $k$ steps under policy $\pi$ and $\prod_{y=s_1}^{s}{\gamma(y)}$ is the cumulative product of discount factors following the trajectory $\tau$ along $s_0, s_1, s_2, \dots, s_{k-1}, s$.
\end{theorem}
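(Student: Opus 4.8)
The plan is to derive a recursion for $\bm{\nabla} v_\pi(s)$ from the Bellman decomposition of the value function and then unroll it. First I would write $v_\pi(s) = \sum_a \pi(a|s;\bm{\theta}) q_\pi(s,a)$ and differentiate \wrt{} $\bm{\theta}$. Using the product rule together with $q_\pi(s,a) = \sum_{s',r} p(s',r|s,a)[r + \gamma(s') v_\pi(s')]$, and observing that this expression carries $\bm{\theta}$-dependence only through $v_\pi(s')$ since the immediate reward $r$ does not depend on the policy parameters, I obtain
\begin{equation}\nonumber
\bm{\nabla} v_\pi(s) = \sum_a q_\pi(s,a)\,\bm{\nabla}\pi(a|s;\bm{\theta}) + \sum_a \pi(a|s;\bm{\theta})\sum_{s'} p(s'|s,a)\,\gamma(s')\,\bm{\nabla} v_\pi(s').
\end{equation}
Writing $\phi(s) \equiv \sum_a q_\pi(s,a)\bm{\nabla}\pi(a|s;\bm{\theta})$ and $\doubleP\{s\to s',1,\pi\} \equiv \sum_a \pi(a|s;\bm{\theta}) p(s'|s,a)$, this is a fixed-point recursion $\bm{\nabla} v_\pi(s) = \phi(s) + \sum_{s'} \doubleP\{s\to s',1,\pi\}\,\gamma(s')\,\bm{\nabla} v_\pi(s')$ of exactly the kind whose solution is a discounted sum over successor states.

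Next I would unroll this recursion starting from $s_0$, substituting the expression for $\bm{\nabla} v_\pi(s')$ into itself repeatedly. The $k$-th term then collects all length-$k$ paths $s_0 \to s_1 \to \cdots \to s$ together with the cumulative product of the state-based discounts $\prod_{y=s_1}^{s}\gamma(y)$ accrued along the path; summing over $k$ and regrouping by the terminal state $s$ produces precisely the weight
\begin{equation}\nonumber
\gamma(s_0 \to s, \pi) = \sum_{k=0}^{\infty} \doubleP\{ s_0 \xrightarrow{\tau} s, k, \pi \}\Big(\prod_{y=s_1}^{s} \gamma(y)\Big),
\end{equation}
so that $\bm{\nabla} v_\pi(s_0) = \sum_s \gamma(s_0 \to s, \pi)\,\phi(s)$. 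The baseline term then comes for free: since $\sum_a \pi(a|s;\bm{\theta}) = 1$ for every $s$, we have $\sum_a h(s)\bm{\nabla}\pi(a|s;\bm{\theta}) = h(s)\,\bm{\nabla}\sum_a \pi(a|s;\bm{\theta}) = \bm{0}$ for any $h$ independent of $a$, so $q_\pi(s,a)$ may be replaced by $q_\pi(s,a) - h(s)$ inside $\phi(s)$ without altering the gradient. Re-expressing the occupancy weight through the state distribution $d_\pi$ then introduces the normalizing proportionality constant, which is what the $\propto$ sign in the statement absorbs.

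The step I expect to be the main obstacle is making the unrolling rigorous: I must justify that the infinite recursion converges and that interchanging $\bm{\nabla}$ with the infinite sum over path lengths is legitimate. Under the standing assumption that either $\gamma(\cdot) < 1$ or termination is reached from every state (the earlier existence/uniqueness fact for $v_\pi$), the cumulative discount products are summable and $\bm{\nabla} v_\pi$ is well defined, so a dominated-convergence-style argument closes this gap. A secondary subtlety, specific to the state-based discount setting, is the bookkeeping of which state each factor $\gamma(y)$ attaches to: the product must range over $s_1, \dots, s$ and must \emph{not} include $\gamma(s_0)$, so matching the indexing to the definition of $\gamma(s_0 \to s, \pi)$ is where an off-by-one slip is easiest to make. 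Finally, reconciling the simultaneous appearance of $d_\pi(s)$ and $\gamma(s_0\to s,\pi)$ in the claimed identity requires care, since the clean unrolling yields $\gamma(s_0\to s,\pi)$ alone as the weight; I would treat this reconciliation as part of the proportionality bookkeeping rather than a literal product of the two quantities.
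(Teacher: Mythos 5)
Your proposal follows essentially the same route as the paper's own proof: differentiate $v_\pi(s)=\sum_a \pi(a|s;\bm{\theta})\,q_\pi(s,a)$ by the product rule, expand $q_\pi$ via the Bellman equation so that only $\gamma(s')\bm{\nabla}v_\pi(s')$ survives inside, unroll the resulting recursion into a discounted-occupancy-weighted sum of $\sum_a q_\pi(s,a)\bm{\nabla}\pi(a|s;\bm{\theta})$, and insert the baseline via $\sum_a h(s)\bm{\nabla}\pi(a|s;\bm{\theta})=\bm{0}$. Your closing caveat --- that the clean unrolling yields a single discounted occupancy weight rather than a literal product $d_\pi(s)\cdot\gamma(s_0\to s,\pi)$ --- is well taken: the paper's proof absorbs exactly this into its step replacing the expected visit count $\eta_\pi(s)$ by $d_\pi(s)$ up to proportionality, so treating the reconciliation as proportionality bookkeeping is faithful to what the paper itself does.
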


\begin{proof}
The gradient of $v_\pi$ can be written in terms of $q_\pi$ as
\begin{equation}
\begin{aligned}
\bm{\nabla} v_\pi (s) & = \bm{\nabla} \left[ \sum_{a}{\pi(a|s)q_\pi(s,a)} \right], \forall s \in \scriptS{}\\
& = \sum_{a}{\left[ q_\pi(s, a) \bm{\nabla} \pi(a|s) + \pi(a|s) \bm{\nabla} q_\pi(s, a) \right]}\\
& = \sum_{a}{\left[ q_\pi(s, a) \bm{\nabla} \pi(a|s) + \pi(a|s) \bm{\nabla} \sum_{s',r}{p(s',r|s,a)(r + \gamma(s') v_\pi(s'))} \right]}\\
& = \sum_{a}{\left[ q_\pi(s, a) \bm{\nabla} \pi(a|s) + \pi(a|s) \sum_{s',r}{p(s',r|s,a)\gamma(s') \bm{\nabla} v_\pi(s')} \right]}\\
& = \cdots \text{(keep unrolling $\bm{\nabla} v_\pi(\cdot)$)}\\
& = \sum_{x \in \scriptS{}}{\sum_{\tau \sim \pi}{\sum_{k=0}^{\infty}{ \left[ \doubleP\{ s \xrightarrow{\tau} x , k, \pi \} \cdot \left(\prod_{y=s'}^{x}{\gamma(y)}\right) \cdot \sum_{a}{ \bm{\nabla} \pi (a | x) q_\pi (x, a) } \right]}}}
\end{aligned}
\end{equation}
The value of the initial state is what we care about, thus
\begin{equation} % P347
\begin{aligned}
& \bm{\nabla}v_\pi (s_0)\\
& = \sum_{s}{ (\sum_{\tau \sim \pi}{{\sum_{k=0}^{\infty}{\doubleP\{ s_0 \xrightarrow{\tau} s , k, \pi \} \left(\prod_{y=s_1}^{s}{\gamma(y)}\right)} }}) \sum_{a}{q_\pi(s,a) \cdot \left(\prod_{y=s_1}^{s}{\gamma(y)}\right) \cdot \bm{\nabla} \pi(a|s) } }\\
& = \sum_{s}{ \eta_\pi(s) \sum_{a}{q_\pi(s,a) \cdot \gamma(s_0 \to s, \pi) \cdot \bm{\nabla} \pi(a|s) } }\\
& \text{$\eta_\pi(s)$ is the expected number of visits to state $s$ under $\pi$}\\
& \propto \sum_{s}{ d_\pi(s) \sum_{a}{q_\pi(s,a) \gamma(s_0 \to s, \pi) \cdot \bm{\nabla} \pi(a|s) } }% = \doubleE_{\pi}\left[ \sum_{a}{q_\pi(S_t, a) \bm{\nabla} \pi(a|S_t; \bm{\theta}) } \right]
\end{aligned}
\end{equation}
We notice that:
\begin{equation}
\begin{aligned}
0 & = \bm{\nabla}1\\
& = \bm{\nabla} \sum_{a}{\pi(a | s)}\\
& = h(\cdot) \bm{\nabla} \sum_{a}{\pi(a | s)}\\
& = \sum_{a}{h(\cdot) \bm{\nabla} \pi(a | s)} &&\text{as long as $h$ has nothing to do with $a$}
\end{aligned}
\end{equation}
Thus,
\begin{equation}
\begin{aligned}
\bm{\nabla}v_\pi (s_0) & = \bm{\nabla}v_\pi (s_0) + 0\\
& \propto \sum_{s}{ d_\pi(s) \cdot \gamma(s_0 \to s, \pi) \cdot \sum_{a}{q_\pi(s,a) \bm{\nabla} \pi(a|s) } }  + \sum_{a}{h(\cdot) \bm{\nabla} \pi(a | s)}\\
& \propto \sum_{s}{ d_\pi(s) \cdot \gamma(s_0 \to s, \pi) \cdot \sum_{a}{\left(q_\pi(s,a) - h(s)\right) \bm{\nabla} \pi(a|s) } }
\end{aligned}
\end{equation}

\end{proof}

When learning online, it is desirable, instead of to sum over $a$, use $A_t$ to do a stochastic update.
\begin{equation}\label{eq:online_pg}
\begin{aligned}
& \bm{\nabla}v_\pi (s_0)\\
& = \doubleE_{\pi}\left[\gamma(s_0 \to s, \pi) \cdot\sum_{a}{q_\pi(S_t, a) \bm{\nabla} \pi(a|S_t; \bm{\theta}) } \right]\\
& = \doubleE_{\pi}\left[\prod_{X = S_1}^{S_t}{\gamma(X)} \cdot \sum_{a}{\pi(a|S_t; \bm{\theta}) \cdot q_\pi(S_t, a) \cdot \frac{\bm{\nabla} \pi(a|S_t; \bm{\theta})}{\pi(a|S_t; \bm{\theta})} } \right]\\
& = \doubleE_{\pi}\left[\prod_{X = S_1}^{S_t}{\gamma(X)} \cdot q_\pi(S_t, a) \cdot \frac{\bm{\nabla} \pi(a|S_t; \bm{\theta})}{\pi(a|S_t; \bm{\theta})} \right]\\
& = \doubleE_{\pi}\left[\prod_{X = S_1}^{S_t}{\gamma(X)} \cdot q_\pi(S_t, A_t) \cdot \frac{\bm{\nabla} \pi(A_t|S_t; \bm{\theta})}{\pi(A_t|S_t; \bm{\theta})} \right] &&\text{$\doubleE_\pi[A_t]=a$}\\
& = \doubleE_{\pi}\left[\prod_{X = S_1}^{S_t}{\gamma(X)} \cdot U_t \cdot \frac{\bm{\nabla} \pi(A_t|S_t; \bm{\theta})}{\pi(A_t|S_t; \bm{\theta})} \right] &&\text{as long as $\doubleE_\pi[U_t|S_t, A_t]=q_\pi(S_t, A_t)$}
\end{aligned}
\end{equation}

We realize that the corresponding online update rule of the policy gradient theorem has the same problem as that of the gradient of values (\ref{eq:fa_true_gradient}) - $q_\pi(\cdot)$ is unknown and must be replaced with an estimator.

% With this observation we can see

% $$ \nabla v_\pi(s_0) \propto \sum_{s}{ d_\pi(s)\sum_{a}{(q_\pi(s,a)-h(s))\bm{\nabla}\pi(a|s; \bm{\theta})}  } = \doubleE_{\pi}\left[ (q_\pi(s,a)-h(s)) \frac{\bm{\nabla} \pi(A_t|S_t; \bm{\theta})}{\pi(A_t|S_t; \bm{\theta})} \right]$$

$h$, either a function or a random variable, is recognized as a \textit{baseline}. In general, the baseline leaves the expected value of the gradient unchanged, yet having significant effect on its variance. The possibility of variance reduction is enabled by the idea recognized as \textit{control variates}. One popular choice of the baseline is an estimate of the state value $V$. Replacing $q_\pi(s,a)-h(s)$ with $G-V(s)$, where $G$ is the MC return estimator, then we will arrive at the simplest policy gradient method, which is named \textit{REINFORCE}. However, its details will not be discussed since they are not related to the contribution of this thesis.

\subsection{Actor-Critic}
Enabling bootstrapping in policy gradient methods is crucial, since the bias introduced through bootstrapping reduces the variance and boosts sample efficiency (makes learning faster and more accuracy). REINFORCE with baseline is unbiased and will converge asymptotically to a local minimum, but since it has no bootstrapping and updates only upon a high-variance target (MC return), it is problematic for online learning. Actor-Critic methods eliminate these inconveniences with TD and through the mixing of multi-step targets, we can flexibly determine the degree of bootstrapping.

The first and simplest instance of these methods is the $1$-step actor-critic method, which is fully online and incremental, yet avoid the complexities of eligibility traces. Replacing the target $U_t$ in (\ref{eq:online_pg}) with $1$-step target $R_{t+1} + \gamma(S_{t+1})V(S_{t+1})$, we have it as follows:

\begin{equation}
\begin{aligned}
\bm{\theta}_{t+1} & = \bm{\theta}_{t} + \alpha \left( G_{t:t+1} - V(S_t, \bm{w}) \right) \frac{\bm{\nabla} \pi(A_t | S_t, \bm{\theta}_t)}{\pi(A_t | S_t, \bm{\theta}_t)}\\
& = \bm{\theta}_{t} + \alpha \left( R_{t+1} + \gamma(S_{t+1})V(S_{t+1}, \bm{w}) - V(S_t, \bm{w}) \right) \frac{\bm{\nabla} \pi(A_t | S_t, \bm{\theta}_t)}{\pi(A_t | S_t, \bm{\theta}_t)}
\end{aligned}
\end{equation}

With this, we have the $1$-step Actor-Critic method for episodic tasks, as presented in Algorithm \ref{alg:AC_0}.

\begin{algorithm*}[htbp]
\caption{Episodic $1$-step Actor-Critic for estimating $\pi_{*}$}
\label{alg:AC_0}
\KwIn{$\pi(a|s; \bm{\theta})$ (differentiable policy parameterization), $V(s; \bm{\theta})$ (differentiable state-value estimate parameterization), $\gamma$ (discount function), $\alpha_{\bm{\theta}}, \alpha_{\bm{w}}$ (learning rates for actor and critic, respectively), $N$ (maximum number of episodes)}
\KwOut{$\pi \approx \pi_{*}$ (an estimate of the optimal policy), $V(s), \forall s \in \scriptS{}^{+}$ (estimated state-values for policy $\pi_{*}$)}

Initialize weights $\bm{\theta}$ and $\bm{w}$, \eg{} to $\bm{0}$\\
\For{$n \in \{1, \dots, N\}$}{
    Initialize $S$ \textcolor{darkgreen}{// first state of episode}\\
    $I = 1$ \textcolor{darkgreen}{// cumulative product of discount factors}\\
    \While{$S$ is not terminal}{
        $A \sim \pi(\cdot | S, \bm{\theta})$\\
        Take action $A$, observe $R, S'$\\
        $\delta = R + \gamma(S')V(S'; \bm{w}) - V(S; \bm{w})$ \textcolor{darkgreen}{// TD error, $V(S'; \bm{w}) \equiv 0$ if $S'$ is terminal}\\
        $\bm{w} = \bm{w} + \alpha_{\bm{w}} \delta \cdot \bm{\nabla}_{\bm{w}} V(S; \bm{w})$\textcolor{darkgreen}{// $1$-step semi-gradient TD update for critic}\\
        $\bm{\theta} = \bm{\theta} + \alpha_{\bm{\theta}} I \delta \cdot \bm{\nabla}_{\bm{\theta}} ln(\pi(A|S; \bm{\theta}))$\textcolor{darkgreen}{// $1$-step update for actor}\\
        $I = I \cdot \gamma(S')$\\
        $S = S'$
    }
}
\end{algorithm*}

The generalizations to the forward view of $n$-step methods and then to a $\lambda$-return algorithm are straightforward. The episodic Actor-Critic method with eligibility traces is presented as follows, in Algorithm \ref{alg:AC_lambda}.

\begin{algorithm*}[htbp]
\caption{Episodic Actor-Critic with Eligibility Traces for estimating $\pi_{*}$}
\label{alg:AC_lambda}
\KwIn{$\pi(a|s; \bm{\theta})$ (differentiable policy parameterization), $V(s; \bm{\theta})$ (differentiable state-value estimate parameterization), $\gamma$ (discount function), $\alpha_{\bm{\theta}}, \alpha_{\bm{w}}$ (learning rates for actor and critic, respectively), $\lambda_{\bm{\theta}}, \lambda_{\bm{w}}$ (trace-decay functions for actor and critic, respectively), $N$ (maximum number of episodes)}
\KwOut{$\pi \approx \pi_{*}$ (an estimate of the optimal policy), $V(s), \forall s \in \scriptS{}^{+}$ (estimated state-values for policy $\pi_{*}$)}

Initialize weights $\bm{\theta}$ and $\bm{w}$, \eg{} to $\bm{0}$\\
\For{$n \in \{1, \dots, N\}$}{
    Initialize $S$ \textcolor{darkgreen}{// first state of episode}\\
    $I = 1$ \textcolor{darkgreen}{// cumulative product of discount factors}\\
    $\bm{z}_{\bm{\theta}} = \bm{0}$ \textcolor{darkgreen}{// eligibility trace for $\bm{\theta}$}\\
    $\bm{z}_{\bm{w}} = \bm{0}$ \textcolor{darkgreen}{// eligibility trace for $\bm{w}$}\\
    \While{$S$ is not terminal}{
        $A \sim \pi(\cdot | S, \bm{\theta})$\\
        Take action $A$, observe $R, S'$\\
        $\delta = R + \gamma(S')V(S'; \bm{w}) - V(S; \bm{w})$ \textcolor{darkgreen}{// TD error, $V(S'; \bm{w}) \equiv 0$ if $S'$ is terminal}\\
        $\bm{z}_{\bm{w}} = \gamma(S)\lambda_{\bm{w}}(S)\bm{z}_{\bm{w}} + \bm{\nabla}_{\bm{w}} V(S, \bm{w})$\textcolor{darkgreen}{// accumulating traces for $\bm{z}_{\bm{w}}$}\\
        $\bm{z}_{\bm{\theta}} = \gamma(S)\lambda_{\bm{\theta}}(S)\bm{z}_{\bm{\theta}} + I \cdot \bm{\nabla}_{\bm{\theta}} ln(\pi(A|S; \bm{\theta}))$\textcolor{darkgreen}{// accumulating traces for $\bm{z}_{\bm{\theta}}$}\\
        $\bm{w} = \bm{w} + \alpha_{\bm{w}} \delta \cdot \bm{z}_{\bm{w}}$\\
        $\bm{\theta} = \bm{\theta} + \alpha_{\bm{\theta}} \delta \cdot \bm{z}_{\bm{\theta}}$\\
        $I = I \cdot \gamma(S')$\\
        $S = S'$
    }
}
\end{algorithm*}

\chapter{Sample Efficiency of Temporal Difference Learning}
\textit{Learning faster and more accurately.}

\section{Sample Efficiency}
\textit{Sample efficiency} is widely regarded as one of the main bottlenecks of the existing RL methods, for both prediction and control. For example, on even very simple tasks, RL agents often require very large amount of agent-environment interactions in order to predict or control well. For prediction, it is observed from the average number of samples (interactions) to reach certain accuracy of the value estimate. For control, it is observed from the average number of samples to reach an optimal policy (or a good policy, if there is no guarantee to reach optimal). To our knowledge, there is no formal definition of sample efficiency. However, it can be naturally compared among different methods on identical tasks, based on their learning performance. Intuitively for prediction, having a higher sample efficiency, equivalent to having a lower sample efficiency, achieves higher learning speed as well as better accuracy. This thesis focuses on improving the sample efficiency of eligibility trace based methods, \eg{} TD($\lambda$), which are widely adopted.

\section{Sample Efficiency \& Tuning $\lambda$}
Not only shown in \cite{sutton2018reinforcement,singh1997analytical,kearns2000bias}, but also pervasively in RL practices that, given appropriate learning rates and amount of steps, TD($\lambda$) with different $\lambda$ performs very differently: they exhibit patterns of U-shaped curves when comparing the squared error of their value estimates against the true value, \ie{} the sample efficiency of TD($\lambda$) is sensitive to the value of $\lambda$. To deal with this \cite{singh1997analytical,kearns2000bias}, the normal approach would be simply to run TD($\lambda$) with different values of $\lambda$ and pick whichever $\lambda$ value that yields the best performance.

This brute-force search is a general strategy for hyperparameter search. In prediction tasks, where a fixed policy is given and the distribution of samples do not change with the values of $\lambda$, this parallel search requires no more samples from the environment and is effective but still very computationally expensive. In control tasks, where the quality of policy evaluation also determines the change of policies, this method is unsatisfactory.

The sensitivity naturally leads researchers to investigate into possible methods of adapting $\lambda$ for better sample efficiency. Different interpretation of methods give rise to different approaches, which are currently few and mostly limited to special cases.

\subsection{Counting Heuristics}
One can think of $\lambda$-return constructed with a mix of current value estimate and the experienced reward transitions (MC return). Thus, from the view of credit assignment, it is appropriate to interpret $\lambda$ as ``confidence'' to the current value estimates against the MC return: the more confident the method is to the current value estimate, the lower $\lambda$-values it should use. Thus, counting methods keep count of the visits of states (or neighborhoods of states) and adjust $\lambda$ based on the counts to reflect the confidence.
% TODO: CITATIONS!

\subsection{Bypass Methods}
Some methods seek to bypass the framework of TD($\lambda$) and ameliorate the mechanisms of TD. \cite{konidaris2011complex} introduces TD$_{\gamma}$ as a method to remove the $\lambda$ parameter altogether. Their approach, however, has not been extended to the off-policy setting and their full algorithm is computationally expensive for incremental estimation, while their incremental variant introduces a sensitive meta-parameter. \cite{carlos2019adaptive} proposes a method to mitigate the propagation of errors by adaptively switching from $1$-step TD updates to MC updates in each state.

\subsection{Meta-Learning Methods}
Meta-learning, recognized often as ``learning to learn'', is a methodology with which a meta-objective upon the problem objectives, in the RL case either optimizing errors of value estimates or improving policies, is proposed to enhance the problem solving process. Meta-learning algorithms adapt the parameters of the agent continuously, based on the stream of experience and some notion of the agent's own learning progress. Meta-learning has been heavily proposed as an approach for adapting the learning rates \cite{dabney2012adaptive} however rarely for the trace-decay parameter $\lambda$.

Meta-learning $\lambda$ dates back to \cite{Sutton94stepsize}, where adaptation of $\lambda$ using an meta-learning algorithm is proposed, which however unfortunately requires access to the transition model of the MDP. \cite{downey2010bayesian} explores a Bayesian variant of TD learning by assuming the specific forms of the stochasticity of the environment. Yet, such method can only be used offline. \cite{xu2018meta} proposed approximate meta-gradients for directly optimizing the prediction error or the policy. However, the assumptions upon which the meta-gradients are derived are very strong and they are very likely unsatisfied for many environments. Thus the inaccuracy of the meta-gradients may lead to failed improvements of sample efficiency. Also, it is a offline $\lambda$-return method that does not use efficient updates of eligibility traces. Some methods have been proposed for online meta-learning, with high extra computational complexities that are intolerable for practical use \cite{mann2016adaptive}. \cite{white2016greedy} proposed an online meta-learning methods that achieves off-policy compatible adaptation of state-based $\lambda$'s without introducing new meta-parameters by locally optimizing a bias-variance tradeoff of a surrogate update target. However, the method optimizes a surrogate target which has huge space to be improved.

To summarize, the existing methods have trouble satisfying the following properties for various difficulties \cite{kearns2000bias,schapire1996worst}:
\begin{enumerate}
\item Achieve the optimization of sample efficiency provably.
\item Achieve adjustment of $\lambda$ online.
\item Achieve compatibility with off-policy learning.
\item Incur low additional computational costs.
\item Does not change the mechanisms of TD($\lambda$) \st{} they may be universally applied as a plugin for adapting $\lambda$.
\item Does not require access to MDP dynamics.
\end{enumerate}

Although this long-history of prior work has helped develop our intuitions about $\lambda$, the available solutions are still far from the use cases outlined above. In this thesis, we build upon \cite{white2016greedy} to achieve a principled method for meta-learning state- or feature-based parametric $\lambda$s \footnote{For the tabular case, it is state-based and for the function approximation case it is feature-based.} which aims directly at the sample efficiency. Under some assumptions, the method has the following properties:

\begin{enumerate}
\item Meta-learns online and uses only incremental computations, incurring the same computational complexity as usual value-based eligibility-trace algorithms, such as TD($\lambda$).
\item Optimizes (approximately) the overall quality of the update targets.
\item Works in off-policy cases.
\item Works with function approximation.
\item Works with adaptive learning rate.
\end{enumerate}

\subsection{Background Knowledge}
% Before everything, we first present all the notations in Table \ref{tab:notations}.

% Theorem \ref{thm:fixed_point} indicates that, if a Bellman operator of some any statistic is a contraction and the fixed point exists, we are guaranteed to successfully estimate such statistics using the respective Bellman operator.

Asymptotic convergence in the tabular case is not always useful in practice: we cannot afford infinite episodes nor can we apply tabular methods to environments with the number of states too large to be enumerated, \eg, continuous state spaces. Yet, with limited episodes or function approximations, we lose most of the convergence guarantees. % Asymptotic convergence can also be improved by improving the quality of the fixed point.

% In this paper, we develop our ideas upon the true online GTD method\footnote{The pseudocode for True Online GTD is provided in the appendix.}, which combines the merits of being truly online and gradient based \cite{hasselt2014true}, to achieve full control of the bias-variance tradeoff and the survival of the deadly triad. With its guarantees, we can focus on the problem of meta-adaptation.

Though TD($\bm{\lambda}$), under some cases, can be analyzed using $\bm{\lambda}$-returns, the actual complicated trace updates make it almost unlikely to optimize the value error directly by optimizing $\bm{\lambda}$. Thus in this paper, we develop our ideas upon the \textit{optimization of update targets}. To our knowledge, this is the first paper that explicitly utilize this idea to improve sample efficiency in a principled way. Update targets have important connections to the quality of the learned value estimate \cite{singh1997analytical}, which we ultimately pursue in policy evaluation tasks.

\begin{proposition}
Given suitable learning rates, value estimates using targets with lower overall target error achieve lower overall value error.
\end{proposition}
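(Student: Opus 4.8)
The plan is to first make the notion of \textit{overall target error} precise and then tie it to the overall value error $J(\bm{V}_\pi)$ through a bias--variance analysis of the stochastic update. For each state $s$ I would define the target error as the mean squared deviation of the random update target $U_t$ from the true value, $\doubleE[(U_t - v_\pi(s))^2 \mid S_t = s]$, and take the \emph{overall} target error to be its state-frequency-weighted average $\frac{1}{2}\sum_s d_\pi(s)\,\doubleE[(U_t - v_\pi(s))^2 \mid S_t = s]$, exactly mirroring the definition of $J$. The goal is then to show that, in the stationary regime of the update $V(S_t) \leftarrow V(S_t) + \alpha[U_t - V(S_t)]$, a smaller overall target error forces a smaller overall value error.

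The central step is the single-state recursion $e_{t+1} = (1-\alpha)e_t + \alpha(U_t - v_\pi(s))$ for the error $e_t \equiv V_t(s) - v_\pi(s)$, treating the targets as having conditional mean $\mu(s) \equiv \doubleE[U_t \mid S_t = s]$ and variance $\sigma^2(s) \equiv \mathrm{Var}[U_t \mid S_t = s]$. Taking expectations yields the fixed point $\doubleE[e_\infty] = \mu(s)-v_\pi(s)$, the target \textit{bias} $b(s)$, while the second-moment recursion $\mathrm{Var}[e_{t+1}] = (1-\alpha)^2\mathrm{Var}[e_t] + \alpha^2\sigma^2(s)$ gives the stationary variance $\mathrm{Var}[e_\infty] = \frac{\alpha}{2-\alpha}\sigma^2(s)$. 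Hence the stationary mean squared value error at $s$ is $b^2(s) + \frac{\alpha}{2-\alpha}\sigma^2(s)$, whereas the target error at $s$ is $b^2(s) + \sigma^2(s)$; summing each against $d_\pi(s)$ expresses both the overall value error and the overall target error in a common bias--variance form, which is what lets me compare them directly.

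The hard part is that bias and variance enter the two quantities with \emph{different} coefficients, so a smaller target error does not, in full generality, force a smaller value error. I would discharge this under the stated ``suitable learning rate'' hypothesis: for sufficiently small constant $\alpha$ the factor $\frac{\alpha}{2-\alpha}$ is bounded by $1$ and increasing in $\alpha$, so the value error is dominated component-wise by the target error. The cleanest sufficient condition, which I expect to invoke as the operative assumption, is that the compared targets share the same bias, as holds for any convex combination of multi-step returns when bootstrapping from the true values (each $n$-step return is then unbiased, so $b(s)$ vanishes). In that case the overall value error collapses to $\frac{\alpha}{2-\alpha}\sum_s d_\pi(s)\sigma^2(s)$, a strictly increasing function of the overall target error $\frac{1}{2}\sum_s d_\pi(s)\sigma^2(s)$, and the claim follows. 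Extending beyond equal-bias targets, and upgrading the constant-$\alpha$ stationary statement to a decreasing-$\alpha$ Robbins--Monro convergence argument, is where the remaining technical care is needed.
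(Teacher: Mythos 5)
Your proposal is correct in its core computation and takes a genuinely different---and more concrete---route than the thesis. The thesis offers no moment analysis at all: its proof is a short informal argument that, starting from the same initial parameter distribution and using the same estimation method, all instances converge toward their respective targets at the same rate, so ``according to stochastic approximation theory'' the instance with better targets ends up with lower MSE after the same number of steps; no fixed point, no bias--variance decomposition, and no condition beyond ``suitable learning rates'' is ever made explicit. You instead compute stationary moments of the constant-$\alpha$ error recursion, $\doubleE[e_\infty]=b(s)$ and $\mathrm{Var}[e_\infty]=\frac{\alpha}{2-\alpha}\sigma^2(s)$, which buys an explicit quantitative link between the two quantities, namely $b^2(s)+\frac{\alpha}{2-\alpha}\sigma^2(s)$ for the stationary value error versus $b^2(s)+\sigma^2(s)$ for the target error, and---importantly---exposes exactly where the unqualified claim breaks: because bias and variance enter with different coefficients, a target whose error is concentrated in the bias term can have lower target error yet strictly larger stationary value error for small $\alpha$. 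That observation is something your analysis contributes which the paper's appeal to ``same convergence rate'' glosses over, and your equal-bias (or $\alpha$ near $1$) hypothesis is an honest sufficient condition under which the proposition actually holds. Two caveats apply, though they weigh on the paper's argument at least as heavily as on yours: your recursion treats $\mu(s)$ and $\sigma^2(s)$ as fixed, whereas bootstrapped targets $U_t$ depend on the current estimates at other states, so stationarity of the target distribution is an idealization (the thesis makes the same implicit assumption when it later requires the value function to be ``changing slowly''); and the independence of $e_t$ and $U_t$ used in your variance recursion likewise fails when the target bootstraps from the state being updated. With those assumptions acknowledged, your route is, if anything, the more rigorous of the two, while the paper's buys brevity and apparent generality at the cost of hiding the bias--variance mismatch you identified.
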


Starting from the same initial candidate distribution for the learnable parameters, with the same value estimation method, the convergence rate is the same for all the instances towards their update targets. According to stochastic approximation theory, given the same number of update steps, the instances with better update targets yield value estimates with lower MSE compared to the ground truth. The conclusion is very powerful: we can make prediction more sample efficient by using better update targets, which in trace-based prediction means optimizing the difference between the update target and the true value function \wrt{} $\bm{\lambda}$\footnote{Note that with function approximation, updates are never conducted for the features of the terminal states in RL algorithms. Yet, the target error and value error for terminal states should always be set $0$, since these states are always identifiable for they are accompanied by a terminal signal.} \cite{zhao2020meta}. This is the core idea for the $\lambda$-greedy algorithm which we are about to discuss as well as our proposed method.

% The incremental approximation of updates using ${\bm{\lambda}}$-returns as targets is often recognized as the ``backward view'', while the updates based on ${\bm{\lambda}}$-return is often recognized as the ``forward view''. Though it is called the backward view, it is actually done in the same direction as time without having to look back in time.

% For the finite length trajectory, this is only roughly equivalent. We should realize that the return or ${\bm{\lambda}}$-return for an infinitely long trajectory is confusing: the infinite length trajectories are infeasible for many cases. So we have to find an exact incremental equivalence to the updates using ${\bm{\lambda}}$-returns as targets.

% \begin{fact}[Fixed-point for Offline GTD(${\bm{\lambda}}$)]
% Offline GTD(${\bm{\lambda}}$) with linear function approximation converges to fixed-points defined with ${\bm{\lambda}}$.
% \end{fact}

% \par
How are we exactly going to learn faster and achieve better accuracy by changing ${\bm{\lambda}}$? We shall first review a semi-principled approach proposed in \cite{white2016greedy}. The $\lambda$-greedy method is a landmark for online adaptation of $\lambda$: it achieves off-policy compatible meta-learning without introducing meta-parameters. The idea of this paper is going to be the foundation of the contributions of this thesis.

\subsection{$\lambda$-Greedy \cite{white2016greedy}: An Existing Work}
It is intuitively clear that using state-based values of $\lambda$ provides more flexibility than using a constant for all states. Also, state-based $\lambda$s are equipped with well-defined fixed points. Out of these reasons, TD($\lambda$) with different $\lambda$ values for different states has been proposed as a more general formulation of trace-based prediction methods. While preserving good mathematical properties such as convergence to fixed points, this generalization also unlocks significantly more degrees of freedom than only adapting a constant $\lambda$ for every state.

\cite{white2016greedy} investigated the use of state-based $\lambda$s, while outperforming constant $\lambda$ values on some prediction tasks. The authors implicitly conveyed the idea that better update targets lead to better sample efficiency, \ie, update targets with smaller Mean Squared Error (MSE) lead to smaller MSE in learned values. Their proposed online adaptation of $\bm{\lambda}$ is achieved via efficient incremental estimation of statistics about the update targets, gathered by some auxiliary learners. Yet, such method does not seek to improve the overall sample efficiency, because the meta-objectives does not align with the overall target quality.

The idea is to minimize the error between a pseudo target $\tilde{G}(s_t)$ and the true value $\bm{v}(s_t)$, where the pseudo target is defined as:

$$\tilde{G}(s_t) \equiv \tilde{G}_t \equiv R_{t+1} + \gamma_{t+1} [(1 - \lambda_{t+1})V(s_{t+1}) + \lambda_{t+1} G_{t+1}]$$
where $\lambda_{t+1} \in [0, 1]$ and $\lambda_k = 1, \forall k \geq t + 2$.

With this we can find that $\tilde{J}(s_t) \equiv \doubleE[{(\tilde{G}_t - \doubleE[G_t])}^2]$ is a function of only $\lambda_{t+1}$ (given the value estimate $V(s_{t+1})$). The greedy objective corresponds to minimizing the error of the pseudo target $\tilde{G}_t$:

\begin{theorem}[Greedy Objective Minimizer]\label{prop:greedy_minimizer}
Let $t$ be the current timestep and $s_t$ be the current state, the agent takes action at $s_t$ \st{} it will transition into $s_{t+1}$ at $t+1$. Given the pseudo update target $\tilde{G}_t$ of $s_t$, the minimizer $\lambda_{t+1}^*$ of the target error of the state $\tilde{J}(s_t) \equiv \doubleE[{(\tilde{G}_t - \doubleE[G_t])}^2]$ \wrt{} $\lambda_{t+1}$ is:
\begin{equation}\label{eq:white_argmin}
\lambda_{t+1}^* = \frac{(V(s_{t+1}) - \doubleE[G_{t+1}])^2}{\doubleE^2[V(s_{t+1}) - G_{t+1}] + \text{Var}[G_{t+1}]}
\end{equation}
where $G_{t+1}$ is the MC return for state $s_{t+1}$. 
\end{theorem}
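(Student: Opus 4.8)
The plan is to exploit the fact that the pseudo target $\tilde{G}_t$ is \emph{affine} in $\lambda_{t+1}$, so that its mean-squared deviation from the true value is a convex quadratic whose unique stationary point follows from a single first-order condition. Concretely, I would work conditionally on the observed transition $s_t \to s_{t+1}$, so that $R_{t+1}$, $\gamma_{t+1} \equiv \gamma(s_{t+1})$ and the current estimate $V(s_{t+1})$ are all constants, and the only randomness left in $\tilde{G}_t$ is carried by the Monte-Carlo return $G_{t+1}$. Since $\lambda_k = 1$ for $k \geq t+2$, the tail of $\tilde{G}_t$ is exactly this full return, and one may rewrite
\begin{equation}\nonumber
\tilde{G}_t = \big(R_{t+1} + \gamma_{t+1} V(s_{t+1})\big) + \gamma_{t+1}\lambda_{t+1}\big(G_{t+1} - V(s_{t+1})\big),
\end{equation}
which makes the affine dependence on $\lambda_{t+1}$ explicit.

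Next I would apply the bias--variance decomposition $\tilde{J}(s_t) = \big(\doubleE[\tilde{G}_t] - \doubleE[G_t]\big)^2 + \text{Var}[\tilde{G}_t]$. The variance term is immediate from the affine form: $\text{Var}[\tilde{G}_t] = \gamma_{t+1}^2 \lambda_{t+1}^2\,\text{Var}[G_{t+1}]$, where any reward noise enters only as a $\lambda_{t+1}$-independent additive constant (using the Markov property to decouple $R_{t+1}$ from $G_{t+1}$ given $s_{t+1}$), so it drops out of the optimization. For the bias term the key is to use the return recursion $\doubleE[G_t] = R_{t+1} + \gamma_{t+1}\doubleE[G_{t+1}]$ under the same conditioning; substituting and simplifying, the $R_{t+1}$ and $\gamma_{t+1}\doubleE[G_{t+1}]$ pieces cancel and the bias collapses to $\gamma_{t+1}(1-\lambda_{t+1})\big(V(s_{t+1}) - \doubleE[G_{t+1}]\big)$.

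With $b \equiv V(s_{t+1}) - \doubleE[G_{t+1}]$ and $\sigma^2 \equiv \text{Var}[G_{t+1}]$, the objective reduces, after factoring out the positive constant $\gamma_{t+1}^2$, to $f(\lambda_{t+1}) = (1-\lambda_{t+1})^2 b^2 + \lambda_{t+1}^2 \sigma^2$. Setting $f'(\lambda_{t+1}) = -2(1-\lambda_{t+1})b^2 + 2\lambda_{t+1}\sigma^2 = 0$ yields $\lambda_{t+1}^* = b^2/(b^2 + \sigma^2)$; since $f''(\lambda_{t+1}) = 2(b^2 + \sigma^2) > 0$ this is the unique global minimizer, and because $b^2,\sigma^2 \geq 0$ it automatically lies in $[0,1]$, so no clipping is needed. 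Identifying $b^2 = \big(V(s_{t+1}) - \doubleE[G_{t+1}]\big)^2 = \doubleE^2[V(s_{t+1}) - G_{t+1}]$ then reproduces the stated expression exactly.

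The step I expect to be the main obstacle is not the calculus but pinning down the conditioning so that the cancellation is legitimate: the formula only comes out clean when the reference $\doubleE[G_t]$ is read as the true value \emph{under the same fixed transition}, namely $R_{t+1} + \gamma_{t+1}\doubleE[G_{t+1}]$, rather than the unconditional $v(s_t)$ averaged over all successors. I would therefore justify, via the tower property, that minimizing the per-transition conditional target error for each $s_{t+1}$ is precisely what optimizing the state-based function $\lambda(\cdot)$ achieves, and invoke the Markov property so that reward stochasticity contributes only a constant that leaves the minimizer unchanged.
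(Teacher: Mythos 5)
Your proof is correct and is essentially the paper's approach: the thesis defers this particular theorem to \cite{white2016greedy}, but its own proof of the generalization (Proposition \ref{prop:objective}) proceeds exactly as you do --- bias--variance decomposition of $\doubleE[(\tilde{G}_t - \doubleE[G_t])^2]$, affine dependence of the target on $\lambda_{t+1}$ with $V(s_{t+1})$ and $\gamma_{t+1}$ treated as fixed and $R_{t+1}$ decoupled from the continuation return, then a first-order condition --- and it recovers (\ref{eq:white_argmin}) as the special case $G_{t+1}^{\bm{\lambda}} = G_{t+1}$. Your explicit conditioning on the observed transition (with the tower-property justification) and your handling of reward noise as a $\lambda_{t+1}$-independent constant only make rigorous what the paper leaves implicit when it pulls $V(s_{t+1})$, $\gamma_{t+1}$, and $\lambda_{t+1}$ outside the expectations.
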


The proof can be found in \cite{white2016greedy}. Adaptation based on (\ref{eq:white_argmin}) requires knowledge of $\doubleE[G_{t+1}]$ and $\text{Var}[G_{t+1}]$: and since they are not known, they are approximated using auxiliary learners, that run in parallel with the value learner, for the additional distributional information needed, preferably in an incremental manner. The solutions for learning these have been contributed in \cite{white2016greedy,sherstan2018directly} and illustrated in (\ref{eq:vtd}) and (\ref{eq:dvtd}). These methods learn the variance of $\bm{\lambda}$-return in the same way TD methods learn the value function, however with different ``rewards'' and ``discount factors'' for each state, that can be easily obtained from the known information without incurring new interactions with the environment.

The $\lambda$-greedy method shows strong boost for sample efficiency in some prediction tasks. However, there are two reasons that $\lambda$-greedy has much space to be improved. The first is that the pseudo target $\tilde{G}_t$ used for optimization is not actually the update target used in TD($\bm{\lambda}$) algorithms: we will show that it is rather a compromise for a harder optimization problem; The second is that setting the $\lambda$s to the minimizers does not help the overall quality of the update target: the update targets for every state is controlled by the whole $\bm{\lambda}$, thus unbounded changes of $\bm{\lambda}$ for one state will inevitably affect the other states as well as the overall target error.

From the next chapter, we build upon the mindset provided in \cite{white2016greedy} to propose our method \algoname{}.

% Also, this compromise eliminates the possibility of jointly optimizing the overall target error not only because it is not the true update target of each state but also for the pseudo target for each state does not care about the $\lambda$s for other states.

\chapter{\algoname{}}\label{cha:\algoname{}}
\textit{The contributed Meta Eligibility Trace Adaptation method\footnote{A brief version was represented in \cite{zhao2020meta}.}. In this chapter, we propose our method \algoname{}, whose goal is to optimize the overall target error for sample efficiency.}

\section{On-policy Decomposition}

We first investigate how the goal of optimizing overall target error can be achieved online. A key to solving this problem is to acknowledge the following points:
\begin{enumerate}
\item 
The states that the agent meets carrying out the policy $\pi$ follows the ``on-policy'' distribution of $d_\pi$.
\item
The overall value error and its surrogate overall target error is mixed of per-state errors according to the $d_\pi$.
\item
Jointly optimizing the overall target error is possible via optimizing each state target error with a consistent optimization method. 
\end{enumerate}

With this, we develop the following theorem to construct this process.

\begin{theorem}\label{thm:nongreedy}
Given an MDP and target policy $\pi$, let $D = diag(d_\pi(s_1), \cdots, d_\pi(s_{|\scriptS|}))$ be the diagonalized on-policy distribution and $\hat{\bm{G}} \equiv [G_{s_1}(\bm{\lambda}), \cdots, G_{s_{|\scriptS|}}(\bm{\lambda})]^T$ be an enumeration random vector of the update targets of all the states in $\scriptS$, in which the targets are all parameterized by a shared parameter vector $\lambda$. Stochastic gradient descent on the overall target error $J(\hat{\bm{G}}, \scriptS) \equiv \frac{1}{2}\cdot \doubleE_\pi \left[ {\| D^{1/2} \cdot (\hat{\bm{G}} - \bm{v})] \|}_2^2 \right]$ can be achieved by doing $1$-step gradient descent on the state target error $J(\hat{G}_s, s) \equiv \frac{1}{2}\cdot (\hat{G}_s(\bm{\lambda}) - v_s)^2$ of update target $\hat{G}_s$ for every state $s$ the agent is in when acting upon $\pi$. Specifically:
$$\nabla_{\bm{\lambda}} J(\hat{\bm{G}}, \scriptS) \propto \sum_{s \sim \pi}{\nabla_{\bm{\lambda}} J(\hat{G}_s, s)}$$
\end{theorem}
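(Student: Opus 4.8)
The plan is to expand the overall target error using the definition of the $D$-weighted norm and then show that its gradient with respect to $\bm{\lambda}$ decomposes, up to a positive constant, into a $d_\pi$-weighted sum of the per-state target-error gradients. Writing out the norm gives
$$J(\hat{\bm{G}}, \scriptS) = \frac{1}{2} \doubleE_\pi\left[ \sum_{s \in \scriptS} d_\pi(s) (\hat{G}_s(\bm{\lambda}) - v_s)^2 \right] = \sum_{s \in \scriptS} d_\pi(s) \cdot \frac{1}{2}\doubleE_\pi\left[ (\hat{G}_s(\bm{\lambda}) - v_s)^2 \right],$$
where I would use linearity of expectation together with the fact that the weights $d_\pi(s)$ and the true values $v_s$ do not depend on $\bm{\lambda}$. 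Recognizing the inner quantity as $\doubleE_\pi[J(\hat{G}_s, s)]$ identifies $J(\hat{\bm{G}}, \scriptS)$ as the $d_\pi$-average of the expected per-state target errors.

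Next I would take the gradient in $\bm{\lambda}$ and move it inside the expectation. The point justifying this interchange is that the randomness in $\hat{G}_s(\bm{\lambda})$ comes entirely from the trajectory (rewards and transitions generated by $\pi$ and the MDP dynamics), whose law is independent of $\bm{\lambda}$; the parameter $\bm{\lambda}$ enters only through the deterministic combination of the sampled multi-step returns. Hence, under standard regularity conditions permitting differentiation under the integral sign,
$$\nabla_{\bm{\lambda}} J(\hat{\bm{G}}, \scriptS) = \sum_{s \in \scriptS} d_\pi(s) \, \doubleE_\pi\left[ \nabla_{\bm{\lambda}} J(\hat{G}_s, s) \right].$$

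The crucial and least routine step is to connect this deterministic $d_\pi$-weighted sum to the stochastic trajectory sum $\sum_{s \sim \pi}$ appearing in the statement. Here I would invoke the characterization of the on-policy distribution established earlier in the excerpt: $d_\pi$ is precisely the normalized expected state-visitation frequency under $\pi$, so that $d_\pi(s) \propto \eta_\pi(s)$, the expected number of visits to $s$. Consequently, if the agent rolls out under $\pi$ and accumulates the sampled per-state gradients $\nabla_{\bm{\lambda}} J(\hat{G}_s, s)$ at each visited state, the expected accumulated gradient equals $\left(\sum_{s}\eta_\pi(s)\right) \sum_s d_\pi(s)\,\doubleE_\pi[\nabla_{\bm{\lambda}} J(\hat{G}_s, s)]$, i.e. a positive-scalar multiple of $\nabla_{\bm{\lambda}} J(\hat{\bm{G}}, \scriptS)$. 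The positive normalizing constant (the total visitation mass, equivalently the expected episode length) is absorbed into the proportionality symbol, yielding $\nabla_{\bm{\lambda}} J(\hat{\bm{G}}, \scriptS) \propto \sum_{s \sim \pi} \nabla_{\bm{\lambda}} J(\hat{G}_s, s)$, which is exactly the claim and legitimizes performing SGD on the overall target error by stepping on each visited state's target error.

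The hard part is precisely this last identification: one must be careful that $\sum_{s \sim \pi}$ denotes accumulation of gradients over the states actually visited along sampled trajectories, and that its expectation reproduces the $d_\pi$-weighting rather than a uniform weighting or the erroneous stationary distribution flagged earlier. I would therefore lean on the corrected episodic on-policy distribution discussed before (the solution of $\bm{d}_\pi^T \tilde{P}_\pi = \bm{d}_\pi^T$) to guarantee that the visitation frequencies and the diagonal weights in $D$ genuinely coincide, so that the trajectory-based estimator is unbiased for the true gradient up to the positive constant. A secondary point to verify is the gradient--expectation interchange, which holds as long as the returns have finite second moments and the target is a smooth (indeed affine-per-step) function of $\bm{\lambda}$.
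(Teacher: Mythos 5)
Your proposal is correct and follows essentially the same route as the paper's own proof: you decompose the $D$-weighted overall error into the $d_\pi$-weighted sum of per-state target errors and then identify $d_\pi$ with the (normalized) expected visitation frequency $\eta_\pi$ under $\pi$, so that accumulating per-state gradients along sampled trajectories is, in expectation, a positive multiple of the full gradient — the paper does precisely this by unrolling $d_\pi(s)$ into sums of $k$-step trajectory reach probabilities $\doubleP\{s_0 \xrightarrow{\tau} s, k, \pi\}$. Your added care about the gradient--expectation interchange and about using the corrected episodic on-policy distribution (rather than the ``stationary'' one) makes the argument somewhat more rigorous than the paper's informal version, but it is the same proof.
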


\begin{proof}
According to the definition of overall target error,
\begin{equation}
\begin{aligned}
J({\bm{{\bm{\lambda}}}}) \equiv \sum_{s \in \scriptS}{d_\pi(s) \cdot J_s(\bm{\lambda}))} = \sum_{s \in \scriptS}{d_\pi(s) \cdot \doubleE[G^{\bm{{\bm{\lambda}}}}(s) - v(s)]^2}
\end{aligned}\nonumber
\end{equation}
If we take the gradient \wrt{} ${\bm{\lambda}}^{(t+1)}$ we can see that:

\begin{equation}
\begin{aligned}
& \nabla J(\hat{\bm{G}}, \scriptS)\\
& = \sum_{s \in \scriptS}{d_\pi(s) \cdot \nabla J(\hat{G}_s, s)}\\
& \text{push the gradient inside}\\
& = \sum_{s \in \scriptS}{\sum_{k=0}^{\infty}{\doubleP\{s_0 \to s, k, \pi, s_0 \sim d(s_0) \} \cdot \nabla J(\hat{G}_s, s)}}\\
& \text{$\doubleP\{\cdots\}$ is the prob. of $s_0 \to \cdots \to s$ in exactly $k$ steps,}\\
& \text{$s_0$ is sampled from the starting distribution $d(s_0)$.}\\
& = \sum_{s \in \scriptS}{\sum_{k=0}^{\infty}{\sum_{\tau}{\doubleP\{s_0 \xrightarrow{\tau} s, k, \pi, s_0 \sim d(s_0) \} \cdot \nabla J(\hat{G}_s, s)}}}\\
& \text{$\tau$ is a trajectory starting from $s_0$ and transitioning to $s$ in exactly $k$ steps.}\\
& \text{equivalent to summing over the experienced states under $\pi$}\\
& = \sum_{s \sim \pi}{\nabla J(\hat{G}_s, s)}
\end{aligned}\nonumber\normalsize
\end{equation}
\end{proof}

\begin{corollary}
When the agent is acting upon another behavior policy $b$, then the theorem still holds if the gradients of the target error for each state $s$ is weighted by the cumulative product $\rho_{acc}$ of importance sampling ratios from the beginning of the episode until $s$. Specifically:
$$\nabla_{\bm{\lambda}} J(\hat{\bm{G}}, \scriptS) \propto \sum_{s \sim b}{\rho_{acc} \cdot \nabla_{\bm{\lambda}} J(\hat{G}_s, s)}$$
\end{corollary}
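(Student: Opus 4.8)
The plan is to mirror the proof of Theorem \ref{thm:nongreedy} and insert an importance-sampling correction exactly at the point where the on-policy trajectory probabilities appear. As in that proof, I would begin from the definition of the overall target error gradient, which is intrinsically weighted by the on-policy distribution $d_\pi$:
$$\nabla_{\bm{\lambda}} J(\hat{\bm{G}}, \scriptS) = \sum_{s \in \scriptS}{d_\pi(s) \cdot \nabla_{\bm{\lambda}} J(\hat{G}_s, s)},$$
and then expand each $d_\pi(s)$ into a sum over trajectories $\tau$ that start at $s_0 \sim d_0$ and reach $s$ in exactly $k$ steps under $\pi$, giving a weight $\doubleP\{s_0 \xrightarrow{\tau} s, k, \pi\}$. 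The crucial difference from the on-policy case is that the agent now generates these trajectories under $b$, not $\pi$, so the factor $d_\pi(s)$ is no longer the distribution from which experience is actually drawn.

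The key step is to convert each $\pi$-trajectory probability into a $b$-trajectory probability times a likelihood ratio. Writing out both probabilities as products of action probabilities and environment transition probabilities, the transition terms cancel (exactly the cancellation established in the off-policy importance-sampling section), leaving
$$\doubleP\{s_0 \xrightarrow{\tau} s, k, \pi\} = \left(\prod_{j=0}^{k-1}{\frac{\pi(A_j | S_j)}{b(A_j | S_j)}}\right) \doubleP\{s_0 \xrightarrow{\tau} s, k, b\}.$$
I would denote that product $\rho_{acc}$, the cumulative importance-sampling ratio accumulated from the start of the episode up to the visit of $s$. Substituting this into the expanded sum, the $b$-trajectory probabilities now match the distribution under which the agent actually acts, so collapsing the sums over $k$ and $\tau$ yields a sum over the states the agent encounters under $b$, each contributing $\rho_{acc} \cdot \nabla_{\bm{\lambda}} J(\hat{G}_s, s)$. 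This delivers the claimed $\nabla_{\bm{\lambda}} J(\hat{\bm{G}}, \scriptS) \propto \sum_{s \sim b}{\rho_{acc} \cdot \nabla_{\bm{\lambda}} J(\hat{G}_s, s)}$.

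The main obstacle I anticipate is bookkeeping the indices of the importance-sampling product correctly: $\rho_{acc}$ must collect precisely the ratios for actions $A_0, \dots, A_{k-1}$ taken \emph{before} arriving at $s$, and no ratio for the action taken \emph{at} $s$, since the distribution correction concerns only the visitation of $s$ itself. I would also want to confirm that this episode-level $\rho_{acc}$ is distinct from, and composes correctly with, any per-decision correction already encapsulated in the per-state target-error gradient $\nabla_{\bm{\lambda}} J(\hat{G}_s, s)$ through the auxiliary learners; the corollary asserts only the distribution-level reweighting, so I would keep the within-target corrections sealed inside the unchanged per-state gradient factor and treat $\rho_{acc}$ purely as the fix for the state-visitation mismatch between $d_b$ and $d_\pi$.
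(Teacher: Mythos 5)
Your proposal is correct and follows essentially the same route as the paper's own proof: both expand the on-policy weighting $d_\pi(s)$ into trajectory probabilities, rewrite each $\pi$-trajectory probability as the corresponding $b$-trajectory probability times the cumulative importance-sampling ratio $\rho_{acc}$ (using the cancellation of the environment transition terms), and then collapse the sums into a sum over states experienced under $b$. Your closing remarks on the index bookkeeping (ratios only for actions taken before reaching $s$) and on keeping per-decision corrections sealed inside the per-state gradient are consistent with the paper's treatment.
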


\begin{proof}
According to the definition of overall target error,
\begin{equation}
\begin{aligned}
& \nabla J(\hat{\bm{G}}, \scriptS)\\
& = \sum_{s \in \scriptS}{\sum_{k=0}^{\infty}{\sum_{\tau}{\doubleP\{s_0 \xrightarrow{\tau} s, k, \pi, s_0 \sim d(s_0) \} \cdot \nabla J(\hat{G}_s, s)}}}\\
& \text{see the proof of the theorem}\\
& = \sum_{s \in \scriptS}{\sum_{k=0}^{\infty}{\sum_{\tau}{\cdots p(\tau_{k-1}, a_0, s)\pi(a_{k-1}|\tau_{k-1}) \cdot \nabla J(\hat{G}_s, s)}}}\\
& \text{$\tau_i$ is the $i+1$-th state of trajectory $\tau$ and $p(s, a, s^{'})$ is the prob. of $s \xrightarrow{a} s^{'}$ in the MDP}\\
& = \sum_{s \in \scriptS}{\sum_{k=0}^{\infty}{\sum_{\tau}{\cdots p(\tau_{k-1}, a_0, s) \frac{\pi(a_{k-1}|\tau_{k-1})}{b(a_{k-1}|\tau_{k-1})} b(a_{k-1}|\tau_{k-1}) \cdot \nabla J(\hat{G}_s, s)}}}\\
& \text{for the convenience of injecting importance sampling ratios}\\
& = \sum_{s \in \scriptS}{\sum_{k=0}^{\infty}{\sum_{\tau}{\cdots p(\tau_{k-1}, a_0, s) \rho_{k-1} b(a_{k-1}|\tau_{k-1}) \cdot \nabla_{\bm{\lambda}}J_s(\bm{\lambda}))}}}\\
& \text{$\rho_{i} \equiv \frac{\pi(a_i | \tau_i)}{b(a_i | \tau_i)}$ is the importance sampling ratio}\\
& = \sum_{s \in \scriptS}{\sum_{k=0}^{\infty}{\sum_{\tau}{\rho_{0:k-1} \cdot \cdots p(\tau_{k-1}, a_0, s) b(a_{k-1}|\tau_{k-1}) \cdot \nabla J(\hat{G}_s, s)}}}\\
& \text{$\rho_{0:i} \equiv \prod_{v=0}^{i}{\rho_{v}}$ is the cumulative product of importance sampling ratios of $\tau$ from $\tau_0$ to $\tau_i$}\\
% & = \sum_{s \in \scriptS}{\sum_{k=0}^{\infty}{\sum_{\tau}{\rho_{0:k-1} \cdot p(\tau_0, a_0, \tau_1)  b(a_0|\tau_0) \cdots p(\tau_{k-1}, a_0, s) b(a_{k-1}|\tau_{k-1}) \nabla_{{\lambda}(s^{'})} J_s(\bm{\lambda}))}}}\\
% & \text{push the gradient inside and $\nabla_{{\bm{\lambda}}}$ becomes $\nabla_{{\lambda}(s^{'})}$}\\
& \approx \sum_{s \sim b}{\rho_{acc} \cdot \nabla J(\hat{G}_s, s)}\\
& \text{equivalent to summing over the experienced states under $b$}
\end{aligned}\nonumber\normalsize
\end{equation}
\end{proof}

Note that we can replace the gradient operator with other consistent optimization operators that is consistent with the linearity of the decomposition of errors, \eg{} the partial derivative operators. The idea draws similarity to emphatic methods that use importance sampling ratios to correct the distribution of updates \cite{sutton2015emphatic}. The theorem and the corollary apply for general parametric update targets including the $\bm{\lambda}$-return, whose target error we seek to optimize in this paper. Due to the nature of function approximation, optimizing $\bm{\lambda}$ for each state may inevitably affect the error for other states, \ie, decreasing target error for one state may increase those for others. The theorem shows if we can do gradient descent on the target error of the states according to $\bm{d}_\pi$, we can achieve optimization on the overall target error, assuming the value function is changing slowly.

\section{Approximation of State Gradients}
The problem left for us is to find a way to calculate or approximate the gradients of $\bm{\lambda}$ for the state target errors.

Sadly, we can find that the exact computation of the per-state gradient seems infeasible in the online setting: in the state-based $\lambda$ setting, the $\bm{\lambda}$-return for every state is dependent on every $\lambda$ of every state. These states are unknown before observation.

To remedy this, we propose a method to estimate this gradient by estimating the partial derivatives in the dimensions of the gradient vector, which are further estimated online using auxiliary learners that estimates the distributional information of the update targets. The method can be interpreted as optimizing a bias-variance tradeoff.

\begin{proposition}\label{prop:objective}
Let $t$ be the current timestep and $s_t$ be the current state (the state corresponding to the time $t$). The agent takes action $a_t$ at $s_t$ and will transition into $s_{t+1}$ at $t+1$ while receiving reward $R_{t+1}$. Suppose that $R_{t+1}$ and $G_{t+1}^{\bm{\lambda}}$ are uncorrelated, given the update target $G_t^{\bm{{\bm{\lambda}}}}$ for state $s_t$, the (semi)-partial derivative of the target error $J_{s_t}(\bm{\lambda}) \equiv 1/2 \doubleE[(G_t^{\bm{{\bm{\lambda}}}} - \doubleE[G_t])^2]$  of the state $s_t$ \wrt{} $\lambda_{t+1} \equiv \lambda(s_{t+1})$ is:
\begin{equation}
    \begin{aligned}
    \frac{\partial J_{s_t}(\bm{\lambda})}{\partial \lambda_{t+1}} = & \gamma_{t+1}^2 [\lambda_{t+1} \left[ (V(s_{t+1}) - \doubleE[G_{t+1}^{\bm{{\bm{\lambda}}}}])^2 + Var[G_{t+1}^{\bm{{\bm{\lambda}}}}] \right]\\
    & + (\doubleE[G_{t+1}^{\bm{{\bm{\lambda}}}}] - V(s_{t+1}))
    (\doubleE[G_{t+1}] - V(s_{t+1}))]
    \end{aligned}\nonumber
\end{equation}
And its minimizer \wrt{} $\lambda_{t+1}$ is:
\begin{equation}
\begin{aligned}
& \argmin_{\lambda_{t+1}}{J_{s_t}(\bm{\lambda})} = \frac{
(V(s_{t+1}) - \doubleE[G_{t+1}^{\bm{{\bm{\lambda}}}}])
    (V(s_{t+1}) - \doubleE[G_{t+1}])
}
{(V(s_{t+1}) - \doubleE[G_{t+1}^{\bm{{\bm{\lambda}}}}])^2 + Var[G_{t+1}^{\bm{{\bm{\lambda}}}}]}
\end{aligned}\nonumber
\end{equation}
\end{proposition}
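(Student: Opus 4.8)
The plan is to collapse $J_{s_t}(\bm{\lambda})$ into a one-dimensional quadratic in the single scalar $\lambda_{t+1}$ and then read off both its derivative and its minimizer. Working conditionally on the transition $s_t \to s_{t+1}$ --- so that $\gamma_{t+1}$ and $V(s_{t+1})$ are constants --- I would use the recursive definition $G_t^{\bm{\lambda}} = R_{t+1} + \gamma_{t+1}[(1-\lambda_{t+1})V(s_{t+1}) + \lambda_{t+1}G_{t+1}^{\bm{\lambda}}]$ and treat $G_{t+1}^{\bm{\lambda}}$ as independent of $\lambda_{t+1}$ (this is the ``semi'' qualifier, in the spirit of semi-gradient TD). Regrouping gives the affine form $G_t^{\bm{\lambda}} - \doubleE[G_t] = A + \lambda_{t+1} C$, where $A \equiv R_{t+1} + \gamma_{t+1}V(s_{t+1}) - \doubleE[G_t]$ isolates the reward randomness and $C \equiv \gamma_{t+1}(G_{t+1}^{\bm{\lambda}} - V(s_{t+1}))$ isolates the return randomness. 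Substituting into $J_{s_t}(\bm{\lambda}) = \tfrac12 \doubleE[(A + \lambda_{t+1}C)^2]$ and differentiating immediately gives $\partial J_{s_t}/\partial\lambda_{t+1} = \doubleE[AC] + \lambda_{t+1}\doubleE[C^2]$, so the task reduces to evaluating the two coefficients $\doubleE[C^2]$ and $\doubleE[AC]$.

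The coefficient $\doubleE[C^2]$ I would obtain from the mean-squared decomposition $\doubleE[(G_{t+1}^{\bm{\lambda}} - V(s_{t+1}))^2] = (V(s_{t+1}) - \doubleE[G_{t+1}^{\bm{\lambda}}])^2 + \mathrm{Var}[G_{t+1}^{\bm{\lambda}}]$, producing the stated coefficient $\gamma_{t+1}^2[(V(s_{t+1}) - \doubleE[G_{t+1}^{\bm{\lambda}}])^2 + \mathrm{Var}[G_{t+1}^{\bm{\lambda}}]]$ of $\lambda_{t+1}$. The cross term $\doubleE[AC]$ is where the uncorrelatedness hypothesis is spent: conditionally on the transition, the only randomness in $A$ is $R_{t+1}$ and the only randomness in $C$ is $G_{t+1}^{\bm{\lambda}}$, so $\mathrm{Cov}(R_{t+1}, G_{t+1}^{\bm{\lambda}}) = 0$ lets me factor $\doubleE[AC] = \doubleE[A]\,\doubleE[C]$ with $\doubleE[C] = \gamma_{t+1}(\doubleE[G_{t+1}^{\bm{\lambda}}] - V(s_{t+1}))$.

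The crux --- and the step I expect to be the main obstacle --- is simplifying $\doubleE[A] = \doubleE[R_{t+1}] + \gamma_{t+1}V(s_{t+1}) - \doubleE[G_t]$. Here I would invoke the pathwise Monte-Carlo identity $G_t = R_{t+1} + \gamma_{t+1}G_{t+1}$, whose expectation gives $\doubleE[G_t] = \doubleE[R_{t+1}] + \gamma_{t+1}\doubleE[G_{t+1}]$; this Bellman-type cancellation removes the reward mean and reduces $\doubleE[A]$ to $\gamma_{t+1}(V(s_{t+1}) - \doubleE[G_{t+1}])$, so that no explicit reward term survives in the final formula. This cancellation is delicate: it relies on reading $\doubleE[G_t]$ consistently with the conditioning on $s_{t+1}$ (exact for deterministic transitions and otherwise an approximation worth flagging), and the signs of the paired factors $V(s_{t+1}) - \doubleE[G_{t+1}^{\bm{\lambda}}]$ and $V(s_{t+1}) - \doubleE[G_{t+1}]$ must be tracked carefully throughout.

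Finally, multiplying out $\doubleE[AC] = \doubleE[A]\,\doubleE[C]$ gives a $\lambda_{t+1}$-independent term proportional to $\gamma_{t+1}^2(\doubleE[G_{t+1}^{\bm{\lambda}}] - V(s_{t+1}))(V(s_{t+1}) - \doubleE[G_{t+1}])$, which combines with the $\doubleE[C^2]$ coefficient to give the partial derivative. Because the coefficient of $\lambda_{t+1}^2$ in $J_{s_t}$ equals $\tfrac12\doubleE[C^2] \geq 0$, the target error is convex in $\lambda_{t+1}$; setting the derivative to zero and solving the resulting linear equation then yields the stated minimizer $\argmin_{\lambda_{t+1}} J_{s_t} = (V(s_{t+1}) - \doubleE[G_{t+1}^{\bm{\lambda}}])(V(s_{t+1}) - \doubleE[G_{t+1}]) / [(V(s_{t+1}) - \doubleE[G_{t+1}^{\bm{\lambda}}])^2 + \mathrm{Var}[G_{t+1}^{\bm{\lambda}}]]$ (to be clipped to $[0,1]$ in practice). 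As a consistency check I would verify that specializing to the Monte-Carlo case, where $\doubleE[G_{t+1}^{\bm{\lambda}}] = \doubleE[G_{t+1}] = v(s_{t+1})$, recovers the $\lambda$-greedy minimizer of Theorem~\ref{prop:greedy_minimizer}.
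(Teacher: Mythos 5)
Your proof is correct and is essentially the same argument as the paper's: both expand $G_t^{\bm{\lambda}}$ through its one-step recursion, cancel the reward mean via the pathwise identity $G_t = R_{t+1} + \gamma_{t+1} G_{t+1}$, use the assumed uncorrelatedness of $R_{t+1}$ and $G_{t+1}^{\bm{\lambda}}$ to eliminate the covariance, treat $\doubleE[G_{t+1}^{\bm{\lambda}}]$ and $Var[G_{t+1}^{\bm{\lambda}}]$ as constants in $\lambda_{t+1}$ (the ``semi'' step), and then minimize the resulting quadratic; your affine decomposition $G_t^{\bm{\lambda}} - \doubleE[G_t] = A + \lambda_{t+1} C$ is a tidier packaging of the paper's bias--variance expansion $2J_{s_t} = \doubleE^2[G_t^{\bm{\lambda}} - G_t] + Var[G_t^{\bm{\lambda}}]$, but the computation is the same.

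One discrepancy deserves attention, and it is not an error on your side: your cross term $\gamma_{t+1}^2(\doubleE[G_{t+1}^{\bm{\lambda}}] - V(s_{t+1}))(V(s_{t+1}) - \doubleE[G_{t+1}])$ is the negative of the cross term displayed in the proposition, $(\doubleE[G_{t+1}^{\bm{\lambda}}] - V(s_{t+1}))(\doubleE[G_{t+1}] - V(s_{t+1}))$. Your sign is the internally consistent one: setting your derivative to zero yields exactly the stated minimizer, whereas the proposition's displayed derivative would produce that minimizer with the opposite sign. The paper's own proof commits the slip in its final display, where the term $-(V(s_{t+1}) - \doubleE[G_{t+1}])(V(s_{t+1}) - \doubleE[G_{t+1}^{\bm{\lambda}}])$ is rewritten by flipping the sign of \emph{both} factors, which leaves the product unchanged rather than negating it. So your derivation in effect corrects a sign typo in the statement and reconciles the derivative with the minimizer and with the $\lambda$-greedy special case you checked at the end.
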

\begin{proof}
\begin{equation}
\begin{aligned}
2 \cdot J_{s_t}(\bm{\lambda}) \equiv \doubleE[(G_t^{\bm{{\bm{\lambda}}}} - \doubleE[G_t])^2] & = \doubleE^2[G_t^{\bm{{\bm{\lambda}}}} - G_t] + Var[G_t^{\bm{\lambda}}]% \\
%& \equiv {Bias}^2(\lambda_{t+1}) + Variance(\lambda_{t+1})
\end{aligned}
\nonumber
\end{equation}

\begin{equation}
\begin{aligned}
\doubleE[G_t^{\bm{{\bm{\lambda}}}} - G_t] & = \doubleE[R_{t+1}+\gamma_{t+1}((1 - \lambda_{t+1})V(s_{t+1})+\lambda_{t+1}G_{t+1}^{\bm{\lambda}}) -(R_{t+1}+\gamma_{t+1}G_{t+1})]\\
& = \gamma_{t+1} (1 -\lambda_{t+1})V(s_{t+1}) + \gamma_{t+1} \lambda_{t+1} \doubleE[G_{t+1}^{\bm{\lambda}}] - \gamma_{t+1} \doubleE[G_{t+1}]
\end{aligned}
\nonumber
\end{equation}

\begin{equation}
\begin{aligned}
Var[G_t^{\bm{\lambda}}] & = Var[R_{t+1} + \gamma_{t+1} [(1 - \lambda_{t+1})V(s_{t+1}) + \lambda_{t+1}G_{t+1}^{\bm{\lambda}}]]\\
& = Var[R_{t+1}] + \gamma_{t+1}^2 Var[(1 - \lambda_{t+1})V(s_{t+1}) + \lambda_{t+1}G_{t+1}^{\bm{\lambda}}]]\\
& \text{(assuming $R_{t+1}$ \& $G_{t+1}^{\bm{{\bm{\lambda}}}}$ uncorrelated)}\\
& = Var[R_{t+1}] + \gamma_{t+1}^2 \lambda_{t+1}^2Var[G_{t+1}^{\bm{\lambda}}]\\
& \text{($(1 - \lambda_{t+1})V(s_{t+1})$ not random)}
\end{aligned}
\nonumber
\end{equation}

\begin{equation}
\begin{aligned}
& \frac{\partial J_{s_t}(\bm{\lambda})}{\partial \lambda_{t+1}}\\
& \equiv \frac{1}{2}\cdot \frac{\partial}{\partial \lambda_{t+1}} \left(\doubleE[(G_t^{\bm{{\bm{\lambda}}}} - \doubleE[G_t])^2] \right)\\
& = \frac{1}{2}\cdot \frac{\partial}{\partial \lambda_{t+1}} \left((\gamma_{t+1} (1 - \lambda_{t+1})V(s_{t+1}) + \gamma_{t+1} \lambda_{t+1} \doubleE[G_{t+1}^{\bm{\lambda}}] - \gamma_{t+1} \doubleE[G_{t+1}])^2 + Var[R_{t+1}]+ \gamma_{t+1}^2 \lambda_{t+1}^2Var[G_{t+1}^{\bm{\lambda}}]\right)\\
& = \frac{\gamma_{t+1}^2}{2} \cdot \frac{\partial}{\partial \lambda_{t+1}} \left((V(s_{t+1}) - \doubleE[G_{t+1}] - \lambda_{t+1} (V(s_{t+1}) - \doubleE[G_{t+1}^{\bm{\lambda}}]))^2
+ \lambda_{t+1}^2Var[G_{t+1}^{\bm{\lambda}}]\right)\\
& = \frac{\gamma_{t+1}^2}{2} \cdot \frac{\partial}{\partial \lambda_{t+1}} (
(V(s_{t+1}) - \doubleE[G_{t+1}])^2 + \lambda_{t+1}^2 (V(s_{t+1}) - \doubleE[G_{t+1}^{\bm{\lambda}}])^2 \\
&- 2 \lambda_{t+1} (V(s_{t+1}) - \doubleE[G_{t+1}])(V(s_{t+1}) - \doubleE[G_{t+1}^{\bm{\lambda}}]) + \lambda_{t+1}^2Var[G_{t+1}^{\bm{\lambda}}])\\
& = \frac{\gamma_{t+1}^2}{2} \cdot \frac{\partial}{\partial \lambda_{t+1}} (
\lambda_{t+1}^2 (V(s_{t+1}) - \doubleE[G_{t+1}^{\bm{\lambda}}])^2 - 2 \lambda_{t+1} (V(s_{t+1}) - \doubleE[G_{t+1}])(V(s_{t+1}) - \doubleE[G_{t+1}^{\bm{\lambda}}]) + \lambda_{t+1}^2Var[G_{t+1}^{\bm{\lambda}}])\\
& = \gamma_{t+1}^2 [\lambda_{t+1} \left( (V(s_{t+1}) - \doubleE[G_{t+1}^{\bm{{\bm{\lambda}}}}])^2 + Var[G_{t+1}^{\bm{{\bm{\lambda}}}}] \right) + (\doubleE[G_{t+1}^{\bm{{\bm{\lambda}}}}] - V(s_{t+1}))(\doubleE[G_{t+1}] - V(s_{t+1}))]\\
& + \frac{\gamma_{t+1}^2}{2} \left( \lambda_{t+1}^2 \frac{\partial Var[G_{t+1}^{\bm{\lambda}}]}{\partial \lambda_{t+1}} + 2 \lambda_{t+1} (1 - \lambda_{t+1}(V(s_{t+1}) - \doubleE[G_{t+1}^{\bm{{\bm{\lambda}}}}]))\frac{\partial \doubleE[G_{t+1}^{\bm{{\bm{\lambda}}}}]}{\partial \lambda_{t+1}}  \right)% \\
% & = \gamma_{t+1}^2 [\lambda_{t+1} \left( (V(s_{t+1}) - \doubleE[G_{t+1}^{\bm{{\bm{\lambda}}}}])^2 + Var[G_{t+1}^{\bm{{\bm{\lambda}}}}] \right) + (\doubleE[G_{t+1}^{\bm{{\bm{\lambda}}}}] - V(s_{t+1}))(\doubleE[G_{t+1}] - V(s_{t+1}))]\\
% & + \frac{\gamma_{t+1}^2}{2} \left(\lambda_{t+1} (1 - \lambda_{t+1}V(s_{t+1}))\frac{\partial \doubleE[G_{t+1}^{\bm{{\bm{\lambda}}}}]}{\partial \lambda_{t+1}} + \lambda_{t+1}^2 \frac{\partial \doubleE[(G_{t+1}^{\bm{{\bm{\lambda}}}})^2]}{\partial \lambda_{t+1}} \right)
\end{aligned}
\nonumber
\end{equation}

Note that the term $\frac{\gamma_{t+1}^2}{2} \left( \lambda_{t+1}^2 \frac{\partial Var[G_{t+1}^{\bm{\lambda}}]}{\partial \lambda_{t+1}} + 2 \lambda_{t+1} (1 - \lambda_{t+1}(V(s_{t+1}) - \doubleE[G_{t+1}^{\bm{{\bm{\lambda}}}}]))\frac{\partial \doubleE[G_{t+1}^{\bm{{\bm{\lambda}}}}]}{\partial \lambda_{t+1}}  \right)$ is heavily discounted by high orders of $\gamma(\cdot)$ as well as $\lambda(\cdot)$. If we assume that they are negligible, \ie{} not taking the partial derivatives of the expectation or the variance and regard them as $0$, we can obtain the semi-partial derivative, which is a reasonable $1$-step approximation of the derivative. If the algorithm runs offline, it is possible to compute the semi-gradient fully.

The minimizer is achieved by setting the partial derivative $0$.
\end{proof}

This proposition constructs a way to estimate the partial derivative that corresponds to the dimension of $\lambda_{t+1}$ in $\nabla \bm{\lambda}$, if we know or can effectively estimate the statistics of $\doubleE[G_{t+1}]$, $\doubleE[G_{t+1}^{\bm{\lambda}}]$ and $Var[G_{t+1}^{\bm{\lambda}}]$. This proposition also provides the way for finding a whole series of partial derivatives and also naturally yields a multi-step method of approximating the full gradient $\nabla_{\bm{\lambda}} \doubleE[(G_t^{\bm{{\bm{\lambda}}}} - \doubleE[G_t])^2]$. The partial derivative in the proposition is achieved by looking $1$-step into the future. We can also look more steps ahead, and get the partial derivatives \wrt{} $\lambda^{(t+2)}, \cdots$. These partial derivatives can be computed with the help of the auxiliary tasks as well. The more we assemble the partial derivatives, the closer we get to the full gradient. However, in our opinion, $1$-step is still the most preferred not only because it can be obtained online every step without the need of buffers but also for its dominance over other dimensions of $\bm{\lambda}$: the more steps we look into the future, the more the corresponding $\lambda$s of the states are more heavily discounted by the earlier $\gamma$'s and $\lambda$s. This result is extended naturally if we were to do the adaptations offline, in which case the partial derivatives and the gradient can be exactly computed with additional computational costs. Figure \ref{fig:interdependency} (a) and (b) compare the interdependency of state-$\lambda$'s and state update targets, under the true case and the $1$-step \algoname{} approximation case respectively.

\begin{figure*}
\centering

\subfloat[True interdependency of state-$\lambda$'s and state update targets]{
\captionsetup{justification = centering}
\includegraphics[width=0.9\textwidth]{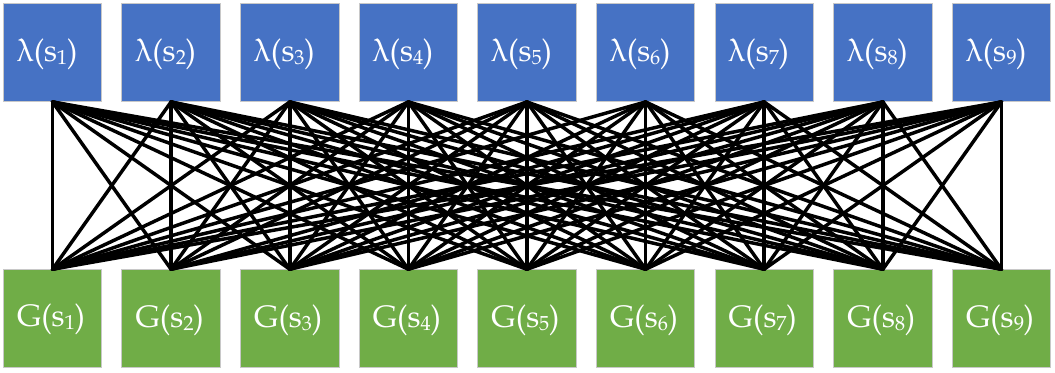}}

\subfloat[Approximated interdependency of $1$-step \algoname{}]{
\captionsetup{justification = centering}
\includegraphics[width=0.9\textwidth]{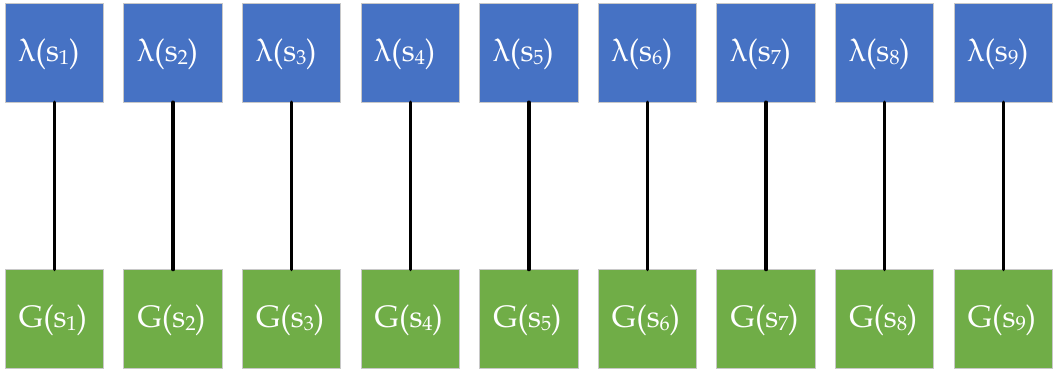}}

\caption{Interdependency among state-$\lambda$'s and state update targets. With the increment of steps into the future, according to Proposition \ref{prop:objective}, less connections would be neglected.}
\label{fig:interdependency}
\end{figure*}

It is interesting to observe that the yielded minimizer is a generalization of (\ref{eq:white_argmin}): the minimizer of the greedy target error can be achieved by setting $G_{t+1}^{\bm{\lambda}} = G_{t+1}$. In practice, given an unknown MDP, the distributional information of the targets, \eg{} $\doubleE[G_{t+1}]$, $\doubleE[G_{t+1}^{\bm{\lambda}}]$ and $Var[G_{t+1}^{\bm{\lambda}}]$, can only be estimated. However, such estimation has been proved viable in both offline and online settings of TD($\lambda$) and the variants, using supervised learning and auxiliary tasks using the direct VTD method \cite{sherstan2018directly}, respectively. This means the optimization for the ``true'' target error is as viable as the $\lambda$-greedy method proposed in \cite{white2016greedy}, while it requires more complicated estimations than that for the ``greedy'' target error: we need the estimates of $\doubleE[G_{t+1}]$, $\doubleE[G_{t+1}^{\bm{\lambda}}]$ and $Var[G_{t+1}^{\bm{\lambda}}]$, while for (\ref{eq:white_argmin}) we only need the estimation of $\doubleE[G_{t+1}]$ and $Var[G_{t+1}]$.

\section{Trust Region Optimization}
The auxiliary learners are also dependent on $\bm{\lambda}$, which brings new challenges. The optimization of the true state target error, \ie{} the MSE between $\bm{\lambda}$-return and the true value, together with the auxiliary estimation, brings new challenges: the auxiliary estimates are learnt online and requires the stationarity of the update targets. This means if a $\lambda$ for one state is changed dramatically, the auxiliary estimates of $\doubleE[G_{t+1}^{\bm{\lambda}}]$ and $Var[G_{t+1}^{\bm{\lambda}}]$ will be destroyed, since they depend on each element in ${\bm{\lambda}}$ (whereas in $\lambda$-greedy, the pseudo targets require no $\bm{\lambda}$-controlled distributional information). If we cannot handle such challenge, either we end up with a method that have to wait for some time after each change of $\bm{\lambda}$ or we end up with $\lambda$-greedy, bearing the high bias towards the MC return and disconnection from the overall target error.

Adjusting ${\bm{\lambda}}$ without destroying the auxiliary estimates is a core problem. We tackle such optimization by noticing that the expectation and variance of the update targets are continuous and differentiable \wrt{} $\bm{\lambda}$. Thus, a small change on $\lambda_{t+1}$ only yields a bounded shift of the estimates of the auxiliary tasks.

If we use small enough steps of the estimated gradients to change $\bm{\lambda}$, we can stabilize the auxiliary estimates since they will not deviate far and will be corrected by the TD updates quickly. This method inherits the ideas of trust region methods used in optimizing the dynamic systems.

Combining the approximation of gradient and the decomposed one-step optimization method, we now have an online method to optimize $\bm{\lambda}$ to achieve approximate optimization of the overall target error, which we name as \algoname{}. This method can be jointly used with value learning, serving as a plugin, to adapt $\bm{\lambda}$ real-time. The policy evaluation carried out by the value learner directs the value estimates toward the targets and \algoname{} jointly optimizes the targets. We present the pseudocode of \algoname{} in Alg. \ref{alg:MTA_PE} and present the mechanisms in Fig. \ref{fig:\algoname{}}.

\begin{figure*}
\centering
\includegraphics[width=0.9\textwidth]{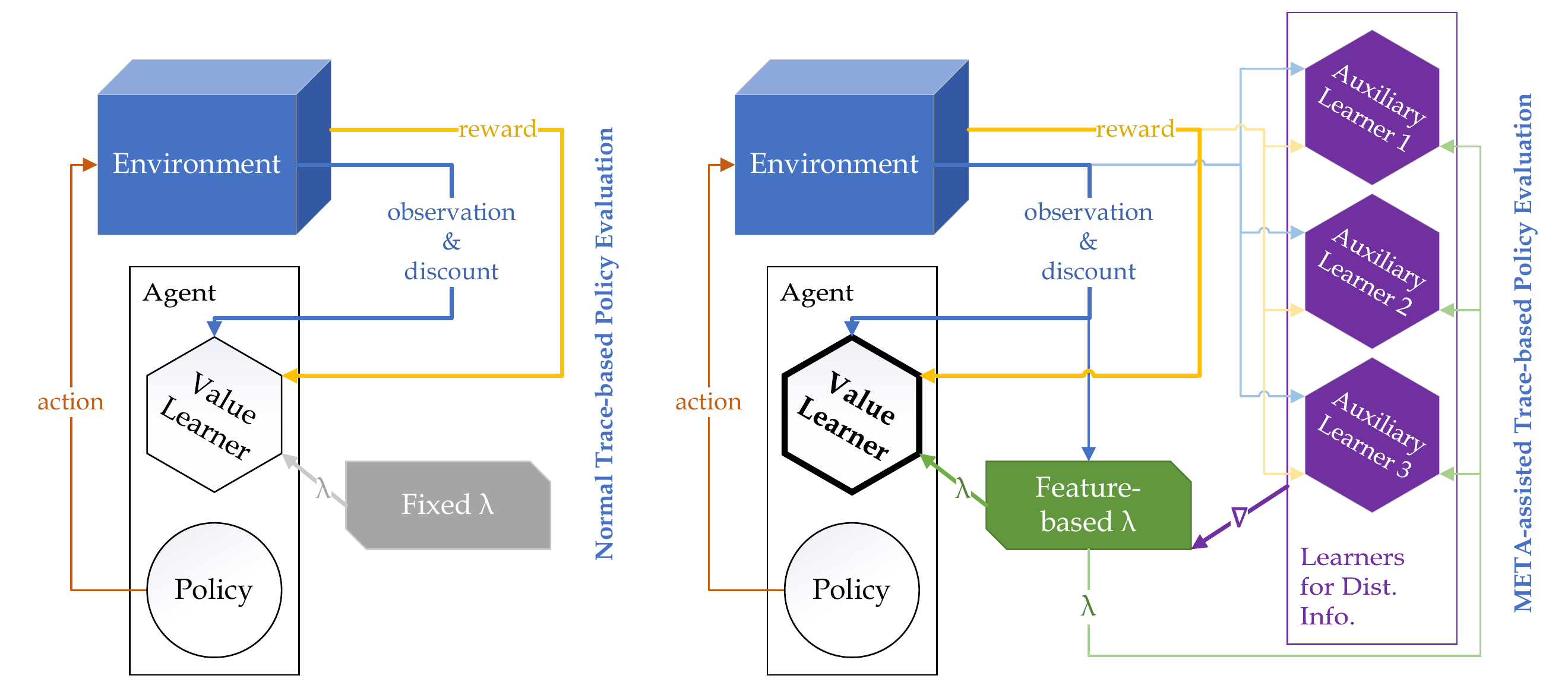}
\caption{Mechanisms for \algoname{}-assisted trace-based policy evaluation: the auxiliary learners learn the distributional information in parallel to the value learner and provide the approximated gradient for the adjustments of $\bm{\lambda}$.}
\label{fig:\algoname{}}
\end{figure*}

% \subsection{State Meta-Objective}
% The greedy objective in \cite{white2016greedy} optimizes the error for pseudo target $\tilde{G}_t$, where as here we optimize the error for the true target, \ie $\doubleE[(G_t^{\bm{\lambda}} - \doubleE[G_t])^2]$. We first derive the true state meta-objective for every state encountered in a trajectory as a function of $\lambda_{t+1}$. Then, we discuss the characteristics of the true objective and present our method of optimizing the true objective. In the end, we will show that, under some assumptions, our method of optimization achieves the minimization of the overall target error, even if off-policy.
% \par

% \par
% % Though we have the Bellman operators to estimate these statistics, the true greedy objective can be very unstable if we greedily change any $\lambda_{t+1} \in {\bm{\lambda}}$, so what is the point of using this objective? It would be disaster if, the algorithm destroys all the estimates to change only one $\lambda_{t+1}$. So why not just use the simplified greedy objective instead anyway? We show that a different manner of adapting $\lambda$ results in something very interesting:

% The proof is in the appendix. The theorem shows that, even when off-policy, if correct gradient descent on the true objectives can be done, the overall target error can be optimized. Such gradient can be calculated with the help of auxiliary tasks. 

% \begin{minipage}[htbp]{\linewidth}
\begin{algorithm*}[htbp]
\caption{\algoname{}-assisted Online Policy Evaluation}
\label{alg:MTA_PE}
% nitialize weights for $\doubleE[G_t]$, $\doubleE[G_t^{\bm{\lambda}}]$ and $Var[G_t^{\bm{\lambda}}]$ to be $\bm{0}$;\\
Initialize weights for the value learner and those for the auxiliary learners that learns $\hat{\doubleE}[G_t]$, $\hat{\doubleE}[G_t^{\bm{\lambda}}]$ and $\hat{V}ar[G_t^{\bm{\lambda}}]$\\
\For{episodes}{
    $\rho_{acc} = 1$; \textcolor{darkgreen}{//initialize cumulative product of importance sampling ratios}\\
    Set traces for value learner and auxiliary learners to be $\bm{0}$;\\
    $\bm{x}_0 = \text{initialize}(\scriptE{})$; \textcolor{darkgreen}{//Initialize the environment $\scriptE{}$ and get the initial feature (observation) $\bm{x}_0$}\\
    
    \While{$t \in \{0, 1, \dots\}$ until terminated}{
    \textcolor{darkgreen}{//INTERACT WITH ENVIRONMENT}\\
    $a_t \sim b(\bm{x}_{t})$; \textcolor{darkgreen}{//sample $a_t$ from behavior policy $b$}\\
    $\rho_t = {\pi(a_{t}, \bm{x}_{t})} / {b(a_{t}, \bm{x}_{t})}$; $\rho_{acc} = \rho_{acc} \cdot \rho_t$; \textcolor{darkgreen}{//get and accumulate importance sampling ratios}\\
    $\bm{x}_{t+1}, \gamma_{t+1} = \text{step}(a_t)$;
    \textcolor{darkgreen}{//take action $a_t$, get feature (observation) $\bm{x}_{t+1}$ and discount factor $\gamma_{t+1}$}\\
    \textcolor{purple}{//AUXILIARY TASKS}\\
    learn $\hat{\doubleE}[G_t]$, $\hat{\doubleE}[G_t^{\bm{\lambda}}]$ and $\hat{V}ar[G_t^{\bm{\lambda}}]$; \textcolor{purple}{//using direct VTD \cite{sherstan2018directly} with trace-based TD methods, \eg, true online GTD($\bm{\lambda}$) \cite{hasselt2014true}}\\
    
    \textcolor{purple}{//APPROXIMATE SGD ON OVERALL TARGET ERROR}\\
    
    $\lambda_{t+1} = \lambda_{t+1} - \kappa \gamma_{t+1}^2 \rho_{acc}$\\
    $\left[\lambda_{t+1} \left( (V(\bm{x}_{t+1}) - \hat{\doubleE}[G_{t+1}^{\bm{\lambda}}])^2 + \hat{V}ar[G_{t+1}^{\bm{\lambda}}] \right) + (\hat{\doubleE}[G_{t+1}^{\bm{\lambda}}] - V(\bm{x}_{t+1}))(\hat{\doubleE}[G_{t+1}] - V(\bm{x}_{t+1}))\right]$; \textcolor{purple}{// change $\lambda_{t+2}, \cdots$ when using multi-step approximation of the gradient}\\
    
    \textcolor{darkgreen}{//LEARN VALUE}\\
    learn $V(\bm{x}_{t})$ using a trace-based TD method;
}
}
\end{algorithm*}
% \end{minipage}

\section{Discussions and Insights}

\subsection{Hyperparameter Search}

The proposed method \algoname{} trades the hyperparameter search for $\lambda$ with $\kappa$. However, $\kappa$ gives the algorithm the ability to have state-based $\lambda$s: state or feature (observation) based $\bm{\lambda}$ can lead to better convergence compared to fixing $\lambda$ for all states. Such potential may \textit{never} be achieved by searching a fixed $\lambda$. Let us consider the tabular case, where the hyperparameter search for constant $\bm{\lambda}$ is equivalent to searching along the $\bm{1}$ direction inside a $|\scriptS|$-dimensional box. By replacing the hyperparameter $\lambda$ with $\kappa$, we extend the search direction into the whole ${[0, 1]}^{|\scriptS|}$. The new degrees of freedom are crucial to the performance.

\subsection{Reliance on Auxiliary Tasks}\label{subsection:reliance}

The \algoname{} updates assume that $\hat{\doubleE}[G_t]$, $\hat{\doubleE}[G_t^{\bm{\lambda}}]$ and $\hat{V}ar[G_t^{\bm{\lambda}}]$ can be well estimated by the auxiliary tasks. This is very similar to the idea of actor changing the policy upon the estimation of the values of the critic in the actor-critic methods. To implement this, we can add a buffer period for the estimates to be stable at the beginning of the learning process; Additionally, we should set the learning rates of the auxiliary learners higher than the value learner \st{} the auxiliary tasks are learnt faster, resembling the guidelines for setting learning rates of actor-critic. % The buffer period incurs a constraint on the steps of training: \algoname{} is expected to perform better when there are sufficient steps and worse when there are not. % This idea is like the faster weights and slower weights of Richard Sutton.
With the buffer period, we can also view \algoname{} as approximately equivalent to offline hyperparameter search of $\bm{\lambda}$, where with \algoname{} we first reach a relatively stable accuracy and then adjust $\bm{\lambda}$ to slowly slide to fixed points with lower errors. Also, it is worth noting that \algoname{} is in theory compatible with fancier settings of learning rate, since the meta-adaptation is independent of the values of the learning rate.

\subsection{Generalization and Function Approximation}
In the case of function approximation, the meta-learning of $\bm{\lambda}$-greedy is still fully controlled by the value function and the two additional learned statistics but cannot directly make use of the features of the state itself. Whereas in \algoname{}, we can use a parametric function of $\bm{\lambda}$ and performs gradient descent on it to make use of the state features. This feature is helpful for generalization and can be very effective when the state features contain rich information (good potential to be used with deep neural networks). This is to be demonstrated in the experiments.

% \subsection{Convergence of Update Targets}

% We showed that \algoname{} is capable of optimizing the overall target error with the help of the auxiliary tasks. 

% In the tabular case, \algoname{} performs a coordinate descent on the overall objectives since it changes the elements of $\bm{\lambda}$ one at a time. The overall target error is the summation of the target error for every state and the objective for every state is a quadratic. If we halt value function updates, the updates of SGD give us the guarantee to converge to the global optimum, with the assumption that the step size is decreasing appropriately and the auxiliary tasks can learn the statistics effectively. This convergence can be achieved due to the fact that every change of $\lambda$ for a specific state does not influence other states. However, If the value function updates are done in parallel, we are performing SGD on a changing quadratic objective. The convergence properties in this case is yet to be investigated in the future research.

% In the function approximation case where the parametric $\bm{\lambda}$ is used, if we, again, halt the value learner updates, we will have the guarantee of convergence to local optima (SGD), assuming the gradient approximations are accurate enough. However, we have no knowledge with the value learner updates yet.

\subsection{From Prediction to Control}

Within the control tasks where the quality of prediction is crucial to the policy improvement, it is viable to apply \algoname{} to enhance the policy evaluation process. \algoname{} is a trust region method, which requires the policy to be also changing smoothly, \st{} the shift of values can be bounded. This constraint leads us naturally to the actor-critic architectures, where the value estimates can be used to improve a continuously changed parametric policy. We provide the pseudocode of \algoname{}-assisted actor-critic control in Algorithm \ref{alg:MTA_AC}.

\begin{algorithm*}[htbp]
\caption{\algoname{}-assisted Online Actor-Critic}
\label{alg:MTA_AC}
Initialize weights for the value learner and those for the auxiliary learners that learns $\hat{\doubleE}[G_t]$, $\hat{\doubleE}[G_t^{\bm{\lambda}}]$ and $\hat{V}ar[G_t^{\bm{\lambda}}]$\\
Initialize parameterized policies $\pi(\cdot | \theta_\pi)$ and $b(\cdot | \theta_b)$;\\
\For{episodes}{
    Set traces for value learner and auxiliary learners to be $\bm{0}$;\\
    $\bm{x}_0 = \text{initialize}(\scriptE{})$;\\
    
    \While{$t \in \{0, 1, \dots\}$ until terminated}{
    \textcolor{darkgreen}{//INTERACT WITH ENVIRONMENT}\\
    $a_t \sim b(\bm{x}_{t})$; $\rho_t = {\pi(a_{t}, \bm{x}_{t})} / {b(a_{t}, \bm{x}_{t})}$; $\rho_{acc} = \rho_{acc} \cdot \rho_t$;\\
    $\bm{x}_{t+1}, \gamma_{t+1} = \text{step}(a_t)$;\\
    \textcolor{purple}{//AUXILIARY TASKS and SGD ON OVERALL TARGET ERROR}\\
    learn $\hat{\doubleE}[G_t]$, $\hat{\doubleE}[G_t^{\bm{\lambda}}]$ and $\hat{V}ar[G_t^{\bm{\lambda}}]$;\\
    $\lambda_{t+1} = \lambda_{t+1} - \kappa \gamma_{t+1}^2 \rho_{acc}$\\
    $\left[\lambda_{t+1} \left( (V(\bm{x}_{t+1}) - \doubleE[G_{t+1}^{\bm{\lambda}}])^2 + Var[G_{t+1}^{\bm{\lambda}}] \right) + (\doubleE[G_{t+1}^{\bm{\lambda}}] - V(\bm{x}_{t+1}))(\doubleE[G_{t+1}] - V(\bm{x}_{t+1}))\right]$;\\
    
    \textcolor{darkgreen}{//LEARN VALUE}\\
    learn $V(\bm{x}_{t})$ using a trace-based TD method;\\
    
    \textcolor{red}{//LEARN POLICY}\\
    One (small) step of policy gradient (actor-critic) on $\theta_\pi$;\\
}
}
\end{algorithm*}

\chapter{Experimental Studies}\label{chap:experiments}
%Empirical validation of the proposed method \algoname{}

%\section{Experiments}\label{sec:experiments}

% \begin{figure*}
% \centering

% \caption{Two representative sets of learning curves for the three tasks. The $y$-axes (log scale) are value errors for the first two prediction tasks and the number of steps to reach the goal in the control case. The $x$-axes (log scale) represent the episodes for learning. The buffer period ($10^{4}$ episodes, $10\%$ of the total for prediction; $100$ episodes for control) are omitted in the learning curves.}
% \label{fig:curves}
% \end{figure*}

In this chapter, we examine the empirical behavior of the proposed method, \algoname{}, by comparing it to the baselines true online TD($\lambda$) \cite{seijen2015true} and true online GTD($\lambda$) \cite{hasselt2014true} as well as the $\lambda$-greedy method \cite{white2016greedy}\footnote{Implementation is open-source at \url{https://github.com/PwnerHarry/META}}. For all the three sets of tests, we start adapting $\lambda$s from $1$, which is the same as $\lambda$-greedy \cite{white2016greedy}. This setting is enabled by using $\lambda(\bm{x}) = 1 - \bm{w}_\lambda^T\bm{x}$ as the function approximator of the parametric $\lambda$, with all the weights initialized as $\bm{0}$.

\section{RingWorld: Tabular Case, Low Variance}

\begin{figure}
\centering

\subfloat[$\gamma = 0.95$, $\langle 0.4, 0.35 \rangle$]{
\captionsetup{justification = centering}
\includegraphics[width=0.32\textwidth]{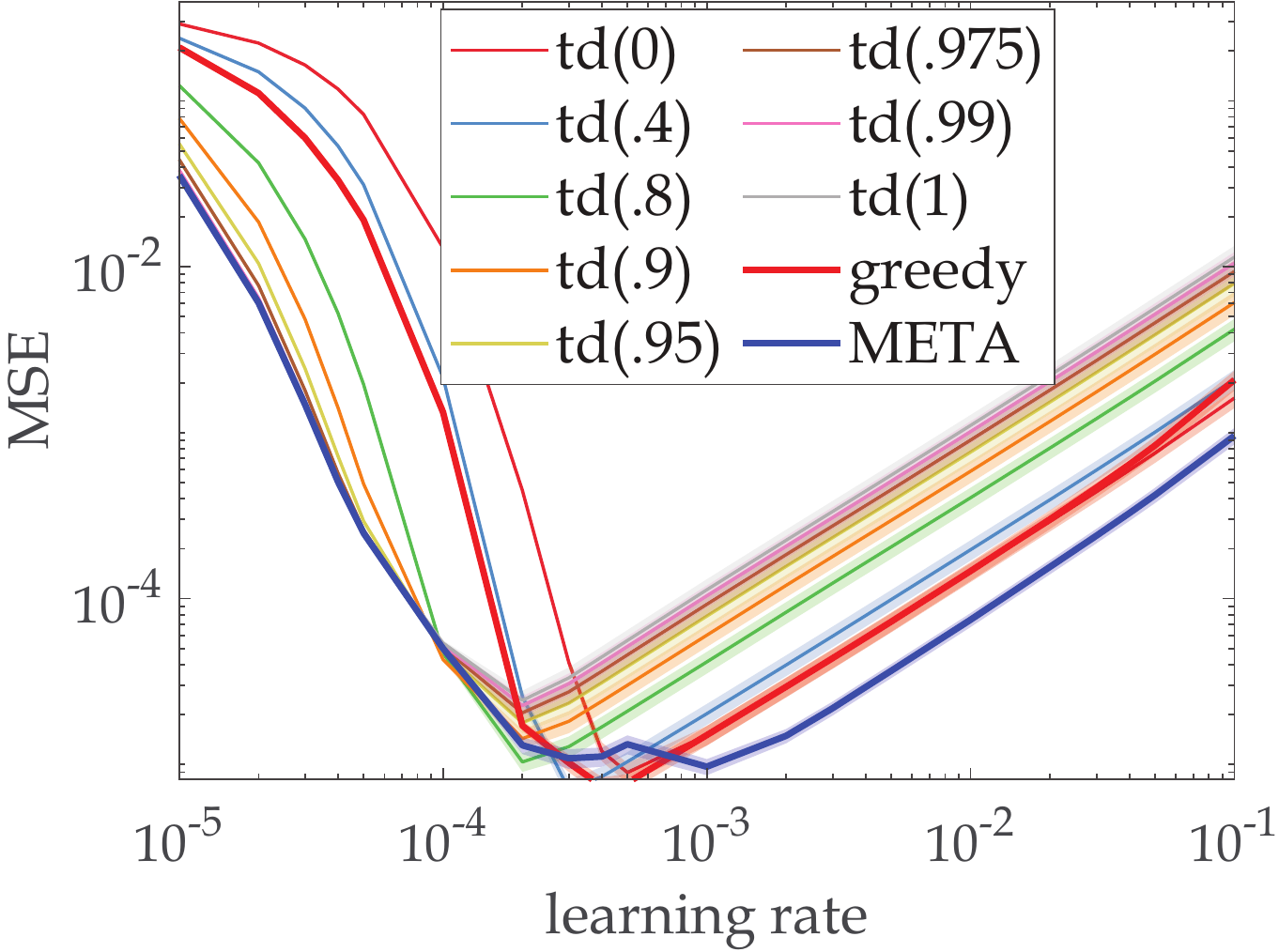}}
\hfill
\subfloat[$\gamma = 0.95$, $\langle 0.3, 0.25 \rangle$]{
\captionsetup{justification = centering}
\includegraphics[width=0.32\textwidth]{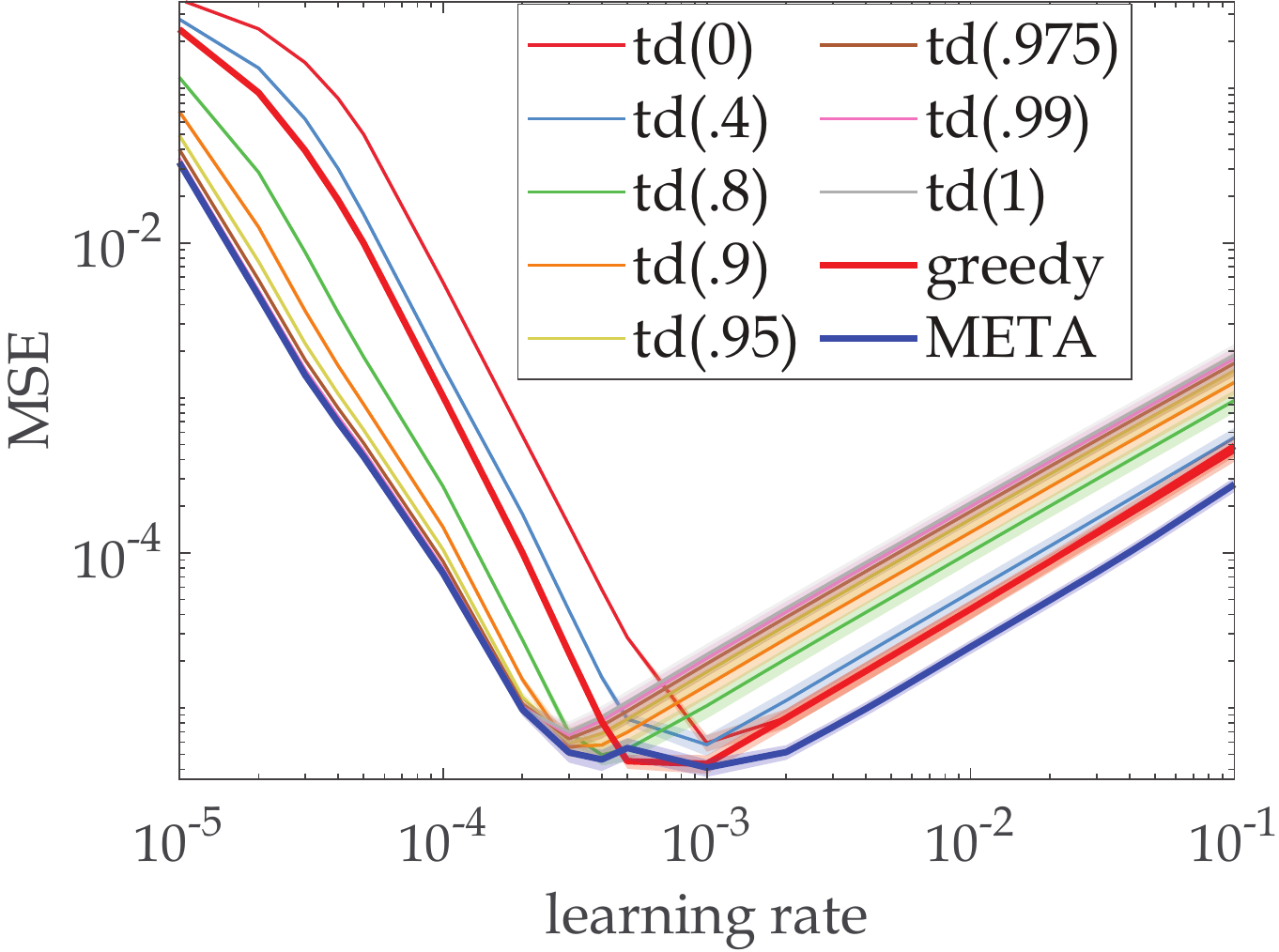}}
\hfill
\subfloat[$\gamma = 0.95$, $\langle 0.2, 0.15 \rangle$]{
\captionsetup{justification = centering}
\includegraphics[width=0.32\textwidth]{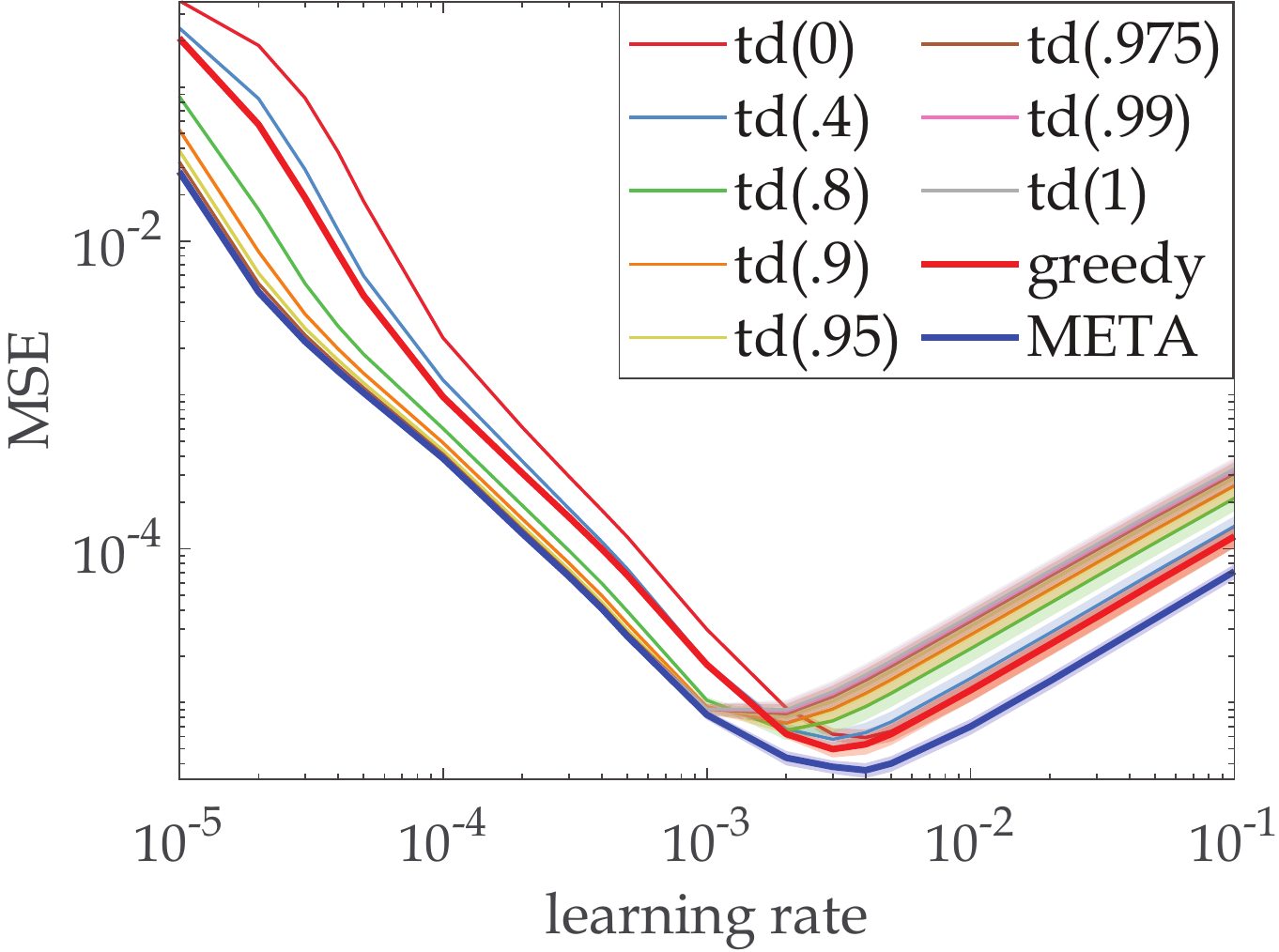}}

\subfloat[$\alpha = 0.01$, $\kappa = 0.01$, $\langle 0.4, 0.35 \rangle$]{
\captionsetup{justification = centering}
\includegraphics[width=0.32\textwidth]{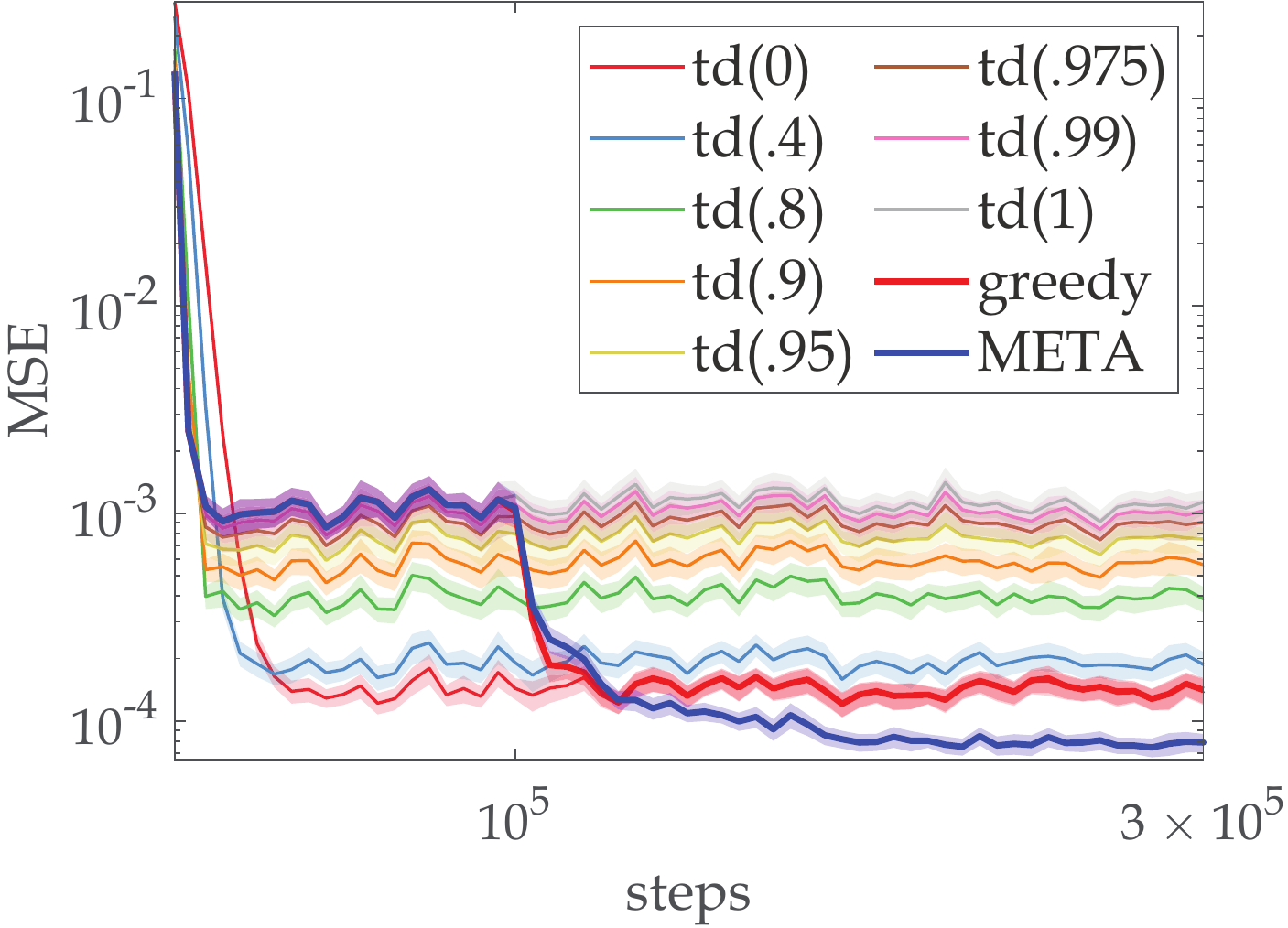}}
\hfill
\subfloat[$\alpha = 0.01$, $\kappa = 0.1$, $\langle 0.3, 0.25 \rangle$]{
\captionsetup{justification = centering}
\includegraphics[width=0.32\textwidth]{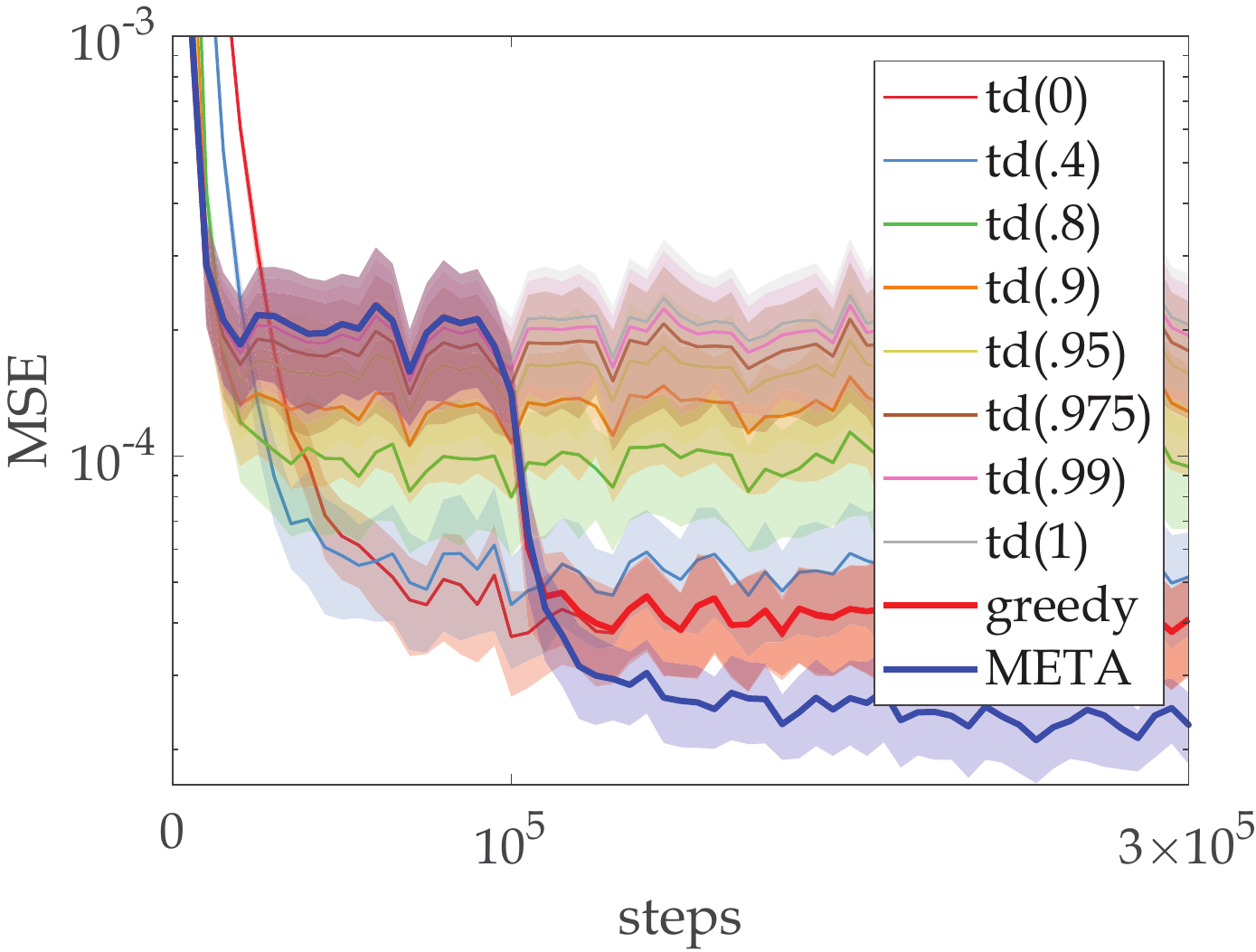}}
\hfill
\subfloat[$\alpha = 0.05$, $\kappa = 0.1$, $\langle 0.2, 0.15 \rangle$]{
\captionsetup{justification = centering}
\includegraphics[width=0.32\textwidth]{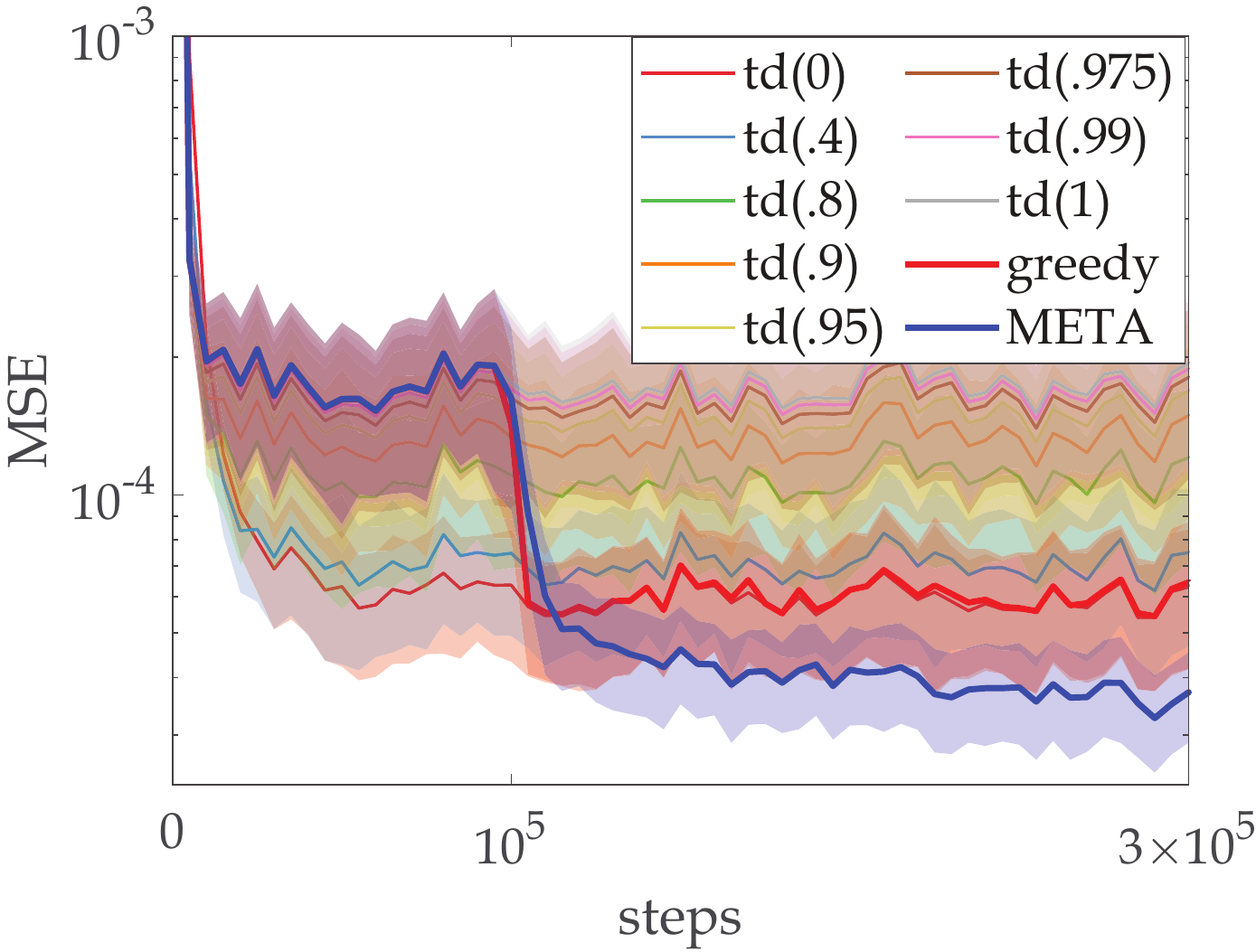}}

\caption{U-shaped curves and learning curves for \algoname{}, $\lambda$-greedy and the baselines on RingWorld, under three pairs of behavior-target policies. For (a), (b) and (c), the $x$-axes represent the values of the learning rate $\alpha$ for prediction (or the critic), while the $y$-axes represent the overall value errors. Each point in the graphs contains the mean (solid) and standard deviation (shaded) collected from $240$ independent runs, with $10^{6}$ steps for prediction; For (d), (e) and (f), the $x$-axis represents the steps of learning and $y$-axis is the same as (a), (b) and (c). We choose one representative case for each U-shaped curve corresponding to different policy pairs and plot the corresponding learning curves. In these learning curves, the best known values for the hyperparameters are used. The buffer period lengths are $10^{5}$ steps ($10\%$). The buffer period and the adaptation period have been ticked on the $x$-axes. The rest of the steps for (d), (e) and (f) have been cut off since there is no significant change afterwards.}
\label{fig:ringworld}
\end{figure}

This set of experiments focuses on a low-variance environment, the $11$-state ``ringworld'' \cite{white2016greedy}, in which the agent moves either left or right in a ring of states. The state transitions are deterministic and rewards only appear in the terminal states. In this set of experiments, we stick to the tabular setting and use true online TD($\bm{\lambda}$) \cite{hasselt2014true} as the learner\footnote{We prefer true online algorithms since they achieve the exact equivalence of the bi-directional view of $\lambda$-returns.}, for the value estimate as well as all the auxiliary estimates. As discussed in \ref{subsection:reliance}, for the accuracy of the auxiliary learners, we double their learning rate so that they can adapt to the changes of the estimates faster. We select $3$ pairs of behavior-target policies: 1) the behavior policy goes left with $0.4$ probability while the target policy does w.p. $0.35$; 2) behavior $0.3$ and target $0.25$; 3) behavior $0.2$ and target $0.15$. The three pairs of policies are increasingly greedy. The baseline true online TD has $2$ hyperparameters ($\alpha$ \& $\lambda$) and so does \algoname{} ($\alpha$ \& $\kappa$), excluding those for the auxiliary learners. We test the two methods on grids of hyperparameter pairs. More specifically, for the baseline true online TD, we test on $\langle \alpha, \lambda \rangle \in \{ 10^{-5}, \dots, 5\times10^{-5}, 10^{-4}, \dots, 5\times10^{-4}, \dots, 5\times10^{-2}, 10^{-1} \} \times \{0, 0.4, 0.8, 0.9, 0.95, 0.975, 0.99, 1\}$ while for \algoname{}, $\langle \alpha, \kappa \rangle \in \{ 10^{-5}, \dots, 5\times10^{-5}, 10^{-4}, \dots, 5\times10^{-4}, \dots, 5\times10^{-2}, 10^{-1} \} \times \{10^{-7}, \dots, 10^{-1}\}$. The results are presented as the U-shaped curves in Fig. \ref{fig:ringworld}. We plot the curves of the baseline under different $\lambda$s and the best performance that \algoname{} could get under each learning rate. The detailed results for the three pairs of policies are presented in Table \ref{tab:ringworld_behavior_04_target_035}, \ref{tab:ringworld_behavior_03_target_025} and \ref{tab:ringworld_behavior_02_target_015}, respectively.
% Table generated by Excel2LaTeX from sheet 'Sheet1'
\begin{table}[htbp]
\tiny
\setlength{\tabcolsep}{2pt}
  \centering
  \caption{Detailed Results on RingWorld (Target: 0.35, Behavior: 0.4)}
    \begin{tabular}{ccccccccccccccccccccc}
    \toprule
    \toprule
    baseline & \multicolumn{19}{c}{True Online TD}                                                                                                                   &  \\
    \multirow{2}[1]{*}{$\alpha$\textbackslash{}$\lambda$} & \multicolumn{2}{c}{0} & \multicolumn{2}{c}{0.4} & \multicolumn{2}{c}{0.8} & \multicolumn{2}{c}{0.9} & \multicolumn{2}{c}{0.95} & \multicolumn{2}{c}{0.975} & \multicolumn{2}{c}{0.99} & \multicolumn{2}{c}{1} & \multicolumn{2}{c}{greedy} & \multicolumn{2}{c}{\textit{META}} \\
          & mean  & std   & mean  & std   & mean  & std   & mean  & std   & mean  & std   & mean  & std   & mean  & std   & mean  & std   & mean  & std   & \textit{mean} & \textit{std} \\
    \midrule
    1.0e-5 & \cellcolor[rgb]{ .973,  .412,  .42}2.91e-1 & 2.38e-4 & \cellcolor[rgb]{ .976,  .518,  .525}2.39e-1 & 3.72e-4 & \cellcolor[rgb]{ .984,  .745,  .757}1.24e-1 & 5.1e-4 & \cellcolor[rgb]{ .984,  .835,  .847}7.86e-2 & 4.68e-4 & \cellcolor[rgb]{ .988,  .882,  .894}5.53e-2 & 4.06e-4 & \cellcolor[rgb]{ .988,  .902,  .914}4.43e-2 & 3.61e-4 & \cellcolor[rgb]{ .988,  .914,  .925}3.83e-2 & 3.28e-4 & \cellcolor[rgb]{ .988,  .922,  .933}\textbf{3.46e-2} & 3.04e-4 & \cellcolor[rgb]{ .98,  .573,  .584}2.1e-1 & 6.06e-4 & \cellcolor[rgb]{ .988,  .922,  .933}\textit{3.56e-2} & \textit{3.07e-4} \\
    2.0e-5 & \cellcolor[rgb]{ .976,  .549,  .557}2.23e-1 & 4.26e-4 & \cellcolor[rgb]{ .98,  .694,  .706}1.5e-1 & 5.23e-4 & \cellcolor[rgb]{ .988,  .906,  .918}4.23e-2 & 4.01e-4 & \cellcolor[rgb]{ .988,  .953,  .965}1.86e-2 & 2.7e-4 & \cellcolor[rgb]{ .988,  .969,  .98}1.05e-2 & 1.88e-4 & \cellcolor[rgb]{ .988,  .976,  .988}7.71e-3 & 1.46e-4 & \cellcolor[rgb]{ .988,  .976,  .988}6.49e-3 & 1.21e-4 & \cellcolor[rgb]{ .988,  .98,  .992}\textbf{5.85e-3} & 1.04e-4 & \cellcolor[rgb]{ .984,  .769,  .78}1.11e-1 & 6.53e-4 & \cellcolor[rgb]{ .988,  .98,  .992}\textit{6.06e-3} & \textit{1.08e-4} \\
    3.0e-5 & \cellcolor[rgb]{ .98,  .667,  .675}1.64e-1 & 5.34e-4 & \cellcolor[rgb]{ .984,  .812,  .824}9.04e-2 & 5.27e-4 & \cellcolor[rgb]{ .988,  .961,  .973}1.47e-2 & 2.52e-4 & \cellcolor[rgb]{ .988,  .98,  .992}4.83e-3 & 1.38e-4 & \cellcolor[rgb]{ .988,  .984,  .996}2.44e-3 & 8.4e-5 & \cellcolor[rgb]{ .988,  .988,  1}1.8e-3 & 5.94e-5 & \cellcolor[rgb]{ .988,  .988,  1}1.55e-3 & 4.6e-5 & \cellcolor[rgb]{ .988,  .988,  1}\textbf{1.44e-3} & 3.79e-5 & \cellcolor[rgb]{ .988,  .871,  .882}5.98e-2 & 5.36e-4 & \cellcolor[rgb]{ .988,  .988,  1}\textit{1.48e-3} & \textit{3.98e-5} \\
    4.0e-5 & \cellcolor[rgb]{ .984,  .757,  .769}1.18e-1 & 5.64e-4 & \cellcolor[rgb]{ .988,  .886,  .894}5.35e-2 & 4.58e-4 & \cellcolor[rgb]{ .988,  .98,  .992}5.25e-3 & 1.52e-4 & \cellcolor[rgb]{ .988,  .988,  1}1.41e-3 & 7.25e-5 & \cellcolor[rgb]{ .988,  .988,  1}7.22e-4 & 3.98e-5 & \cellcolor[rgb]{ .988,  .988,  1}5.71e-4 & 2.65e-5 & \cellcolor[rgb]{ .988,  .988,  1}5.21e-4 & 2.0e-5 & \cellcolor[rgb]{ .988,  .988,  1}\textbf{4.99e-4} & 1.66e-5 & \cellcolor[rgb]{ .988,  .925,  .937}3.33e-2 & 4.12e-4 & \cellcolor[rgb]{ .988,  .988,  1}\textit{5.06e-4} & \textit{1.73e-5} \\
    5.0e-5 & \cellcolor[rgb]{ .984,  .827,  .835}8.29e-2 & 5.39e-4 & \cellcolor[rgb]{ .988,  .929,  .941}3.12e-2 & 3.69e-4 & \cellcolor[rgb]{ .988,  .988,  1}1.95e-3 & 9.28e-5 & \cellcolor[rgb]{ .988,  .988,  1}4.91e-4 & 3.93e-5 & \cellcolor[rgb]{ .827,  .875,  .941}2.93e-4 & 2.0e-5 & \cellcolor[rgb]{ .769,  .835,  .922}2.59e-4 & 1.36e-5 & \cellcolor[rgb]{ .753,  .824,  .918}2.5e-4 & 1.12e-5 & \cellcolor[rgb]{ .749,  .82,  .914}\textbf{2.47e-4} & 1.03e-5 & \cellcolor[rgb]{ .988,  .953,  .965}1.92e-2 & 3.12e-4 & \cellcolor[rgb]{ .753,  .82,  .914}\textit{2.48e-4} & \textit{1.05e-5} \\
    1.0e-4 & \cellcolor[rgb]{ .988,  .965,  .976}1.3e-2 & 2.42e-4 & \cellcolor[rgb]{ .988,  .988,  1}2.16e-3 & 9.66e-5 & \cellcolor[rgb]{ .416,  .584,  .796}4.68e-5 & 9.39e-6 & \cellcolor[rgb]{ .412,  .58,  .796}\textbf{4.3e-5} & 6.39e-6 & \cellcolor[rgb]{ .42,  .588,  .8}4.78e-5 & 7.85e-6 & \cellcolor[rgb]{ .424,  .592,  .8}5.07e-5 & 8.83e-6 & \cellcolor[rgb]{ .427,  .592,  .8}5.26e-5 & 9.58e-6 & \cellcolor[rgb]{ .427,  .592,  .8}5.4e-5 & 1.02e-5 & \cellcolor[rgb]{ .988,  .988,  1}1.33e-3 & 7.25e-5 & \cellcolor[rgb]{ .424,  .588,  .8}\textit{5.01e-5} & \textit{7.71e-6} \\
    2.0e-4 & \cellcolor[rgb]{ .988,  .988,  1}4.48e-4 & 4.44e-5 & \cellcolor[rgb]{ .38,  .561,  .784}2.56e-5 & 1.18e-5 & \cellcolor[rgb]{ .357,  .541,  .776}\textbf{1.04e-5} & 6.65e-6 & \cellcolor[rgb]{ .365,  .549,  .78}1.43e-5 & 9.43e-6 & \cellcolor[rgb]{ .369,  .553,  .78}1.78e-5 & 1.21e-5 & \cellcolor[rgb]{ .373,  .557,  .784}2.05e-5 & 1.42e-5 & \cellcolor[rgb]{ .376,  .557,  .784}2.26e-5 & 1.59e-5 & \cellcolor[rgb]{ .38,  .561,  .784}2.44e-5 & 1.73e-5 & \cellcolor[rgb]{ .369,  .553,  .78}1.71e-5 & 5.96e-6 & \cellcolor[rgb]{ .361,  .545,  .776}\textit{1.31e-5} & \textit{7.5e-6} \\
    3.0e-4 & \cellcolor[rgb]{ .408,  .58,  .796}4.11e-5 & 1.48e-5 & \cellcolor[rgb]{ .353,  .541,  .776}\textbf{7.07e-6} & 5.34e-6 & \cellcolor[rgb]{ .361,  .545,  .776}1.28e-5 & 9.63e-6 & \cellcolor[rgb]{ .369,  .553,  .78}1.82e-5 & 1.39e-5 & \cellcolor[rgb]{ .376,  .557,  .784}2.34e-5 & 1.8e-5 & \cellcolor[rgb]{ .384,  .565,  .788}2.74e-5 & 2.12e-5 & \cellcolor[rgb]{ .388,  .569,  .788}3.07e-5 & 2.38e-5 & \cellcolor[rgb]{ .396,  .569,  .788}3.33e-5 & 2.6e-5 & \cellcolor[rgb]{ .357,  .541,  .776}1.02e-5 & 6.2e-6 & \cellcolor[rgb]{ .357,  .545,  .776}\textit{1.09e-5} & \textit{7.48e-6} \\
    4.0e-4 & \cellcolor[rgb]{ .361,  .545,  .776}1.2e-5 & 7.86e-6 & \cellcolor[rgb]{ .353,  .541,  .776}8.44e-6 & 6.18e-6 & \cellcolor[rgb]{ .369,  .549,  .78}1.68e-5 & 1.28e-5 & \cellcolor[rgb]{ .38,  .561,  .784}2.41e-5 & 1.84e-5 & \cellcolor[rgb]{ .392,  .569,  .788}3.11e-5 & 2.38e-5 & \cellcolor[rgb]{ .4,  .573,  .792}3.66e-5 & 2.82e-5 & \cellcolor[rgb]{ .408,  .58,  .796}4.09e-5 & 3.18e-5 & \cellcolor[rgb]{ .412,  .584,  .796}4.45e-5 & 3.47e-5 & \cellcolor[rgb]{ .353,  .541,  .776}\textbf{7.4e-6} & \textbf{4.9e-6} & \cellcolor[rgb]{ .357,  .545,  .776}\textit{1.12e-5} & \textit{7.43e-6} \\
    5.0e-4 & \cellcolor[rgb]{ .353,  .541,  .776}8.95e-6 & 6.34e-6 & \cellcolor[rgb]{ .357,  .541,  .776}1.04e-5 & 7.55e-6 & \cellcolor[rgb]{ .373,  .557,  .784}2.1e-5 & 1.59e-5 & \cellcolor[rgb]{ .388,  .565,  .788}3.01e-5 & 2.28e-5 & \cellcolor[rgb]{ .404,  .576,  .792}3.89e-5 & 2.98e-5 & \cellcolor[rgb]{ .416,  .584,  .796}4.59e-5 & 3.53e-5 & \cellcolor[rgb]{ .424,  .592,  .8}5.14e-5 & 3.99e-5 & \cellcolor[rgb]{ .431,  .596,  .804}5.59e-5 & 4.37e-5 & \cellcolor[rgb]{ .353,  .541,  .776}\textbf{7.77e-6} & \textbf{4.99e-6} & \cellcolor[rgb]{ .361,  .545,  .776}\textit{1.33e-5} & \textit{8.69e-6} \\
    1.0e-3 & \cellcolor[rgb]{ .365,  .549,  .78}1.5e-5 & 1.01e-5 & \cellcolor[rgb]{ .373,  .553,  .78}2.02e-5 & 1.46e-5 & \cellcolor[rgb]{ .408,  .58,  .796}4.15e-5 & 3.11e-5 & \cellcolor[rgb]{ .439,  .6,  .804}6.02e-5 & 4.55e-5 & \cellcolor[rgb]{ .471,  .624,  .816}7.84e-5 & 6.03e-5 & \cellcolor[rgb]{ .494,  .639,  .824}9.25e-5 & 7.2e-5 & \cellcolor[rgb]{ .51,  .651,  .831}1.04e-4 & 8.14e-5 & \cellcolor[rgb]{ .525,  .663,  .835}1.13e-4 & 8.91e-5 & \cellcolor[rgb]{ .365,  .549,  .78}1.5e-5 & 1.01e-5 & \cellcolor[rgb]{ .357,  .541,  .776}\textit{\textbf{9.7e-6}} & \textit{5.39e-6} \\
    2.0e-3 & \cellcolor[rgb]{ .388,  .565,  .788}2.94e-5 & 1.97e-5 & \cellcolor[rgb]{ .404,  .576,  .792}3.98e-5 & 2.86e-5 & \cellcolor[rgb]{ .478,  .627,  .82}8.28e-5 & 6.22e-5 & \cellcolor[rgb]{ .537,  .671,  .839}1.2e-4 & 9.22e-5 & \cellcolor[rgb]{ .6,  .714,  .863}1.56e-4 & 1.22e-4 & \cellcolor[rgb]{ .647,  .745,  .878}1.84e-4 & 1.44e-4 & \cellcolor[rgb]{ .682,  .773,  .89}2.07e-4 & 1.63e-4 & \cellcolor[rgb]{ .714,  .792,  .902}2.25e-4 & 1.78e-4 & \cellcolor[rgb]{ .388,  .565,  .788}2.94e-5 & 1.97e-5 & \cellcolor[rgb]{ .365,  .549,  .78}\textit{\textbf{1.49e-5}} & \textit{7.36e-6} \\
    3.0e-3 & \cellcolor[rgb]{ .412,  .58,  .796}4.36e-5 & 2.91e-5 & \cellcolor[rgb]{ .439,  .6,  .804}5.94e-5 & 4.24e-5 & \cellcolor[rgb]{ .545,  .675,  .843}1.24e-4 & 9.35e-5 & \cellcolor[rgb]{ .639,  .741,  .875}1.79e-4 & 1.38e-4 & \cellcolor[rgb]{ .725,  .804,  .906}2.33e-4 & 1.81e-4 & \cellcolor[rgb]{ .796,  .851,  .929}2.75e-4 & 2.15e-4 & \cellcolor[rgb]{ .851,  .89,  .949}3.08e-4 & 2.43e-4 & \cellcolor[rgb]{ .894,  .922,  .965}3.35e-4 & 2.65e-4 & \cellcolor[rgb]{ .412,  .58,  .796}4.38e-5 & 2.93e-5 & \cellcolor[rgb]{ .376,  .557,  .784}\textit{\textbf{2.19e-5}} & \textit{1.06e-5} \\
    4.0e-3 & \cellcolor[rgb]{ .435,  .6,  .804}5.78e-5 & 3.84e-5 & \cellcolor[rgb]{ .471,  .624,  .816}7.9e-5 & 5.61e-5 & \cellcolor[rgb]{ .612,  .722,  .867}1.65e-4 & 1.24e-4 & \cellcolor[rgb]{ .733,  .808,  .91}2.38e-4 & 1.83e-4 & \cellcolor[rgb]{ .851,  .89,  .949}3.09e-4 & 2.4e-4 & \cellcolor[rgb]{ .945,  .957,  .984}3.64e-4 & 2.85e-4 & \cellcolor[rgb]{ .988,  .988,  1}4.09e-4 & 3.22e-4 & \cellcolor[rgb]{ .988,  .988,  1}4.45e-4 & 3.52e-4 & \cellcolor[rgb]{ .435,  .6,  .804}5.81e-5 & 3.87e-5 & \cellcolor[rgb]{ .388,  .565,  .788}\textit{\textbf{2.93e-5}} & \textit{1.42e-5} \\
    5.0e-3 & \cellcolor[rgb]{ .459,  .616,  .812}7.22e-5 & 4.77e-5 & \cellcolor[rgb]{ .502,  .647,  .827}9.87e-5 & 6.99e-5 & \cellcolor[rgb]{ .678,  .769,  .89}2.05e-4 & 1.55e-4 & \cellcolor[rgb]{ .831,  .878,  .945}2.96e-4 & 2.27e-4 & \cellcolor[rgb]{ .976,  .98,  .996}3.85e-4 & 2.98e-4 & \cellcolor[rgb]{ .988,  .988,  1}4.54e-4 & 3.54e-4 & \cellcolor[rgb]{ .988,  .988,  1}5.09e-4 & 4.0e-4 & \cellcolor[rgb]{ .988,  .988,  1}5.55e-4 & 4.37e-4 & \cellcolor[rgb]{ .459,  .616,  .812}7.26e-5 & 4.82e-5 & \cellcolor[rgb]{ .4,  .573,  .792}\textit{\textbf{3.67e-5}} & \textit{1.78e-5} \\
    1.0e-2 & \cellcolor[rgb]{ .58,  .702,  .855}1.45e-4 & 9.42e-5 & \cellcolor[rgb]{ .667,  .761,  .886}1.97e-4 & 1.38e-4 & \cellcolor[rgb]{ .988,  .988,  1}4.05e-4 & 3.03e-4 & \cellcolor[rgb]{ .988,  .988,  1}5.86e-4 & 4.43e-4 & \cellcolor[rgb]{ .988,  .988,  1}7.64e-4 & 5.84e-4 & \cellcolor[rgb]{ .988,  .988,  1}9.03e-4 & 6.97e-4 & \cellcolor[rgb]{ .988,  .988,  1}1.02e-3 & 7.88e-4 & \cellcolor[rgb]{ .988,  .988,  1}1.11e-3 & 8.63e-4 & \cellcolor[rgb]{ .584,  .702,  .855}1.47e-4 & 9.57e-5 & \cellcolor[rgb]{ .463,  .62,  .816}\textit{\textbf{7.46e-5}} & \textit{3.62e-5} \\
    2.0e-2 & \cellcolor[rgb]{ .824,  .871,  .941}2.91e-4 & 1.86e-4 & \cellcolor[rgb]{ .988,  .988,  1}3.94e-4 & 2.73e-4 & \cellcolor[rgb]{ .988,  .988,  1}8.09e-4 & 5.93e-4 & \cellcolor[rgb]{ .988,  .988,  1}1.17e-3 & 8.79e-4 & \cellcolor[rgb]{ .988,  .988,  1}1.53e-3 & 1.17e-3 & \cellcolor[rgb]{ .988,  .988,  1}1.81e-3 & 1.41e-3 & \cellcolor[rgb]{ .988,  .988,  1}2.04e-3 & 1.6e-3 & \cellcolor[rgb]{ .988,  .988,  1}2.22e-3 & 1.75e-3 & \cellcolor[rgb]{ .839,  .882,  .945}3.01e-4 & 1.95e-4 & \cellcolor[rgb]{ .596,  .714,  .863}\textit{\textbf{1.55e-4}} & \textit{7.63e-5} \\
    3.0e-2 & \cellcolor[rgb]{ .988,  .988,  1}4.41e-4 & 2.81e-4 & \cellcolor[rgb]{ .988,  .988,  1}5.95e-4 & 4.08e-4 & \cellcolor[rgb]{ .988,  .988,  1}1.22e-3 & 8.87e-4 & \cellcolor[rgb]{ .988,  .988,  1}1.76e-3 & 1.33e-3 & \cellcolor[rgb]{ .988,  .988,  1}2.3e-3 & 1.79e-3 & \cellcolor[rgb]{ .988,  .984,  .996}2.73e-3 & 2.16e-3 & \cellcolor[rgb]{ .988,  .984,  .996}3.07e-3 & 2.45e-3 & \cellcolor[rgb]{ .988,  .984,  .996}3.34e-3 & 2.7e-3 & \cellcolor[rgb]{ .988,  .988,  1}4.68e-4 & 3.07e-4 & \cellcolor[rgb]{ .737,  .812,  .91}\textit{\textbf{2.4e-4}} & \textit{1.2e-4} \\
    4.0e-2 & \cellcolor[rgb]{ .988,  .988,  1}5.95e-4 & 3.8e-4 & \cellcolor[rgb]{ .988,  .988,  1}8.0e-4 & 5.46e-4 & \cellcolor[rgb]{ .988,  .988,  1}1.63e-3 & 1.19e-3 & \cellcolor[rgb]{ .988,  .988,  .996}2.36e-3 & 1.81e-3 & \cellcolor[rgb]{ .988,  .984,  .996}3.09e-3 & 2.44e-3 & \cellcolor[rgb]{ .988,  .984,  .996}3.66e-3 & 2.94e-3 & \cellcolor[rgb]{ .988,  .984,  .996}4.11e-3 & 3.35e-3 & \cellcolor[rgb]{ .988,  .98,  .992}4.48e-3 & 3.68e-3 & \cellcolor[rgb]{ .988,  .988,  1}6.44e-4 & 4.23e-4 & \cellcolor[rgb]{ .882,  .914,  .961}\textit{\textbf{3.28e-4}} & \textit{1.69e-4} \\
    5.0e-2 & \cellcolor[rgb]{ .988,  .988,  1}7.54e-4 & 4.82e-4 & \cellcolor[rgb]{ .988,  .988,  1}1.01e-3 & 6.85e-4 & \cellcolor[rgb]{ .988,  .988,  1}2.04e-3 & 1.51e-3 & \cellcolor[rgb]{ .988,  .984,  .996}2.97e-3 & 2.31e-3 & \cellcolor[rgb]{ .988,  .984,  .996}3.89e-3 & 3.11e-3 & \cellcolor[rgb]{ .988,  .98,  .992}4.6e-3 & 3.75e-3 & \cellcolor[rgb]{ .988,  .98,  .992}5.17e-3 & 4.26e-3 & \cellcolor[rgb]{ .988,  .98,  .992}5.63e-3 & 4.68e-3 & \cellcolor[rgb]{ .988,  .988,  1}8.37e-4 & 5.55e-4 & \cellcolor[rgb]{ .988,  .988,  1}\textit{\textbf{4.21e-4}} & \textit{2.22e-4} \\
    1.0e-1 & \cellcolor[rgb]{ .988,  .988,  1}1.62e-3 & 1.04e-3 & \cellcolor[rgb]{ .988,  .988,  1}2.1e-3 & 1.45e-3 & \cellcolor[rgb]{ .988,  .984,  .996}4.21e-3 & 3.24e-3 & \cellcolor[rgb]{ .988,  .98,  .992}6.09e-3 & 4.89e-3 & \cellcolor[rgb]{ .988,  .976,  .988}7.94e-3 & 6.52e-3 & \cellcolor[rgb]{ .988,  .973,  .984}9.38e-3 & 7.79e-3 & \cellcolor[rgb]{ .988,  .969,  .98}1.05e-2 & 8.81e-3 & \cellcolor[rgb]{ .988,  .969,  .98}1.14e-2 & 9.65e-3 & \cellcolor[rgb]{ .988,  .988,  1}2.08e-3 & 1.45e-3 & \cellcolor[rgb]{ .988,  .988,  1}\textit{\textbf{9.58e-4}} & \textit{5.44e-4} \\
    \bottomrule
    \bottomrule
    \multicolumn{21}{l}{Color indicators are added to locate the extreme values: the bluer the better accuracy, the redder the worse. Also, the best result for each $\alpha$ (each row) is marked in \textbf{bold} font.}          
    \end{tabular}%
  \label{tab:ringworld_behavior_04_target_035}%
\end{table}%

% Table generated by Excel2LaTeX from sheet 'Sheet1'
\begin{table}[htbp]
\tiny
\setlength{\tabcolsep}{2pt}
  \centering
  \caption{Detailed Results on RingWorld (Target: 0.25, Behavior: 0.3)}
    \begin{tabular}{ccccccccccccccccccccc}
    \toprule
    \toprule
    baseline & \multicolumn{19}{c}{True Online TD}                                                                                                                   &  \\
    \multirow{2}[1]{*}{$\alpha$\textbackslash{}$\lambda$} & \multicolumn{2}{c}{0} & \multicolumn{2}{c}{0.4} & \multicolumn{2}{c}{0.8} & \multicolumn{2}{c}{0.9} & \multicolumn{2}{c}{0.95} & \multicolumn{2}{c}{0.975} & \multicolumn{2}{c}{0.99} & \multicolumn{2}{c}{1} & \multicolumn{2}{c}{greedy} & \multicolumn{2}{c}{\textit{META}} \\
          & mean  & std   & mean  & std   & mean  & std   & mean  & std   & mean  & std   & mean  & std   & mean  & std   & mean  & std   & mean  & std   & \textit{mean} & \textit{std} \\
    \midrule
    1.0e-5 & \cellcolor[rgb]{ .973,  .412,  .42}3.72e-1 & 3.14e-4 & \cellcolor[rgb]{ .98,  .561,  .569}2.78e-1 & 4.24e-4 & \cellcolor[rgb]{ .984,  .808,  .82}1.17e-1 & 3.99e-4 & \cellcolor[rgb]{ .988,  .882,  .89}7.06e-2 & 3.08e-4 & \cellcolor[rgb]{ .988,  .914,  .925}4.95e-2 & 2.44e-4 & \cellcolor[rgb]{ .988,  .929,  .941}4.02e-2 & 2.09e-4 & \cellcolor[rgb]{ .988,  .937,  .949}3.51e-2 & 1.87e-4 & \cellcolor[rgb]{ .988,  .941,  .953}\textbf{3.2e-2} & 1.72e-4 & \cellcolor[rgb]{ .98,  .62,  .627}2.39e-1 & 5.17e-4 & \cellcolor[rgb]{ .988,  .937,  .949}\textit{3.31e-2} & \textit{1.75e-4} \\
    2.0e-5 & \cellcolor[rgb]{ .98,  .62,  .627}2.4e-1 & 4.8e-4 & \cellcolor[rgb]{ .984,  .78,  .792}1.34e-1 & 4.54e-4 & \cellcolor[rgb]{ .988,  .945,  .957}2.85e-2 & 1.99e-4 & \cellcolor[rgb]{ .988,  .973,  .984}1.26e-2 & 1.15e-4 & \cellcolor[rgb]{ .988,  .98,  .992}7.57e-3 & 7.88e-5 & \cellcolor[rgb]{ .988,  .98,  .992}5.76e-3 & 6.35e-5 & \cellcolor[rgb]{ .988,  .984,  .996}4.9e-3 & 5.51e-5 & \cellcolor[rgb]{ .988,  .984,  .996}\textbf{4.41e-3} & 5.0e-5 & \cellcolor[rgb]{ .988,  .847,  .859}9.32e-2 & 4.14e-4 & \cellcolor[rgb]{ .988,  .984,  .996}\textit{4.55e-3} & \textit{5.14e-5} \\
    3.0e-5 & \cellcolor[rgb]{ .984,  .765,  .776}1.46e-1 & 4.95e-4 & \cellcolor[rgb]{ .988,  .894,  .902}6.31e-2 & 3.39e-4 & \cellcolor[rgb]{ .988,  .976,  .988}8.73e-3 & 9.03e-5 & \cellcolor[rgb]{ .988,  .984,  .996}3.65e-3 & 4.99e-5 & \cellcolor[rgb]{ .988,  .988,  1}2.25e-3 & 3.62e-5 & \cellcolor[rgb]{ .988,  .988,  1}1.76e-3 & 3.09e-5 & \cellcolor[rgb]{ .988,  .988,  1}1.52e-3 & 2.82e-5 & \cellcolor[rgb]{ .988,  .988,  1}\textbf{1.39e-3} & 2.66e-5 & \cellcolor[rgb]{ .988,  .929,  .941}3.95e-2 & 2.63e-4 & \cellcolor[rgb]{ .988,  .988,  1}\textit{1.4e-3} & \textit{2.69e-5} \\
    4.0e-5 & \cellcolor[rgb]{ .988,  .859,  .867}8.57e-2 & 4.19e-4 & \cellcolor[rgb]{ .988,  .945,  .957}3.02e-2 & 2.19e-4 & \cellcolor[rgb]{ .988,  .984,  .996}3.56e-3 & 4.74e-5 & \cellcolor[rgb]{ .988,  .988,  1}1.62e-3 & 2.92e-5 & \cellcolor[rgb]{ .988,  .988,  1}1.06e-3 & 2.34e-5 & \cellcolor[rgb]{ .988,  .988,  1}8.6e-4 & 2.12e-5 & \cellcolor[rgb]{ .988,  .988,  1}7.56e-4 & 2.0e-5 & \cellcolor[rgb]{ .988,  .988,  1}\textbf{6.93e-4} & 1.93e-5 & \cellcolor[rgb]{ .988,  .961,  .973}1.89e-2 & 1.66e-4 & \cellcolor[rgb]{ .988,  .988,  1}\textit{6.96e-4} & \textit{1.94e-5} \\
    5.0e-5 & \cellcolor[rgb]{ .988,  .914,  .925}5.03e-2 & 3.18e-4 & \cellcolor[rgb]{ .988,  .969,  .976}1.53e-2 & 1.36e-4 & \cellcolor[rgb]{ .988,  .988,  1}1.84e-3 & 3.0e-5 & \cellcolor[rgb]{ .988,  .988,  1}9.09e-4 & 2.1e-5 & \cellcolor[rgb]{ .988,  .988,  1}6.22e-4 & 1.79e-5 & \cellcolor[rgb]{ .988,  .988,  1}5.11e-4 & 1.66e-5 & \cellcolor[rgb]{ .988,  .988,  1}4.53e-4 & 1.59e-5 & \cellcolor[rgb]{ .988,  .988,  1}\textbf{4.17e-4} & 1.54e-5 & \cellcolor[rgb]{ .988,  .976,  .988}9.98e-3 & 1.09e-4 & \cellcolor[rgb]{ .988,  .988,  1}\textit{4.18e-4} & \textit{1.55e-5} \\
    1.0e-4 & \cellcolor[rgb]{ .988,  .98,  .992}5.57e-3 & 6.25e-5 & \cellcolor[rgb]{ .988,  .988,  1}1.59e-3 & 2.46e-5 & \cellcolor[rgb]{ .988,  .988,  1}2.69e-4 & 1.16e-5 & \cellcolor[rgb]{ .988,  .988,  1}1.47e-4 & 9.44e-6 & \cellcolor[rgb]{ .851,  .89,  .949}1.05e-4 & 8.38e-6 & \cellcolor[rgb]{ .765,  .831,  .922}8.81e-5 & 7.88e-6 & \cellcolor[rgb]{ .722,  .8,  .906}7.92e-5 & 7.6e-6 & \cellcolor[rgb]{ .694,  .78,  .894}\textbf{7.37e-5} & 7.42e-6 & \cellcolor[rgb]{ .988,  .988,  1}1.02e-3 & 2.38e-5 & \cellcolor[rgb]{ .694,  .78,  .894}\textit{7.38e-5} & \textit{7.43e-6} \\
    2.0e-4 & \cellcolor[rgb]{ .988,  .988,  1}5.71e-4 & 1.47e-5 & \cellcolor[rgb]{ .988,  .988,  1}1.78e-4 & 9.52e-6 & \cellcolor[rgb]{ .467,  .62,  .816}2.75e-5 & 5.14e-6 & \cellcolor[rgb]{ .408,  .576,  .792}1.52e-5 & 4.32e-6 & \cellcolor[rgb]{ .388,  .565,  .788}1.18e-5 & 4.2e-6 & \cellcolor[rgb]{ .384,  .561,  .784}1.06e-5 & 4.29e-6 & \cellcolor[rgb]{ .38,  .561,  .784}1.01e-5 & 4.42e-6 & \cellcolor[rgb]{ .38,  .561,  .784}9.88e-6 & 4.53e-6 & \cellcolor[rgb]{ .827,  .875,  .941}1.0e-4 & 7.62e-6 & \cellcolor[rgb]{ .38,  .557,  .784}\textit{\textbf{9.74e-6}} & \textit{4.24e-6} \\
    3.0e-4 & \cellcolor[rgb]{ .988,  .988,  1}1.51e-4 & 9.27e-6 & \cellcolor[rgb]{ .541,  .675,  .843}4.26e-5 & 6.08e-6 & \cellcolor[rgb]{ .365,  .549,  .78}6.92e-6 & 3.59e-6 & \cellcolor[rgb]{ .357,  .545,  .776}5.6e-6 & 3.94e-6 & \cellcolor[rgb]{ .361,  .545,  .776}5.89e-6 & 4.59e-6 & \cellcolor[rgb]{ .361,  .545,  .776}6.31e-6 & 5.1e-6 & \cellcolor[rgb]{ .365,  .549,  .78}6.69e-6 & 5.48e-6 & \cellcolor[rgb]{ .365,  .549,  .78}6.99e-6 & 5.78e-6 & \cellcolor[rgb]{ .443,  .604,  .808}2.27e-5 & 4.33e-6 & \cellcolor[rgb]{ .357,  .541,  .776}\textit{\textbf{5.15e-6}} & \textit{3.58e-6} \\
    4.0e-4 & \cellcolor[rgb]{ .616,  .725,  .867}5.8e-5 & 6.95e-6 & \cellcolor[rgb]{ .408,  .58,  .796}1.57e-5 & 4.47e-6 & \cellcolor[rgb]{ .357,  .541,  .776}4.97e-6 & 3.77e-6 & \cellcolor[rgb]{ .361,  .545,  .776}5.76e-6 & 4.87e-6 & \cellcolor[rgb]{ .365,  .549,  .78}6.84e-6 & 5.94e-6 & \cellcolor[rgb]{ .369,  .553,  .78}7.66e-6 & 6.68e-6 & \cellcolor[rgb]{ .373,  .553,  .78}8.27e-6 & 7.23e-6 & \cellcolor[rgb]{ .373,  .557,  .784}8.75e-6 & 7.64e-6 & \cellcolor[rgb]{ .373,  .553,  .78}8.13e-6 & 2.98e-6 & \cellcolor[rgb]{ .353,  .541,  .776}\textit{\textbf{4.64e-6}} & \textit{3.65e-6} \\
    5.0e-4 & \cellcolor[rgb]{ .471,  .624,  .816}2.84e-5 & 5.62e-6 & \cellcolor[rgb]{ .373,  .553,  .78}8.45e-6 & 3.74e-6 & \cellcolor[rgb]{ .357,  .545,  .776}5.41e-6 & 4.48e-6 & \cellcolor[rgb]{ .365,  .549,  .78}6.97e-6 & 6.05e-6 & \cellcolor[rgb]{ .373,  .553,  .78}8.47e-6 & 7.42e-6 & \cellcolor[rgb]{ .376,  .557,  .784}9.54e-6 & 8.37e-6 & \cellcolor[rgb]{ .38,  .561,  .784}1.03e-5 & 9.06e-6 & \cellcolor[rgb]{ .384,  .565,  .788}1.09e-5 & 9.58e-6 & \cellcolor[rgb]{ .353,  .541,  .776}\textbf{4.52e-6} & 2.42e-6 & \cellcolor[rgb]{ .357,  .545,  .776}\textit{5.5e-6} & \textit{4.38e-6} \\
    1.0e-3 & \cellcolor[rgb]{ .361,  .545,  .776}5.93e-6 & 3.68e-6 & \cellcolor[rgb]{ .361,  .545,  .776}5.78e-6 & 4.43e-6 & \cellcolor[rgb]{ .38,  .561,  .784}1.03e-5 & 8.91e-6 & \cellcolor[rgb]{ .4,  .573,  .792}1.39e-5 & 1.21e-5 & \cellcolor[rgb]{ .416,  .584,  .796}1.71e-5 & 1.48e-5 & \cellcolor[rgb]{ .427,  .592,  .8}1.92e-5 & 1.67e-5 & \cellcolor[rgb]{ .435,  .596,  .804}2.08e-5 & 1.8e-5 & \cellcolor[rgb]{ .439,  .6,  .804}2.21e-5 & 1.9e-5 & \cellcolor[rgb]{ .353,  .541,  .776}4.35e-6 & 3.0e-6 & \cellcolor[rgb]{ .353,  .541,  .776}\textit{\textbf{4.11e-6}} & \textit{2.65e-6} \\
    2.0e-3 & \cellcolor[rgb]{ .373,  .553,  .78}8.61e-6 & 6.08e-6 & \cellcolor[rgb]{ .384,  .565,  .788}1.11e-5 & 8.87e-6 & \cellcolor[rgb]{ .431,  .596,  .804}2.07e-5 & 1.76e-5 & \cellcolor[rgb]{ .471,  .624,  .816}2.8e-5 & 2.37e-5 & \cellcolor[rgb]{ .498,  .643,  .827}3.41e-5 & 2.88e-5 & \cellcolor[rgb]{ .522,  .659,  .835}3.84e-5 & 3.24e-5 & \cellcolor[rgb]{ .537,  .671,  .839}4.15e-5 & 3.51e-5 & \cellcolor[rgb]{ .549,  .678,  .843}4.39e-5 & 3.71e-5 & \cellcolor[rgb]{ .373,  .553,  .78}8.6e-6 & 6.08e-6 & \cellcolor[rgb]{ .357,  .541,  .776}\textit{\textbf{5.18e-6}} & \textit{2.81e-6} \\
    3.0e-3 & \cellcolor[rgb]{ .396,  .569,  .788}1.29e-5 & 9.25e-6 & \cellcolor[rgb]{ .412,  .584,  .796}1.68e-5 & 1.33e-5 & \cellcolor[rgb]{ .482,  .631,  .82}3.1e-5 & 2.57e-5 & \cellcolor[rgb]{ .537,  .671,  .839}4.17e-5 & 3.46e-5 & \cellcolor[rgb]{ .58,  .702,  .855}5.08e-5 & 4.22e-5 & \cellcolor[rgb]{ .612,  .725,  .867}5.72e-5 & 4.75e-5 & \cellcolor[rgb]{ .635,  .741,  .875}6.18e-5 & 5.14e-5 & \cellcolor[rgb]{ .655,  .753,  .882}6.54e-5 & 5.44e-5 & \cellcolor[rgb]{ .396,  .569,  .788}1.29e-5 & 9.29e-6 & \cellcolor[rgb]{ .369,  .549,  .78}\textit{\textbf{7.49e-6}} & \textit{3.93e-6} \\
    4.0e-3 & \cellcolor[rgb]{ .416,  .584,  .796}1.72e-5 & 1.24e-5 & \cellcolor[rgb]{ .439,  .604,  .808}2.24e-5 & 1.75e-5 & \cellcolor[rgb]{ .533,  .667,  .839}4.12e-5 & 3.36e-5 & \cellcolor[rgb]{ .604,  .718,  .863}5.53e-5 & 4.52e-5 & \cellcolor[rgb]{ .663,  .761,  .886}6.74e-5 & 5.53e-5 & \cellcolor[rgb]{ .706,  .788,  .898}7.57e-5 & 6.23e-5 & \cellcolor[rgb]{ .737,  .808,  .91}8.19e-5 & 6.75e-5 & \cellcolor[rgb]{ .757,  .827,  .918}8.65e-5 & 7.14e-5 & \cellcolor[rgb]{ .416,  .584,  .796}1.72e-5 & 1.24e-5 & \cellcolor[rgb]{ .38,  .561,  .784}\textit{\textbf{9.84e-6}} & \textit{5.07e-6} \\
    5.0e-3 & \cellcolor[rgb]{ .435,  .6,  .804}2.15e-5 & 1.54e-5 & \cellcolor[rgb]{ .471,  .624,  .816}2.79e-5 & 2.17e-5 & \cellcolor[rgb]{ .584,  .702,  .855}5.13e-5 & 4.12e-5 & \cellcolor[rgb]{ .671,  .765,  .886}6.88e-5 & 5.57e-5 & \cellcolor[rgb]{ .745,  .816,  .914}8.38e-5 & 6.82e-5 & \cellcolor[rgb]{ .796,  .851,  .929}9.42e-5 & 7.69e-5 & \cellcolor[rgb]{ .835,  .878,  .945}1.02e-4 & 8.33e-5 & \cellcolor[rgb]{ .863,  .898,  .953}1.08e-4 & 8.81e-5 & \cellcolor[rgb]{ .435,  .6,  .804}2.16e-5 & 1.55e-5 & \cellcolor[rgb]{ .392,  .569,  .788}\textit{\textbf{1.23e-5}} & \textit{6.28e-6} \\
    1.0e-2 & \cellcolor[rgb]{ .545,  .675,  .843}4.31e-5 & 2.98e-5 & \cellcolor[rgb]{ .604,  .718,  .863}5.56e-5 & 4.08e-5 & \cellcolor[rgb]{ .831,  .878,  .945}1.01e-4 & 7.9e-5 & \cellcolor[rgb]{ .988,  .988,  1}1.36e-4 & 1.08e-4 & \cellcolor[rgb]{ .988,  .988,  1}1.65e-4 & 1.33e-4 & \cellcolor[rgb]{ .988,  .988,  1}1.85e-4 & 1.5e-4 & \cellcolor[rgb]{ .988,  .988,  1}1.99e-4 & 1.62e-4 & \cellcolor[rgb]{ .988,  .988,  1}2.1e-4 & 1.72e-4 & \cellcolor[rgb]{ .545,  .675,  .843}4.34e-5 & 3.01e-5 & \cellcolor[rgb]{ .455,  .612,  .812}\textit{\textbf{2.47e-5}} & \textit{1.26e-5} \\
    2.0e-2 & \cellcolor[rgb]{ .757,  .824,  .918}8.63e-5 & 5.68e-5 & \cellcolor[rgb]{ .878,  .91,  .961}1.11e-4 & 7.88e-5 & \cellcolor[rgb]{ .988,  .988,  1}2.01e-4 & 1.59e-4 & \cellcolor[rgb]{ .988,  .988,  1}2.67e-4 & 2.18e-4 & \cellcolor[rgb]{ .988,  .988,  1}3.23e-4 & 2.66e-4 & \cellcolor[rgb]{ .988,  .988,  1}3.61e-4 & 2.99e-4 & \cellcolor[rgb]{ .988,  .988,  1}3.89e-4 & 3.23e-4 & \cellcolor[rgb]{ .988,  .988,  1}4.1e-4 & 3.41e-4 & \cellcolor[rgb]{ .769,  .831,  .922}8.82e-5 & 5.87e-5 & \cellcolor[rgb]{ .576,  .698,  .855}\textit{\textbf{4.96e-5}} & \textit{2.59e-5} \\
    3.0e-2 & \cellcolor[rgb]{ .973,  .976,  .992}1.3e-4 & 8.5e-5 & \cellcolor[rgb]{ .988,  .988,  1}1.66e-4 & 1.2e-4 & \cellcolor[rgb]{ .988,  .988,  1}2.99e-4 & 2.43e-4 & \cellcolor[rgb]{ .988,  .988,  1}3.96e-4 & 3.28e-4 & \cellcolor[rgb]{ .988,  .988,  1}4.77e-4 & 3.98e-4 & \cellcolor[rgb]{ .988,  .988,  1}5.33e-4 & 4.45e-4 & \cellcolor[rgb]{ .988,  .988,  1}5.73e-4 & 4.8e-4 & \cellcolor[rgb]{ .988,  .988,  1}6.03e-4 & 5.07e-4 & \cellcolor[rgb]{ .988,  .988,  1}1.35e-4 & 9.14e-5 & \cellcolor[rgb]{ .702,  .784,  .898}\textit{\textbf{7.47e-5}} & \textit{3.92e-5} \\
    4.0e-2 & \cellcolor[rgb]{ .988,  .988,  1}1.74e-4 & 1.15e-4 & \cellcolor[rgb]{ .988,  .988,  1}2.21e-4 & 1.64e-4 & \cellcolor[rgb]{ .988,  .988,  1}3.95e-4 & 3.25e-4 & \cellcolor[rgb]{ .988,  .988,  1}5.22e-4 & 4.35e-4 & \cellcolor[rgb]{ .988,  .988,  1}6.28e-4 & 5.25e-4 & \cellcolor[rgb]{ .988,  .988,  1}7.01e-4 & 5.88e-4 & \cellcolor[rgb]{ .988,  .988,  1}7.53e-4 & 6.34e-4 & \cellcolor[rgb]{ .988,  .988,  1}7.92e-4 & 6.69e-4 & \cellcolor[rgb]{ .988,  .988,  1}1.84e-4 & 1.27e-4 & \cellcolor[rgb]{ .831,  .875,  .941}\textit{\textbf{1.01e-4}} & \textit{5.27e-5} \\
    5.0e-2 & \cellcolor[rgb]{ .988,  .988,  1}2.19e-4 & 1.47e-4 & \cellcolor[rgb]{ .988,  .988,  1}2.76e-4 & 2.09e-4 & \cellcolor[rgb]{ .988,  .988,  1}4.91e-4 & 4.06e-4 & \cellcolor[rgb]{ .988,  .988,  1}6.47e-4 & 5.4e-4 & \cellcolor[rgb]{ .988,  .988,  1}7.77e-4 & 6.52e-4 & \cellcolor[rgb]{ .988,  .988,  1}8.66e-4 & 7.3e-4 & \cellcolor[rgb]{ .988,  .988,  1}9.3e-4 & 7.87e-4 & \cellcolor[rgb]{ .988,  .988,  1}9.78e-4 & 8.32e-4 & \cellcolor[rgb]{ .988,  .988,  1}2.32e-4 & 1.64e-4 & \cellcolor[rgb]{ .965,  .973,  .992}\textit{\textbf{1.28e-4}} & \textit{6.74e-5} \\
    1.0e-1 & \cellcolor[rgb]{ .988,  .988,  1}4.51e-4 & 3.18e-4 & \cellcolor[rgb]{ .988,  .988,  1}5.55e-4 & 4.31e-4 & \cellcolor[rgb]{ .988,  .988,  1}9.62e-4 & 7.98e-4 & \cellcolor[rgb]{ .988,  .988,  1}1.26e-3 & 1.06e-3 & \cellcolor[rgb]{ .988,  .988,  1}1.51e-3 & 1.29e-3 & \cellcolor[rgb]{ .988,  .988,  1}1.67e-3 & 1.47e-3 & \cellcolor[rgb]{ .988,  .988,  1}1.8e-3 & 1.6e-3 & \cellcolor[rgb]{ .988,  .988,  1}1.89e-3 & 1.7e-3 & \cellcolor[rgb]{ .988,  .988,  1}4.89e-4 & 3.65e-4 & \cellcolor[rgb]{ .988,  .988,  1}\textit{\textbf{2.77e-4}} & \textit{1.49e-4} \\
    \bottomrule
    \bottomrule
    \multicolumn{21}{l}{Color indicators are added to locate the extreme values: the bluer the better accuracy, the redder the worse. Also, the best result for each $\alpha$ (each row) is marked in \textbf{bold} font.}          
    \end{tabular}%
  \label{tab:ringworld_behavior_03_target_025}%
\end{table}%

% Table generated by Excel2LaTeX from sheet 'Sheet1'
\begin{table}[htbp]
\tiny
\setlength{\tabcolsep}{2pt}
  \centering
  \caption{Detailed Results on RingWorld (Target: 0.15, Behavior: 0.2)}
    \begin{tabular}{ccccccccccccccccccccc}
    \toprule
    \toprule
    baseline & \multicolumn{19}{c}{True Online TD}                                                                                                                   &  \\
    \multirow{2}[1]{*}{$\alpha$\textbackslash{}$\lambda$} & \multicolumn{2}{c}{0} & \multicolumn{2}{c}{0.4} & \multicolumn{2}{c}{0.8} & \multicolumn{2}{c}{0.9} & \multicolumn{2}{c}{0.95} & \multicolumn{2}{c}{0.975} & \multicolumn{2}{c}{0.99} & \multicolumn{2}{c}{1} & \multicolumn{2}{c}{greedy} & \multicolumn{2}{c}{\textit{META}} \\
          & mean  & std   & mean  & std   & mean  & std   & mean  & std   & mean  & std   & mean  & std   & mean  & std   & mean  & std   & mean  & std   & \textit{mean} & \textit{std} \\
    \midrule
    1.0e-5 & \cellcolor[rgb]{ .973,  .412,  .42}3.65e-1 & 2.65e-4 & \cellcolor[rgb]{ .98,  .608,  .616}2.43e-1 & 3.15e-4 & \cellcolor[rgb]{ .988,  .851,  .863}8.72e-2 & 2.12e-4 & \cellcolor[rgb]{ .988,  .906,  .918}5.3e-2 & 1.48e-4 & \cellcolor[rgb]{ .988,  .929,  .941}3.89e-2 & 1.14e-4 & \cellcolor[rgb]{ .988,  .937,  .949}3.28e-2 & 9.74e-5 & \cellcolor[rgb]{ .988,  .945,  .957}2.94e-2 & 8.77e-5 & \cellcolor[rgb]{ .988,  .949,  .957}2.74e-2 & 8.15e-5 & \cellcolor[rgb]{ .98,  .663,  .671}2.09e-1 & 3.22e-4 & \cellcolor[rgb]{ .988,  .945,  .957}\textit{2.83e-2} & \textit{8.32e-5} \\
    2.0e-5 & \cellcolor[rgb]{ .98,  .694,  .706}1.87e-1 & 3.37e-4 & \cellcolor[rgb]{ .988,  .859,  .867}8.42e-2 & 2.41e-4 & \cellcolor[rgb]{ .988,  .965,  .976}1.6e-2 & 6.53e-5 & \cellcolor[rgb]{ .988,  .976,  .988}8.52e-3 & 3.93e-5 & \cellcolor[rgb]{ .988,  .98,  .992}6.17e-3 & 3.27e-5 & \cellcolor[rgb]{ .988,  .98,  .992}5.29e-3 & 3.07e-5 & \cellcolor[rgb]{ .988,  .984,  .996}4.85e-3 & 2.99e-5 & \cellcolor[rgb]{ .988,  .984,  .996}4.58e-3 & 2.94e-5 & \cellcolor[rgb]{ .988,  .902,  .91}5.72e-2 & 1.81e-4 & \cellcolor[rgb]{ .988,  .984,  .996}\textit{4.65e-3} & \textit{2.95e-5} \\
    3.0e-5 & \cellcolor[rgb]{ .988,  .855,  .867}8.51e-2 & 2.67e-4 & \cellcolor[rgb]{ .988,  .945,  .957}2.92e-2 & 1.21e-4 & \cellcolor[rgb]{ .988,  .98,  .992}5.3e-3 & 2.76e-5 & \cellcolor[rgb]{ .988,  .984,  .996}3.35e-3 & 2.33e-5 & \cellcolor[rgb]{ .988,  .984,  .996}2.71e-3 & 2.22e-5 & \cellcolor[rgb]{ .988,  .988,  1}2.45e-3 & 2.17e-5 & \cellcolor[rgb]{ .988,  .988,  1}2.3e-3 & 2.14e-5 & \cellcolor[rgb]{ .988,  .988,  1}2.22e-3 & 2.12e-5 & \cellcolor[rgb]{ .988,  .961,  .973}1.92e-2 & 8.72e-5 & \cellcolor[rgb]{ .988,  .988,  1}\textit{2.22e-3} & \textit{2.12e-5} \\
    4.0e-5 & \cellcolor[rgb]{ .988,  .929,  .941}3.8e-2 & 1.63e-4 & \cellcolor[rgb]{ .988,  .973,  .984}1.18e-2 & 5.48e-5 & \cellcolor[rgb]{ .988,  .984,  .996}2.8e-3 & 1.92e-5 & \cellcolor[rgb]{ .988,  .988,  1}1.99e-3 & 1.75e-5 & \cellcolor[rgb]{ .988,  .988,  1}1.68e-3 & 1.67e-5 & \cellcolor[rgb]{ .988,  .988,  1}1.55e-3 & 1.64e-5 & \cellcolor[rgb]{ .988,  .988,  1}1.47e-3 & 1.62e-5 & \cellcolor[rgb]{ .988,  .988,  1}1.43e-3 & 1.61e-5 & \cellcolor[rgb]{ .988,  .976,  .988}8.33e-3 & 4.64e-5 & \cellcolor[rgb]{ .988,  .988,  1}\textit{1.43e-3} & \textit{1.61e-5} \\
    5.0e-5 & \cellcolor[rgb]{ .988,  .961,  .973}1.82e-2 & 9.0e-5 & \cellcolor[rgb]{ .988,  .98,  .992}5.94e-3 & 2.87e-5 & \cellcolor[rgb]{ .988,  .988,  1}1.84e-3 & 1.51e-5 & \cellcolor[rgb]{ .988,  .988,  1}1.38e-3 & 1.41e-5 & \cellcolor[rgb]{ .988,  .988,  1}1.19e-3 & 1.36e-5 & \cellcolor[rgb]{ .988,  .988,  1}1.11e-3 & 1.35e-5 & \cellcolor[rgb]{ .988,  .988,  1}1.06e-3 & 1.34e-5 & \cellcolor[rgb]{ .988,  .988,  1}1.03e-3 & 1.33e-5 & \cellcolor[rgb]{ .988,  .984,  .996}4.43e-3 & 2.82e-5 & \cellcolor[rgb]{ .988,  .988,  1}\textit{1.03e-3} & \textit{1.33e-5} \\
    1.0e-4 & \cellcolor[rgb]{ .988,  .988,  1}2.35e-3 & 1.33e-5 & \cellcolor[rgb]{ .988,  .988,  1}1.25e-3 & 1.03e-5 & \cellcolor[rgb]{ .988,  .988,  1}6.05e-4 & 9.11e-6 & \cellcolor[rgb]{ .988,  .988,  1}4.88e-4 & 8.72e-6 & \cellcolor[rgb]{ .988,  .988,  1}4.36e-4 & 8.54e-6 & \cellcolor[rgb]{ .988,  .988,  1}4.11e-4 & 8.46e-6 & \cellcolor[rgb]{ .988,  .988,  1}3.96e-4 & 8.42e-6 & \cellcolor[rgb]{ .988,  .988,  1}3.86e-4 & 8.39e-6 & \cellcolor[rgb]{ .988,  .988,  1}9.71e-4 & 1.05e-5 & \cellcolor[rgb]{ .988,  .988,  1}\textit{3.87e-4} & \textit{8.39e-6} \\
    2.0e-4 & \cellcolor[rgb]{ .988,  .988,  1}6.14e-4 & 7.11e-6 & \cellcolor[rgb]{ .988,  .988,  1}3.7e-4 & 5.75e-6 & \cellcolor[rgb]{ .988,  .988,  1}1.91e-4 & 4.66e-6 & \cellcolor[rgb]{ .988,  .988,  1}1.56e-4 & 4.48e-6 & \cellcolor[rgb]{ .988,  .988,  1}1.4e-4 & 4.42e-6 & \cellcolor[rgb]{ .988,  .988,  1}1.32e-4 & 4.4e-6 & \cellcolor[rgb]{ .988,  .988,  1}1.28e-4 & 4.39e-6 & \cellcolor[rgb]{ .988,  .988,  1}1.25e-4 & 4.38e-6 & \cellcolor[rgb]{ .988,  .988,  1}3.09e-4 & 5.63e-6 & \cellcolor[rgb]{ .988,  .988,  1}\textit{1.25e-4} & \textit{4.38e-6} \\
    3.0e-4 & \cellcolor[rgb]{ .988,  .988,  1}2.95e-4 & 4.43e-6 & \cellcolor[rgb]{ .988,  .988,  1}1.82e-4 & 3.8e-6 & \cellcolor[rgb]{ .988,  .988,  1}9.74e-5 & 3.45e-6 & \cellcolor[rgb]{ .988,  .988,  1}8.07e-5 & 3.4e-6 & \cellcolor[rgb]{ .949,  .961,  .984}7.31e-5 & 3.39e-6 & \cellcolor[rgb]{ .922,  .941,  .976}6.95e-5 & 3.39e-6 & \cellcolor[rgb]{ .902,  .925,  .969}6.74e-5 & 3.4e-6 & \cellcolor[rgb]{ .89,  .918,  .965}6.6e-5 & 3.4e-6 & \cellcolor[rgb]{ .988,  .988,  1}1.61e-4 & 3.97e-6 & \cellcolor[rgb]{ .89,  .918,  .965}\textit{6.6e-5} & \textit{3.4e-6} \\
    4.0e-4 & \cellcolor[rgb]{ .988,  .988,  1}1.78e-4 & 3.43e-6 & \cellcolor[rgb]{ .988,  .988,  1}1.11e-4 & 3.19e-6 & \cellcolor[rgb]{ .835,  .878,  .945}5.95e-5 & 2.95e-6 & \cellcolor[rgb]{ .745,  .82,  .914}4.95e-5 & 2.93e-6 & \cellcolor[rgb]{ .706,  .788,  .898}4.49e-5 & 2.94e-6 & \cellcolor[rgb]{ .69,  .776,  .894}4.28e-5 & 2.96e-6 & \cellcolor[rgb]{ .678,  .769,  .89}4.16e-5 & 2.98e-6 & \cellcolor[rgb]{ .671,  .765,  .886}4.07e-5 & 2.99e-6 & \cellcolor[rgb]{ .988,  .988,  1}9.96e-5 & 3.24e-6 & \cellcolor[rgb]{ .671,  .765,  .886}\textit{4.07e-5} & \textit{2.99e-6} \\
    5.0e-4 & \cellcolor[rgb]{ .988,  .988,  1}1.19e-4 & 3.02e-6 & \cellcolor[rgb]{ .957,  .965,  .988}7.37e-5 & 2.82e-6 & \cellcolor[rgb]{ .659,  .757,  .882}3.92e-5 & 2.6e-6 & \cellcolor[rgb]{ .6,  .714,  .863}3.25e-5 & 2.62e-6 & \cellcolor[rgb]{ .576,  .698,  .855}2.96e-5 & 2.68e-6 & \cellcolor[rgb]{ .565,  .69,  .851}2.82e-5 & 2.73e-6 & \cellcolor[rgb]{ .557,  .682,  .847}2.74e-5 & 2.77e-6 & \cellcolor[rgb]{ .553,  .682,  .847}2.69e-5 & 2.8e-6 & \cellcolor[rgb]{ .898,  .925,  .969}6.71e-5 & 2.77e-6 & \cellcolor[rgb]{ .553,  .682,  .847}\textit{2.69e-5} & \textit{2.77e-6} \\
    1.0e-3 & \cellcolor[rgb]{ .576,  .698,  .855}2.99e-5 & 1.93e-6 & \cellcolor[rgb]{ .475,  .627,  .82}1.8e-5 & 1.87e-6 & \cellcolor[rgb]{ .408,  .58,  .796}1.03e-5 & 2.44e-6 & \cellcolor[rgb]{ .4,  .573,  .792}9.37e-6 & 2.91e-6 & \cellcolor[rgb]{ .4,  .573,  .792}9.11e-6 & 3.26e-6 & \cellcolor[rgb]{ .396,  .573,  .792}9.06e-6 & 3.48e-6 & \cellcolor[rgb]{ .396,  .573,  .792}9.06e-6 & 3.64e-6 & \cellcolor[rgb]{ .396,  .573,  .792}9.07e-6 & 3.75e-6 & \cellcolor[rgb]{ .471,  .624,  .816}1.77e-5 & 1.69e-6 & \cellcolor[rgb]{ .392,  .569,  .788}\textit{8.32e-6} & \textit{2.95e-6} \\
    2.0e-3 & \cellcolor[rgb]{ .4,  .573,  .792}9.21e-6 & 2.13e-6 & \cellcolor[rgb]{ .376,  .557,  .784}6.72e-6 & 2.66e-6 & \cellcolor[rgb]{ .376,  .557,  .784}6.59e-6 & 4.43e-6 & \cellcolor[rgb]{ .384,  .561,  .784}7.35e-6 & 5.46e-6 & \cellcolor[rgb]{ .388,  .565,  .788}8.01e-6 & 6.19e-6 & \cellcolor[rgb]{ .392,  .569,  .788}8.45e-6 & 6.64e-6 & \cellcolor[rgb]{ .396,  .569,  .788}8.76e-6 & 6.95e-6 & \cellcolor[rgb]{ .396,  .573,  .792}8.99e-6 & 7.18e-6 & \cellcolor[rgb]{ .373,  .557,  .784}6.24e-6 & 2.04e-6 & \cellcolor[rgb]{ .357,  .545,  .776}\textit{4.37e-6} & \textit{2.35e-6} \\
    3.0e-3 & \cellcolor[rgb]{ .373,  .557,  .784}6.24e-6 & 2.94e-6 & \cellcolor[rgb]{ .369,  .553,  .78}5.77e-6 & 3.87e-6 & \cellcolor[rgb]{ .384,  .565,  .788}7.6e-6 & 6.54e-6 & \cellcolor[rgb]{ .396,  .573,  .792}9.06e-6 & 8.05e-6 & \cellcolor[rgb]{ .408,  .58,  .796}1.02e-5 & 9.14e-6 & \cellcolor[rgb]{ .416,  .584,  .796}1.09e-5 & 9.82e-6 & \cellcolor[rgb]{ .42,  .588,  .8}1.14e-5 & 1.03e-5 & \cellcolor[rgb]{ .42,  .588,  .8}1.18e-5 & 1.06e-5 & \cellcolor[rgb]{ .361,  .549,  .78}4.98e-6 & 2.89e-6 & \cellcolor[rgb]{ .353,  .541,  .776}\textit{3.82e-6} & \textit{1.86e-6} \\
    4.0e-3 & \cellcolor[rgb]{ .373,  .553,  .78}5.92e-6 & 3.84e-6 & \cellcolor[rgb]{ .376,  .557,  .784}6.37e-6 & 5.11e-6 & \cellcolor[rgb]{ .4,  .573,  .792}9.41e-6 & 8.61e-6 & \cellcolor[rgb]{ .42,  .588,  .8}1.14e-5 & 1.06e-5 & \cellcolor[rgb]{ .431,  .596,  .804}1.3e-5 & 1.21e-5 & \cellcolor[rgb]{ .439,  .604,  .808}1.4e-5 & 1.3e-5 & \cellcolor[rgb]{ .447,  .608,  .808}1.46e-5 & 1.36e-5 & \cellcolor[rgb]{ .451,  .608,  .808}1.51e-5 & 1.4e-5 & \cellcolor[rgb]{ .365,  .549,  .78}5.35e-6 & 3.82e-6 & \cellcolor[rgb]{ .353,  .541,  .776}\textit{3.63e-6} & \textit{2.02e-6} \\
    5.0e-3 & \cellcolor[rgb]{ .376,  .557,  .784}6.51e-6 & 4.77e-6 & \cellcolor[rgb]{ .384,  .561,  .784}7.48e-6 & 6.34e-6 & \cellcolor[rgb]{ .42,  .588,  .8}1.15e-5 & 1.06e-5 & \cellcolor[rgb]{ .439,  .604,  .808}1.41e-5 & 1.31e-5 & \cellcolor[rgb]{ .459,  .616,  .812}1.6e-5 & 1.49e-5 & \cellcolor[rgb]{ .467,  .624,  .816}1.72e-5 & 1.61e-5 & \cellcolor[rgb]{ .475,  .627,  .82}1.8e-5 & 1.69e-5 & \cellcolor[rgb]{ .482,  .631,  .82}1.87e-5 & 1.74e-5 & \cellcolor[rgb]{ .373,  .557,  .784}6.24e-6 & 4.77e-6 & \cellcolor[rgb]{ .353,  .541,  .776}\textit{4.01e-6} & \textit{2.32e-6} \\
    1.0e-2 & \cellcolor[rgb]{ .424,  .588,  .8}1.2e-5 & 9.39e-6 & \cellcolor[rgb]{ .443,  .604,  .808}1.43e-5 & 1.23e-5 & \cellcolor[rgb]{ .514,  .655,  .831}2.24e-5 & 2.06e-5 & \cellcolor[rgb]{ .557,  .682,  .847}2.75e-5 & 2.57e-5 & \cellcolor[rgb]{ .588,  .706,  .859}3.13e-5 & 2.94e-5 & \cellcolor[rgb]{ .612,  .722,  .867}3.37e-5 & 3.17e-5 & \cellcolor[rgb]{ .624,  .733,  .871}3.54e-5 & 3.33e-5 & \cellcolor[rgb]{ .635,  .741,  .875}3.66e-5 & 3.45e-5 & \cellcolor[rgb]{ .424,  .592,  .8}1.2e-5 & 9.45e-6 & \cellcolor[rgb]{ .38,  .561,  .784}\textit{6.99e-6} & \textit{4.09e-6} \\
    2.0e-2 & \cellcolor[rgb]{ .525,  .663,  .835}2.37e-5 & 1.83e-5 & \cellcolor[rgb]{ .565,  .69,  .851}2.83e-5 & 2.4e-5 & \cellcolor[rgb]{ .702,  .784,  .898}4.42e-5 & 4.08e-5 & \cellcolor[rgb]{ .788,  .847,  .929}5.42e-5 & 5.07e-5 & \cellcolor[rgb]{ .851,  .89,  .949}6.17e-5 & 5.79e-5 & \cellcolor[rgb]{ .894,  .922,  .965}6.63e-5 & 6.24e-5 & \cellcolor[rgb]{ .922,  .941,  .976}6.95e-5 & 6.54e-5 & \cellcolor[rgb]{ .941,  .953,  .98}7.18e-5 & 6.76e-5 & \cellcolor[rgb]{ .525,  .663,  .835}2.39e-5 & 1.85e-5 & \cellcolor[rgb]{ .439,  .6,  .804}\textit{1.38e-5} & \textit{8.21e-6} \\
    3.0e-2 & \cellcolor[rgb]{ .624,  .733,  .871}3.54e-5 & 2.71e-5 & \cellcolor[rgb]{ .686,  .776,  .894}4.25e-5 & 3.61e-5 & \cellcolor[rgb]{ .89,  .918,  .965}6.6e-5 & 6.07e-5 & \cellcolor[rgb]{ .988,  .988,  1}8.07e-5 & 7.48e-5 & \cellcolor[rgb]{ .988,  .988,  1}9.15e-5 & 8.51e-5 & \cellcolor[rgb]{ .988,  .988,  1}9.83e-5 & 9.14e-5 & \cellcolor[rgb]{ .988,  .988,  1}1.03e-4 & 9.58e-5 & \cellcolor[rgb]{ .988,  .988,  1}1.06e-4 & 9.9e-5 & \cellcolor[rgb]{ .627,  .733,  .871}3.58e-5 & 2.77e-5 & \cellcolor[rgb]{ .498,  .643,  .827}\textit{2.08e-5} & \textit{1.23e-5} \\
    4.0e-2 & \cellcolor[rgb]{ .729,  .804,  .906}4.73e-5 & 3.62e-5 & \cellcolor[rgb]{ .808,  .863,  .937}5.67e-5 & 4.81e-5 & \cellcolor[rgb]{ .988,  .988,  1}8.77e-5 & 7.98e-5 & \cellcolor[rgb]{ .988,  .988,  1}1.07e-4 & 9.81e-5 & \cellcolor[rgb]{ .988,  .988,  1}1.21e-4 & 1.12e-4 & \cellcolor[rgb]{ .988,  .988,  1}1.3e-4 & 1.2e-4 & \cellcolor[rgb]{ .988,  .988,  1}1.36e-4 & 1.26e-4 & \cellcolor[rgb]{ .988,  .988,  1}1.4e-4 & 1.3e-4 & \cellcolor[rgb]{ .733,  .808,  .91}4.8e-5 & 3.73e-5 & \cellcolor[rgb]{ .561,  .686,  .847}\textit{2.79e-5} & \textit{1.64e-5} \\
    5.0e-2 & \cellcolor[rgb]{ .831,  .878,  .945}5.93e-5 & 4.54e-5 & \cellcolor[rgb]{ .929,  .949,  .98}7.08e-5 & 5.97e-5 & \cellcolor[rgb]{ .988,  .988,  1}1.09e-4 & 9.83e-5 & \cellcolor[rgb]{ .988,  .988,  1}1.33e-4 & 1.21e-4 & \cellcolor[rgb]{ .988,  .988,  1}1.5e-4 & 1.38e-4 & \cellcolor[rgb]{ .988,  .988,  1}1.61e-4 & 1.48e-4 & \cellcolor[rgb]{ .988,  .988,  1}1.68e-4 & 1.56e-4 & \cellcolor[rgb]{ .988,  .988,  1}1.73e-4 & 1.61e-4 & \cellcolor[rgb]{ .839,  .882,  .945}6.02e-5 & 4.69e-5 & \cellcolor[rgb]{ .624,  .729,  .871}\textit{3.5e-5} & \textit{2.05e-5} \\
    1.0e-1 & \cellcolor[rgb]{ .988,  .988,  1}1.19e-4 & 8.82e-5 & \cellcolor[rgb]{ .988,  .988,  1}1.4e-4 & 1.13e-4 & \cellcolor[rgb]{ .988,  .988,  1}2.13e-4 & 1.89e-4 & \cellcolor[rgb]{ .988,  .988,  1}2.57e-4 & 2.34e-4 & \cellcolor[rgb]{ .988,  .988,  1}2.89e-4 & 2.66e-4 & \cellcolor[rgb]{ .988,  .988,  1}3.08e-4 & 2.85e-4 & \cellcolor[rgb]{ .988,  .988,  1}3.21e-4 & 2.98e-4 & \cellcolor[rgb]{ .988,  .988,  1}3.31e-4 & 3.08e-4 & \cellcolor[rgb]{ .988,  .988,  1}1.2e-4 & 9.19e-5 & \cellcolor[rgb]{ .933,  .949,  .98}\textit{7.12e-5} & \textit{4.02e-5} \\
    \bottomrule
    \bottomrule
    \multicolumn{21}{l}{Color indicators are added to locate the extreme values: the bluer the better accuracy, the redder the worse. Also, the best result for each $\alpha$ (each row) is marked in \textbf{bold} font.}          
    \end{tabular}%
  \label{tab:ringworld_behavior_02_target_015}%
\end{table}%

\par
The best performance of a fine-tuned baseline can be extracted from the figure by combining the lowest points of the set of curves obtained by the baseline under different $\lambda$s. But still, the fine-tuned \algoname{} provides better performance, especially when the learning rate is relatively high. We can say that once \algoname{} is fine-tuned, it provides significantly better performance that the baseline algorithm can possibly achieve, since it meta-learns state-based $\bm{\lambda}$s, which allow it to go beyond the scope of the optimization of the baseline. The results can also be interpreted as \algoname{} having less sensitivity to the learning rate hyperparameter than the baseline true online TD.
It is also interesting to notice that the greedier the policies, the larger the performance boost that \algoname{} can provide.

\section{FrozenLake: Linear Function Approximation with High Variance}

\begin{figure*}
\centering

\subfloat[FrozenLake, $\gamma = 0.95$]{
\captionsetup{justification = centering}
\includegraphics[width=0.48\textwidth]{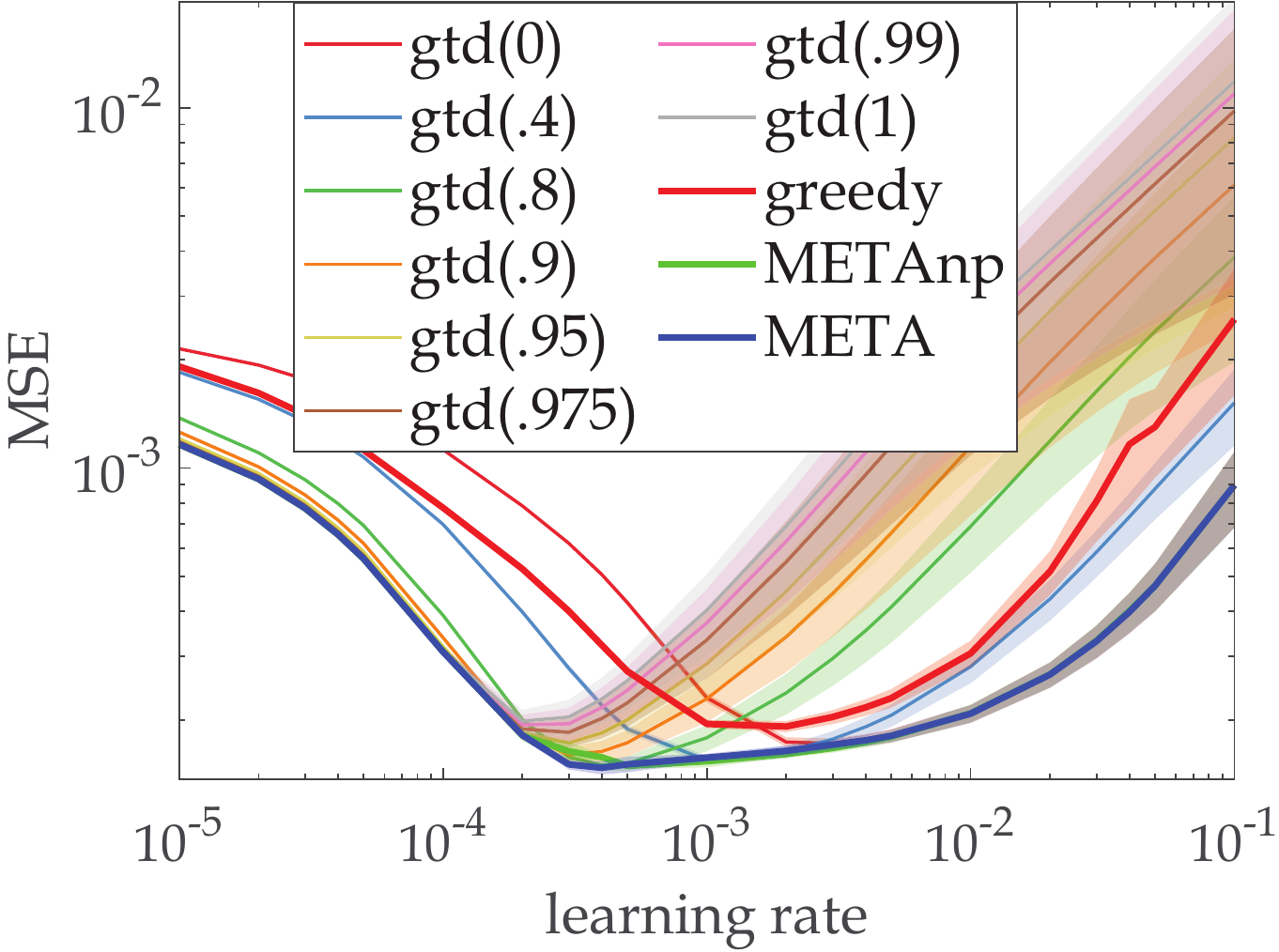}}
\hfill
\subfloat[FrozenLake, $\alpha = \beta = 0.0001$, $\kappa = 10^{-6}$, $\kappa_{\text{np}} = 10^{-4}$]{
\captionsetup{justification = centering}
\includegraphics[width=0.48\textwidth]{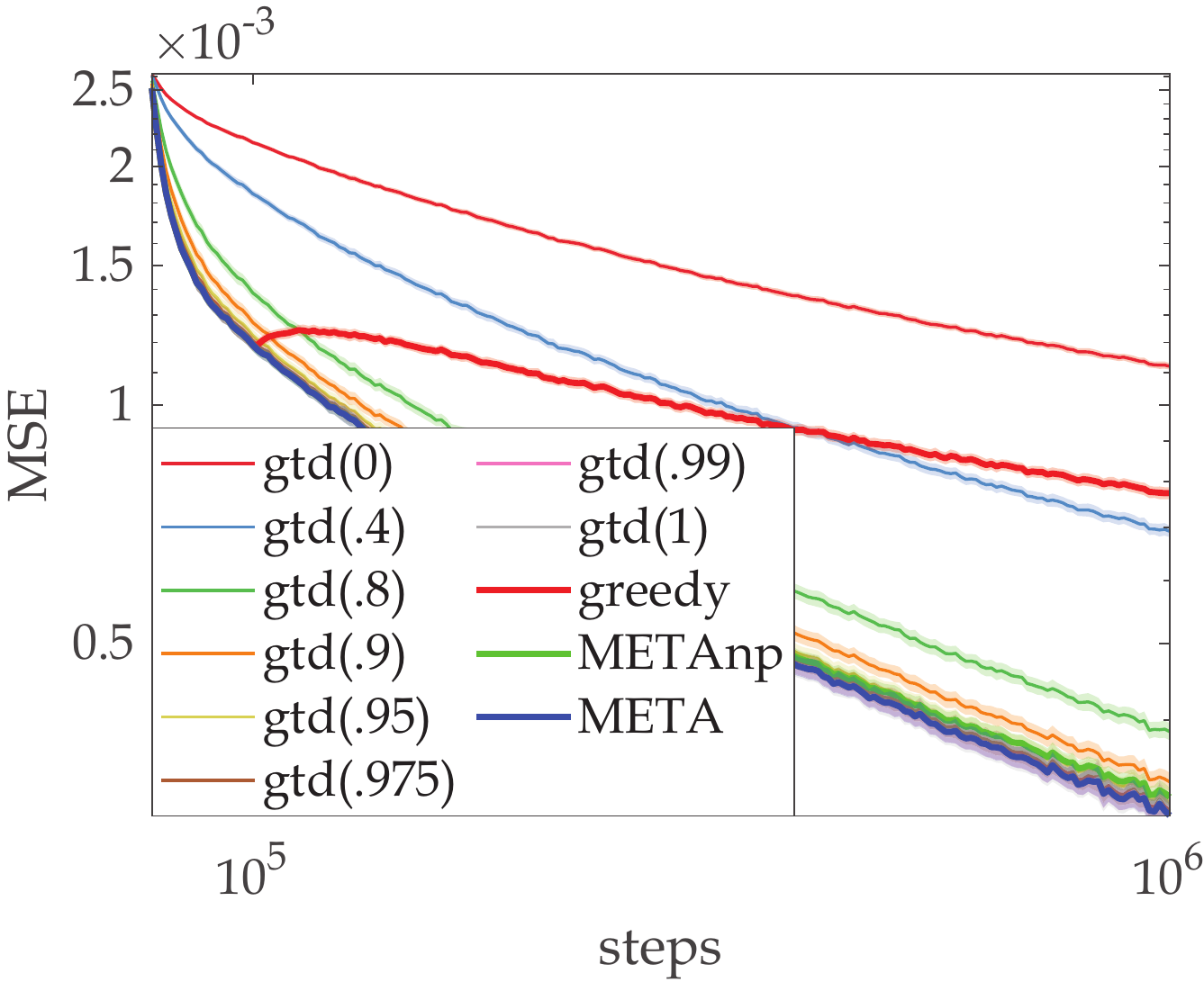}}

\caption{U-shaped curves and learning curves for \algoname{}, $\lambda$-greedy and the baselines on FrozenLake. For (a), the $x$-axis represents the values of the learning rate $\alpha$ for prediction, while the $y$-axis represents the overall value error. Each point in the graph contains the mean (solid) and standard deviation (shaded) collected from $240$ independent runs, with $10^{6}$ steps for prediction; For (d), the $x$-axis represents the steps of learning and $y$-axis is the same as (a). In the learning curves, the best known values for the hyperparameters are used. The buffer period lengths are $10^{5}$ steps ($10\%$). The buffer period and the adaptation period have been ticked on the $x$-axes.}
\label{fig:frozenlake}
\end{figure*}

% Table generated by Excel2LaTeX from sheet 'Sheet1'
\begin{table}[htbp]
\tiny
\setlength{\tabcolsep}{1pt}
  \centering
  \caption{Detailed Results on FrozenLake}
    \begin{tabular}{ccccccccccccccccccccccc}
    \toprule
    \toprule
    baseline & \multicolumn{22}{c}{True Online GTD} \\
    \multirow{2}[1]{*}{$\alpha$\textbackslash{}$\lambda$} & \multicolumn{2}{c}{0} & \multicolumn{2}{c}{0.4} & \multicolumn{2}{c}{0.8} & \multicolumn{2}{c}{0.9} & \multicolumn{2}{c}{0.95} & \multicolumn{2}{c}{0.975} & \multicolumn{2}{c}{0.99} & \multicolumn{2}{c}{1} & \multicolumn{2}{c}{greedy} & \multicolumn{2}{c}{\textit{META(np)}} & \multicolumn{2}{c}{\textit{META}} \\
          & mean  & std   & mean  & std   & mean  & std   & mean  & std   & mean  & std   & mean  & std   & mean  & std   & mean  & std   & mean  & std   & \textit{mean} & \textit{std} & \textit{mean} & \textit{std} \\
    \midrule
    1.0e-5 & \cellcolor[rgb]{ .988,  .914,  .925}2.14e-3 & 1.11e-5 & \cellcolor[rgb]{ .988,  .929,  .941}1.85e-3 & 1.63e-5 & \cellcolor[rgb]{ .988,  .953,  .965}1.38e-3 & 1.91e-5 & \cellcolor[rgb]{ .988,  .957,  .969}1.26e-3 & 1.72e-5 & \cellcolor[rgb]{ .988,  .961,  .973}1.21e-3 & 1.55e-5 & \cellcolor[rgb]{ .988,  .961,  .973}1.18e-3 & 1.45e-5 & \cellcolor[rgb]{ .988,  .961,  .973}1.17e-3 & 1.39e-5 & \cellcolor[rgb]{ .988,  .961,  .973}1.16e-3 & 1.35e-5 & \cellcolor[rgb]{ .988,  .925,  .933}1.92e-3 & 2.09e-5 & \cellcolor[rgb]{ .988,  .961,  .973}\textit{1.16e-3} & \textit{1.35e-5} & \cellcolor[rgb]{ .988,  .961,  .973}\textit{\textbf{1.16e-3}} & \textit{1.35e-5} \\
    2.0e-5 & \cellcolor[rgb]{ .988,  .925,  .937}1.93e-3 & 1.57e-5 & \cellcolor[rgb]{ .988,  .945,  .957}1.55e-3 & 2.15e-5 & \cellcolor[rgb]{ .988,  .965,  .976}1.1e-3 & 2.18e-5 & \cellcolor[rgb]{ .988,  .969,  .98}1.01e-3 & 1.93e-5 & \cellcolor[rgb]{ .988,  .973,  .984}9.67e-4 & 1.78e-5 & \cellcolor[rgb]{ .988,  .973,  .984}9.48e-4 & 1.72e-5 & \cellcolor[rgb]{ .988,  .973,  .984}9.38e-4 & 1.7e-5 & \cellcolor[rgb]{ .988,  .973,  .984}9.32e-4 & 1.7e-5 & \cellcolor[rgb]{ .988,  .941,  .949}1.62e-3 & 2.84e-5 & \cellcolor[rgb]{ .988,  .973,  .984}\textit{\textbf{9.32e-4}} & \textit{1.7e-5} & \cellcolor[rgb]{ .988,  .973,  .984}\textit{9.33e-4} & \textit{1.7e-5} \\
    3.0e-5 & \cellcolor[rgb]{ .988,  .933,  .945}1.76e-3 & 1.88e-5 & \cellcolor[rgb]{ .988,  .953,  .965}1.35e-3 & 2.45e-5 & \cellcolor[rgb]{ .988,  .976,  .988}9.28e-4 & 2.34e-5 & \cellcolor[rgb]{ .988,  .98,  .992}8.42e-4 & 2.1e-5 & \cellcolor[rgb]{ .988,  .98,  .992}8.05e-4 & 1.98e-5 & \cellcolor[rgb]{ .988,  .98,  .992}7.88e-4 & 1.95e-5 & \cellcolor[rgb]{ .988,  .98,  .992}7.79e-4 & 1.97e-5 & \cellcolor[rgb]{ .988,  .984,  .996}7.73e-4 & 2.01e-5 & \cellcolor[rgb]{ .988,  .949,  .961}1.41e-3 & 3.18e-5 & \cellcolor[rgb]{ .988,  .984,  .996}\textit{\textbf{7.73e-4}} & \textit{2.01e-5} & \cellcolor[rgb]{ .988,  .984,  .996}\textit{7.74e-4} & \textit{2.0e-5} \\
    4.0e-5 & \cellcolor[rgb]{ .988,  .941,  .953}1.63e-3 & 2.12e-5 & \cellcolor[rgb]{ .988,  .961,  .973}1.2e-3 & 2.64e-5 & \cellcolor[rgb]{ .988,  .98,  .992}7.97e-4 & 2.43e-5 & \cellcolor[rgb]{ .988,  .984,  .996}7.17e-4 & 2.19e-5 & \cellcolor[rgb]{ .988,  .988,  1}6.82e-4 & 2.1e-5 & \cellcolor[rgb]{ .988,  .988,  1}6.67e-4 & 2.12e-5 & \cellcolor[rgb]{ .988,  .988,  1}6.59e-4 & 2.19e-5 & \cellcolor[rgb]{ .988,  .988,  1}\textbf{6.53e-4} & 2.26e-5 & \cellcolor[rgb]{ .988,  .957,  .969}1.25e-3 & 3.35e-5 & \cellcolor[rgb]{ .988,  .988,  1}\textit{6.54e-4} & \textit{2.26e-5} & \cellcolor[rgb]{ .988,  .988,  1}\textit{6.54e-4} & \textit{2.24e-5} \\
    5.0e-5 & \cellcolor[rgb]{ .988,  .945,  .957}1.51e-3 & 2.3e-5 & \cellcolor[rgb]{ .988,  .969,  .98}1.07e-3 & 2.78e-5 & \cellcolor[rgb]{ .988,  .988,  1}6.93e-4 & 2.48e-5 & \cellcolor[rgb]{ .976,  .98,  .996}6.18e-4 & 2.24e-5 & \cellcolor[rgb]{ .933,  .949,  .98}5.86e-4 & 2.18e-5 & \cellcolor[rgb]{ .914,  .937,  .973}5.72e-4 & 2.26e-5 & \cellcolor[rgb]{ .906,  .929,  .969}5.65e-4 & 2.37e-5 & \cellcolor[rgb]{ .898,  .925,  .969}5.6e-4 & 2.49e-5 & \cellcolor[rgb]{ .988,  .965,  .976}1.12e-3 & 3.42e-5 & \cellcolor[rgb]{ .898,  .925,  .969}\textit{\textbf{5.6e-4}} & \textit{2.49e-5} & \cellcolor[rgb]{ .898,  .925,  .969}\textit{5.61e-4} & \textit{2.45e-5} \\
    1.0e-4 & \cellcolor[rgb]{ .988,  .965,  .976}1.13e-3 & 2.84e-5 & \cellcolor[rgb]{ .988,  .988,  1}6.98e-4 & 3.07e-5 & \cellcolor[rgb]{ .675,  .765,  .886}3.91e-4 & 2.4e-5 & \cellcolor[rgb]{ .604,  .718,  .863}3.39e-4 & 2.24e-5 & \cellcolor[rgb]{ .58,  .702,  .855}3.2e-4 & 2.55e-5 & \cellcolor[rgb]{ .573,  .694,  .851}3.13e-4 & 2.99e-5 & \cellcolor[rgb]{ .569,  .69,  .851}3.11e-4 & 3.41e-5 & \cellcolor[rgb]{ .569,  .69,  .851}3.1e-4 & 3.79e-5 & \cellcolor[rgb]{ .988,  .984,  .996}7.76e-4 & 3.45e-5 & \cellcolor[rgb]{ .565,  .69,  .851}\textit{3.1e-4} & \textit{3.43e-5} & \cellcolor[rgb]{ .565,  .69,  .851}\textit{\textbf{3.09e-4}} & \textit{2.94e-5} \\
    2.0e-4 & \cellcolor[rgb]{ .988,  .984,  .996}7.85e-4 & 3.4e-5 & \cellcolor[rgb]{ .667,  .761,  .886}3.99e-4 & 3.12e-5 & \cellcolor[rgb]{ .42,  .588,  .8}2.0e-4 & 2.11e-5 & \cellcolor[rgb]{ .4,  .573,  .792}1.83e-4 & 2.84e-5 & \cellcolor[rgb]{ .4,  .573,  .792}1.84e-4 & 4.2e-5 & \cellcolor[rgb]{ .408,  .576,  .792}1.89e-4 & 5.45e-5 & \cellcolor[rgb]{ .412,  .584,  .796}1.94e-4 & 6.51e-5 & \cellcolor[rgb]{ .42,  .588,  .8}1.99e-4 & 7.42e-5 & \cellcolor[rgb]{ .851,  .89,  .949}5.25e-4 & 3.39e-5 & \cellcolor[rgb]{ .396,  .573,  .792}\textit{1.81e-4} & \textit{3.32e-5} & \cellcolor[rgb]{ .396,  .573,  .792}\textit{\textbf{1.81e-4}} & \textit{2.04e-5} \\
    3.0e-4 & \cellcolor[rgb]{ .941,  .957,  .984}6.18e-4 & 3.71e-5 & \cellcolor[rgb]{ .514,  .655,  .831}2.78e-4 & 2.93e-5 & \cellcolor[rgb]{ .365,  .549,  .78}1.58e-4 & 2.35e-5 & \cellcolor[rgb]{ .369,  .553,  .78}1.6e-4 & 4.15e-5 & \cellcolor[rgb]{ .384,  .565,  .788}1.73e-4 & 6.51e-5 & \cellcolor[rgb]{ .4,  .573,  .792}1.85e-4 & 8.57e-5 & \cellcolor[rgb]{ .416,  .584,  .796}1.95e-4 & 1.03e-4 & \cellcolor[rgb]{ .427,  .592,  .8}2.04e-4 & 1.18e-4 & \cellcolor[rgb]{ .686,  .776,  .894}4.01e-4 & 3.29e-5 & \cellcolor[rgb]{ .373,  .553,  .78}\textit{1.64e-4} & \textit{5.08e-5} & \cellcolor[rgb]{ .357,  .541,  .776}\textit{\textbf{1.5e-4}} & \textit{2.42e-5} \\
    4.0e-4 & \cellcolor[rgb]{ .8,  .855,  .933}5.06e-4 & 3.85e-5 & \cellcolor[rgb]{ .439,  .6,  .804}2.2e-4 & 2.69e-5 & \cellcolor[rgb]{ .353,  .541,  .776}1.5e-4 & 2.91e-5 & \cellcolor[rgb]{ .373,  .553,  .78}1.64e-4 & 5.68e-5 & \cellcolor[rgb]{ .4,  .573,  .792}1.84e-4 & 9.08e-5 & \cellcolor[rgb]{ .424,  .592,  .8}2.02e-4 & 1.2e-4 & \cellcolor[rgb]{ .443,  .604,  .808}2.17e-4 & 1.46e-4 & \cellcolor[rgb]{ .459,  .616,  .812}2.29e-4 & 1.68e-4 & \cellcolor[rgb]{ .584,  .702,  .855}3.23e-4 & 3.12e-5 & \cellcolor[rgb]{ .365,  .549,  .78}\textit{1.58e-4} & \textit{1.41e-5} & \cellcolor[rgb]{ .353,  .541,  .776}\textit{\textbf{1.47e-4}} & \textit{3.06e-5} \\
    5.0e-4 & \cellcolor[rgb]{ .698,  .78,  .894}4.23e-4 & 3.87e-5 & \cellcolor[rgb]{ .4,  .573,  .792}1.89e-4 & 2.48e-5 & \cellcolor[rgb]{ .357,  .541,  .776}1.51e-4 & 3.61e-5 & \cellcolor[rgb]{ .384,  .565,  .788}1.73e-4 & 7.31e-5 & \cellcolor[rgb]{ .42,  .588,  .8}2.0e-4 & 1.18e-4 & \cellcolor[rgb]{ .451,  .612,  .812}2.23e-4 & 1.58e-4 & \cellcolor[rgb]{ .475,  .627,  .82}2.42e-4 & 1.92e-4 & \cellcolor[rgb]{ .498,  .643,  .827}2.57e-4 & 2.21e-4 & \cellcolor[rgb]{ .518,  .655,  .831}2.73e-4 & 2.94e-5 & \cellcolor[rgb]{ .353,  .541,  .776}\textit{\textbf{1.49e-4}} & \textit{1.59e-5} & \cellcolor[rgb]{ .357,  .541,  .776}\textit{1.51e-4} & \textit{3.78e-5} \\
    1.0e-3 & \cellcolor[rgb]{ .455,  .612,  .812}2.3e-4 & 3.29e-5 & \cellcolor[rgb]{ .357,  .545,  .776}1.54e-4 & 2.2e-5 & \cellcolor[rgb]{ .392,  .569,  .788}1.78e-4 & 7.43e-5 & \cellcolor[rgb]{ .459,  .616,  .812}2.29e-4 & 1.59e-4 & \cellcolor[rgb]{ .537,  .671,  .839}2.86e-4 & 2.66e-4 & \cellcolor[rgb]{ .596,  .714,  .863}3.33e-4 & 3.63e-4 & \cellcolor[rgb]{ .647,  .749,  .878}3.72e-4 & 4.48e-4 & \cellcolor[rgb]{ .69,  .776,  .894}4.03e-4 & 5.21e-4 & \cellcolor[rgb]{ .416,  .584,  .796}1.95e-4 & 2.56e-5 & \cellcolor[rgb]{ .361,  .545,  .776}\textit{\textbf{1.54e-4}} & \textit{2.9e-5} & \cellcolor[rgb]{ .365,  .549,  .78}\textit{1.57e-4} & \textit{1.97e-5} \\
    2.0e-3 & \cellcolor[rgb]{ .38,  .561,  .784}1.74e-4 & 2.69e-5 & \cellcolor[rgb]{ .369,  .553,  .78}1.63e-4 & 3.3e-5 & \cellcolor[rgb]{ .471,  .624,  .816}2.38e-4 & 1.53e-4 & \cellcolor[rgb]{ .608,  .718,  .863}3.4e-4 & 3.43e-4 & \cellcolor[rgb]{ .761,  .827,  .918}4.55e-4 & 5.89e-4 & \cellcolor[rgb]{ .882,  .914,  .961}5.49e-4 & 8.16e-4 & \cellcolor[rgb]{ .988,  .988,  1}6.26e-4 & 1.02e-3 & \cellcolor[rgb]{ .988,  .988,  1}6.89e-4 & 1.19e-3 & \cellcolor[rgb]{ .412,  .58,  .796}1.92e-4 & 3.39e-5 & \cellcolor[rgb]{ .369,  .553,  .78}\textit{\textbf{1.61e-4}} & \textit{1.82e-5} & \cellcolor[rgb]{ .373,  .557,  .784}\textit{1.64e-4} & \textit{2.81e-5} \\
    3.0e-3 & \cellcolor[rgb]{ .38,  .561,  .784}1.72e-4 & 2.9e-5 & \cellcolor[rgb]{ .384,  .565,  .788}1.77e-4 & 4.63e-5 & \cellcolor[rgb]{ .549,  .678,  .843}2.96e-4 & 2.37e-4 & \cellcolor[rgb]{ .753,  .82,  .914}4.49e-4 & 5.43e-4 & \cellcolor[rgb]{ .976,  .98,  .996}6.18e-4 & 9.38e-4 & \cellcolor[rgb]{ .988,  .984,  .996}7.57e-4 & 1.3e-3 & \cellcolor[rgb]{ .988,  .976,  .988}8.7e-4 & 1.61e-3 & \cellcolor[rgb]{ .988,  .973,  .984}9.63e-4 & 1.88e-3 & \cellcolor[rgb]{ .427,  .592,  .8}2.04e-4 & 4.56e-5 & \cellcolor[rgb]{ .376,  .557,  .784}\textit{\textbf{1.67e-4}} & \textit{2.41e-5} & \cellcolor[rgb]{ .38,  .561,  .784}\textit{1.71e-4} & \textit{2.81e-5} \\
    4.0e-3 & \cellcolor[rgb]{ .384,  .565,  .788}1.76e-4 & 3.31e-5 & \cellcolor[rgb]{ .404,  .576,  .792}1.91e-4 & 5.98e-5 & \cellcolor[rgb]{ .624,  .733,  .871}3.54e-4 & 3.26e-4 & \cellcolor[rgb]{ .894,  .922,  .965}5.56e-4 & 7.53e-4 & \cellcolor[rgb]{ .988,  .984,  .992}7.78e-4 & 1.3e-3 & \cellcolor[rgb]{ .988,  .973,  .984}9.58e-4 & 1.79e-3 & \cellcolor[rgb]{ .988,  .965,  .976}1.1e-3 & 2.21e-3 & \cellcolor[rgb]{ .988,  .961,  .973}1.22e-3 & 2.57e-3 & \cellcolor[rgb]{ .443,  .604,  .808}2.17e-4 & 5.7e-5 & \cellcolor[rgb]{ .384,  .565,  .788}\textit{\textbf{1.73e-4}} & \textit{2.97e-5} & \cellcolor[rgb]{ .388,  .565,  .788}\textit{1.76e-4} & \textit{3.28e-5} \\
    5.0e-3 & \cellcolor[rgb]{ .392,  .569,  .788}1.81e-4 & 3.75e-5 & \cellcolor[rgb]{ .424,  .588,  .8}2.06e-4 & 7.33e-5 & \cellcolor[rgb]{ .702,  .784,  .898}4.11e-4 & 4.17e-4 & \cellcolor[rgb]{ .988,  .988,  1}6.6e-4 & 9.68e-4 & \cellcolor[rgb]{ .988,  .973,  .984}9.32e-4 & 1.66e-3 & \cellcolor[rgb]{ .988,  .965,  .976}1.15e-3 & 2.27e-3 & \cellcolor[rgb]{ .988,  .953,  .965}1.33e-3 & 2.79e-3 & \cellcolor[rgb]{ .988,  .945,  .957}1.47e-3 & 3.24e-3 & \cellcolor[rgb]{ .459,  .616,  .812}2.3e-4 & 6.83e-5 & \cellcolor[rgb]{ .392,  .569,  .788}\textit{\textbf{1.8e-4}} & \textit{3.5e-5} & \cellcolor[rgb]{ .396,  .569,  .788}\textit{1.81e-4} & \textit{3.73e-5} \\
    1.0e-2 & \cellcolor[rgb]{ .424,  .592,  .8}2.08e-4 & 5.91e-5 & \cellcolor[rgb]{ .518,  .655,  .831}2.81e-4 & 1.41e-4 & \cellcolor[rgb]{ .988,  .988,  1}6.89e-4 & 8.84e-4 & \cellcolor[rgb]{ .988,  .965,  .976}1.15e-3 & 2.05e-3 & \cellcolor[rgb]{ .988,  .937,  .949}1.62e-3 & 3.41e-3 & \cellcolor[rgb]{ .988,  .922,  .929}1.99e-3 & 4.54e-3 & \cellcolor[rgb]{ .988,  .906,  .918}2.27e-3 & 5.44e-3 & \cellcolor[rgb]{ .988,  .894,  .906}2.5e-3 & 6.17e-3 & \cellcolor[rgb]{ .561,  .686,  .847}3.06e-4 & 1.3e-4 & \cellcolor[rgb]{ .431,  .596,  .804}\textit{2.09e-4} & \textit{5.97e-5} & \cellcolor[rgb]{ .431,  .596,  .804}\textit{\textbf{2.08e-4}} & \textit{5.91e-5} \\
    2.0e-2 & \cellcolor[rgb]{ .498,  .643,  .827}2.67e-4 & 1.07e-4 & \cellcolor[rgb]{ .71,  .792,  .902}4.33e-4 & 2.82e-4 & \cellcolor[rgb]{ .988,  .961,  .973}1.19e-3 & 1.84e-3 & \cellcolor[rgb]{ .988,  .922,  .933}1.97e-3 & 4.14e-3 & \cellcolor[rgb]{ .988,  .882,  .894}2.73e-3 & 6.67e-3 & \cellcolor[rgb]{ .988,  .855,  .863}3.28e-3 & 8.61e-3 & \cellcolor[rgb]{ .984,  .831,  .843}3.7e-3 & 1.01e-2 & \cellcolor[rgb]{ .984,  .816,  .827}4.02e-3 & 1.12e-2 & \cellcolor[rgb]{ .839,  .882,  .945}5.17e-4 & 3.6e-4 & \cellcolor[rgb]{ .51,  .651,  .831}\textit{2.68e-4} & \textit{1.07e-4} & \cellcolor[rgb]{ .51,  .651,  .831}\textit{\textbf{2.67e-4}} & \textit{1.07e-4} \\
    3.0e-2 & \cellcolor[rgb]{ .58,  .702,  .855}3.31e-4 & 1.71e-4 & \cellcolor[rgb]{ .898,  .925,  .969}5.85e-4 & 4.37e-4 & \cellcolor[rgb]{ .988,  .937,  .949}1.63e-3 & 2.81e-3 & \cellcolor[rgb]{ .988,  .886,  .898}2.66e-3 & 6.15e-3 & \cellcolor[rgb]{ .984,  .835,  .847}3.63e-3 & 9.67e-3 & \cellcolor[rgb]{ .984,  .8,  .808}4.34e-3 & 1.23e-2 & \cellcolor[rgb]{ .984,  .773,  .784}4.86e-3 & 1.42e-2 & \cellcolor[rgb]{ .984,  .753,  .761}5.26e-3 & 1.57e-2 & \cellcolor[rgb]{ .988,  .98,  .992}8.09e-4 & 9.22e-4 & \cellcolor[rgb]{ .596,  .71,  .859}\textit{3.32e-4} & \textit{1.73e-4} & \cellcolor[rgb]{ .596,  .71,  .859}\textit{\textbf{3.31e-4}} & \textit{1.71e-4} \\
    4.0e-2 & \cellcolor[rgb]{ .667,  .761,  .886}3.99e-4 & 2.54e-4 & \cellcolor[rgb]{ .988,  .984,  .996}7.32e-4 & 6.05e-4 & \cellcolor[rgb]{ .988,  .918,  .929}2.03e-3 & 3.8e-3 & \cellcolor[rgb]{ .988,  .855,  .867}3.27e-3 & 8.09e-3 & \cellcolor[rgb]{ .984,  .792,  .804}4.44e-3 & 1.25e-2 & \cellcolor[rgb]{ .984,  .749,  .761}5.28e-3 & 1.57e-2 & \cellcolor[rgb]{ .984,  .718,  .729}5.9e-3 & 1.81e-2 & \cellcolor[rgb]{ .98,  .694,  .706}6.38e-3 & 1.99e-2 & \cellcolor[rgb]{ .988,  .961,  .973}1.17e-3 & 1.94e-3 & \cellcolor[rgb]{ .686,  .776,  .894}\textit{4.01e-4} & \textit{2.59e-4} & \cellcolor[rgb]{ .686,  .773,  .89}\textit{\textbf{3.99e-4}} & \textit{2.54e-4} \\
    5.0e-2 & \cellcolor[rgb]{ .757,  .824,  .918}4.71e-4 & 3.54e-4 & \cellcolor[rgb]{ .988,  .98,  .992}8.76e-4 & 7.84e-4 & \cellcolor[rgb]{ .988,  .898,  .91}2.4e-3 & 4.78e-3 & \cellcolor[rgb]{ .984,  .824,  .835}3.83e-3 & 9.98e-3 & \cellcolor[rgb]{ .984,  .757,  .765}5.18e-3 & 1.52e-2 & \cellcolor[rgb]{ .984,  .706,  .718}6.15e-3 & 1.9e-2 & \cellcolor[rgb]{ .98,  .671,  .678}6.87e-3 & 2.18e-2 & \cellcolor[rgb]{ .98,  .639,  .651}7.42e-3 & 2.4e-2 & \cellcolor[rgb]{ .988,  .957,  .969}1.3e-3 & 1.8e-3 & \cellcolor[rgb]{ .78,  .843,  .925}\textit{4.71e-4} & \textit{3.54e-4} & \cellcolor[rgb]{ .78,  .843,  .925}\textit{\textbf{4.71e-4}} & \textit{3.54e-4} \\
    1.0e-1 & \cellcolor[rgb]{ .988,  .976,  .988}8.97e-4 & 1.06e-3 & \cellcolor[rgb]{ .988,  .945,  .957}1.52e-3 & 1.82e-3 & \cellcolor[rgb]{ .984,  .824,  .835}3.85e-3 & 9.39e-3 & \cellcolor[rgb]{ .984,  .71,  .718}6.11e-3 & 1.84e-2 & \cellcolor[rgb]{ .98,  .596,  .608}8.26e-3 & 2.74e-2 & \cellcolor[rgb]{ .976,  .518,  .525}9.8e-3 & 3.38e-2 & \cellcolor[rgb]{ .976,  .459,  .467}1.09e-2 & 3.87e-2 & \cellcolor[rgb]{ .973,  .412,  .42}1.18e-2 & 4.24e-2 & \cellcolor[rgb]{ .988,  .89,  .902}2.59e-3 & 4.99e-3 & \cellcolor[rgb]{ .988,  .976,  .988}\textit{8.97e-4} & \textit{1.06e-3} & \cellcolor[rgb]{ .988,  .976,  .988}\textit{\textbf{8.97e-4}} & \textit{1.06e-3} \\
    \bottomrule
    \bottomrule
    \multicolumn{23}{l}{Color indicators are added to locate the extreme values: the bluer the better accuracy, the redder the worse. Also, the best result for each $\alpha$ (each row) is marked in \textbf{bold} font.}    
    \end{tabular}%
  \label{tab:frozenlake}%
\end{table}%

This set of experiments is carried out on a higher variance environment, the ``$4\times4$'' FrozenLake, in which the transitions are stochastic and the agent seeks to fetch a frisbee back on a frozen lake surface with holes from the northwest to the southeast. We craft an exploratory policy that takes the actions with equal probability, and a heuristic policy that has $0.3$ probability for going south or east, $0.2$ for going north or west. This time we use the linear function approximation and  true online GTD($\lambda$) with discrete tile coding ($4$ tiles, $4$ randomly offset tilings). We set the second learning rate $\beta$ introduced in true online GTD($\lambda$) to be the same as $\alpha$ (for the value learners as well as the auxiliary learners in all the compared algorithms). Additionally, we remove the parametric setting of $\bm{\lambda}$ to get a method that we call ``\algoname{}(np)'', and which demonstrates the potential of a parametric feature (observation) based $\bm{\lambda}$. The U-shaped curves, obtained using the exact same settings as in the ringworld tests, are provided in Fig. \ref{fig:frozenlake}. The detailed results are provided in Table \ref{tab:frozenlake}.
\par
We observe similar patterns with the first set of experiments. Comparing \algoname{} with ``\algoname{}(np)'', the generalization provided by the parametric $\bm{\lambda}$ is beneficial: in panel (b) we observe generally better performance and in panel (e) we see that a parametric $\lambda$ has better sample efficiency. This would suggest that using parametric $\lambda$ in environments with relatively smooth dynamics would be generally beneficial for sample efficiency.

\section{MountainCar: On-Policy Actor-Critic Control with Linear Function Approximation}

\begin{figure*}
\centering

\subfloat[MountainCar, $\gamma = 1$, $\eta = 1$]{
\captionsetup{justification = centering}
\includegraphics[width=0.48\textwidth]{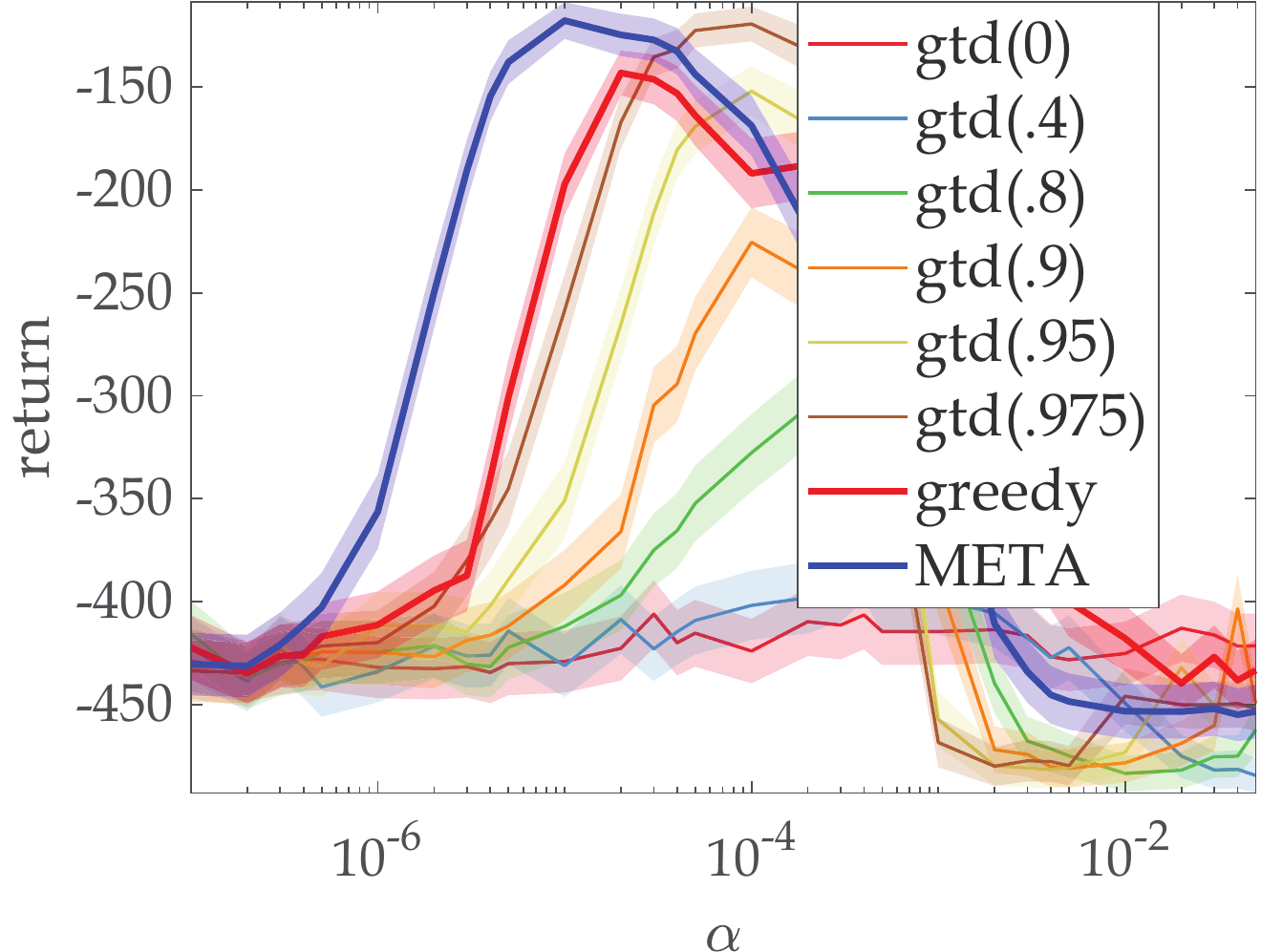}}
\hfill
\subfloat[MountainCar, $\alpha = \beta = 10^{-5}$, $\eta =1$, $\kappa = 10^{-5}$]{
\captionsetup{justification = centering}
\includegraphics[width=0.48\textwidth]{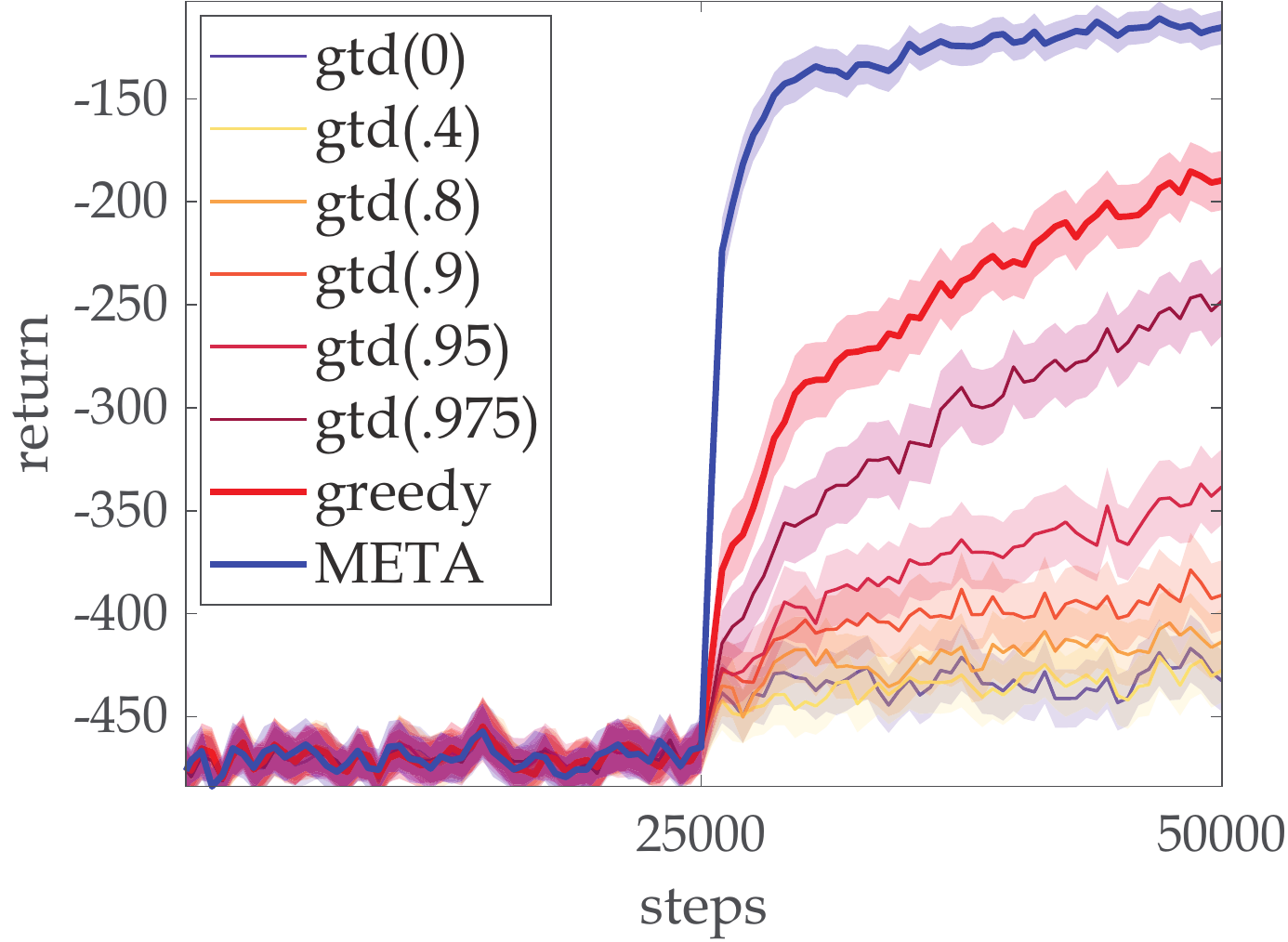}}

\caption{U-shaped curves and learning curves for \algoname{}, $\lambda$-greedy and baselines on MountainCar. For (a), the $x$-axis represents the values of the learning rate $\alpha$ for prediction (the critic), while the $y$-axis represents the discounted return for MountainCar. Each point in the graph contains the mean (solid) and standard deviation (shaded) collected from $240$ independent runs, $50000$ steps for control; For (d), the $x$-axis represents the steps of learning and $y$-axis is the same as (a). In the learning curves, the best known values for the hyperparameters are used. The buffer period length is $25000$ steps ($50\%$). The buffer period and the adaptation period have been ticked on the $x$-axes. Note that in the buffer period of control, we also freeze the policy.}
\label{fig:mountaincar}
\end{figure*}

In this set of experiments we investigate the case in which \algoname{} assists on-policy actor-critic control on a modified version of the environment MountainCar with noise added to the state transitions. The state features used in these experiments resemble the classical setting in \cite{sutton2018reinforcement}, with the tile encoding of $8$ tilings and $8$ offsets. We use a softmax policy parameterized by a $|A| \times D$ matrix, where $D$ is the dimension of the state features, with true online GTD($\lambda$) as the learners (critics). This time, the U-shaped curves presented in Fig. \ref{fig:mountaincar} show performance better than the baselines, and significantly better than $\lambda$-greedy assisted actor-critic.

In this set of experiments we intentionally set the step size of the gradient ascent of the policy to be high ($\eta = 1$), in order to emphasise the quality of policy evaluation. However, typically in actor-critic we keep $\eta$ small. If $\eta$ is small, \algoname{} is expected to help a lot less, as the accuracy of the value function would be less important. Enhancing the policy evaluation quality may not be sufficient for increasing the sample efficiency of control problems.

From the curves presented in Figure \ref{fig:mountaincar}, the most significant improvements are obtained when the learning rate of the critic is small. Typically in actor-critic, we set the learning rate of the critic to be higher than the actor, in order to improve the quality of the update of the actor. \algoname{} alleviates the requirement for such a setting by boosting the sample efficiency of the critic.

\chapter{Conclusion and Future Work}
%\textit{Above and beyond.}

\section{Summary of Contributions}
In this thesis, we provided the following original contributions:
\begin{enumerate}
\item
The surrogate approach of optimizing sample efficiency by the optimization of update targets.
\item
A principled approach, \algoname{}, which achieves approximate meta-optimization of the sample efficiency of policy evaluation while remaining compatible with online updating, off-policy learning, function approximation and control, with minimal additional computational cost.
\item
Identification of the distribution mismatch between the ``on-policy'' distribution and the assumed ``stationary'' distribution.
\end{enumerate}
We demonstrated the merits of the proposed approach in several experiments.

Beyond this, several smaller contributions are also presented:
\begin{enumerate}
\item
A re-framing of the basics of RL using state-based discounting setting.
\item
Identification of the distribution mismatch between the ``on-policy'' distribution with the assumed ``stationary'' distribution in the episodic setting.
\item
A new proof of the policy gradient theorem under state-based discount setting.
\end{enumerate}

\section{Future Work}
The contribution of this thesis points out several promising directions worthy of research in the future:
\begin{enumerate}
\item
Investigate the possibilities of using traces to accumulate one-step meta-gradient updates, in order to achieve better approximation of the true gradients: the approximation of the per-state meta-gradients can be ameliorated by looking more steps into the future, which draws great similarity to the online accumulation of the eligibility traces. Perhaps better meta-gradients can be achieved by utilizing traces.
\item
Investigate the convergence properties of the update targets under meta-optimization: under certain assumptions, we proved that the meta-gradient updates can optimize the overall targets. But do these targets converge? Or, what dynamic patterns do these targets follow?
\item
Investigate formally the relationship between update targets and sample efficiency: optimization of update targets, as a surrogate of sample efficiency, greatly simplifies the analyses and exhibits significant empirical performance boost. Yet, we have not formally established how the quality of the update targets influences the sample efficiency.
\item
Investigate better ways of combining the proposed approach with actor-critic methods: the trust regions required by the auxiliary learners already limit the learning rate of the critic to be small. Together with the fact that we need to limit the learning rate of the actor to be smaller than the critic, this further limits the learning speed of actor-critic systems. We would like to find a more efficient algorithmic approach than the one we used in this thesis.
\end{enumerate}

\bibliography{references}
\bibliographystyle{acm}

\chapter*{APPENDICES}
In this chapter, the assistive details of the thesis will be provided.

\section{Technical Auxiliaries for Experiments}

\subsection{Environments}

\subsubsection{RingWorld}
The RingWorld domain was introduced as a suitable domain for investigating $\lambda$ in \cite{kearns2000bias}. It describes an environment a ring of states. The starting state is always in the top-middle of the ring and the agent can take two actions, either moving to the state to the left or the state to the right. There are two adjoining terminal states at the bottom middle of the ring. The left end gives $-1$ reward and the right end gives $+1$. Reaching the bottom two terminal states result in the teleportation back to the starting state. The sparsity of the rewards made the selection of an appropriate $\lambda$ value worthy of investigating.

In the experiments, the RingWorld environment is reproduced with the help of the description in \cite{white2016greedy} and \cite{kearns2000bias}. Despite being loyal to the original setting as much as possible, due to limitations of our understanding, we cannot see the difference between the RingWorld and a random walk environment with the rewards on the two tails. The RingWorld environment is described as a symmetric ring of the states with the starting state at the top-middle, for which we think the number of states should be odd. However, the authors claimed that they conducted experiments on the $10$-state and $50$-state instances. We instead did the experiments on the $11$-state instance. Thus in the experiments, we adopted a random walk version of RingWorld.

\subsubsection{FrozenLake}
The FrozenLake environment is a very noisy and high-variance domain in the OpenAI gym environment bundle \cite{brockman2016gym}. The environment features a scenario in which an agent tries to fetch back a lost frisbee on the surface of a frozen lake. In this task, the agent controls the movement of a character in a grid world. Some tiles of the grid are walkable, and others lead to the agent falling into the water, which terminates the episode and teleports the agent back to the starting point. Furthermore, the movement direction of the agent is uncertain and only partially depends on the chosen direction. The agent is rewarded for finding a walkable path to a goal tile.

To make sure the Markovian properties of the environment, we removed the episode length limit of the FrozenLake environment and thus the environment can be solvable by dynamic programming (the true values of a policy as well as the state distribution).

It is modified based on the Gym environment with the same name. We have used the instance of ``4x4'', \ie{} with $16$ states.

The episode length limit of MountainCar is also removed. We also added noise to the state transitions: actions will be randomized at $20\%$ probability. The noise is to prevent the cases in which $\lambda = 1$ yields the best performance (to prevent \algoname{} from using extremely small $\kappa$'s to get good performance). Additionally, due to the poor exploration of the softmax policy, we extended the starting location to be uniformly anywhere from the left to right on the slopes.

\subsubsection{State Features}
For RingWorld, we used onehot encoding to get equivalence to tabular case; For FrozenLake, we used a discrete variant of tile coding, for which there are $4$ tilings, with each tile covering one grid as well as symmetric and even offset; For MountainCar, we adopted the roughly the same setting as Chapter 10.1 Page 245 in \cite{sutton2018reinforcement}, except that we used ordinary symmetric and even offset instead of the asymmetric offset.

\subsubsection{About ${\bm{\lambda}}$-greedy}
We have replaced VTD \cite{white2016greedy} with direct VTD \cite{sherstan2018directly}. This modification is expected only to improve the stability, without touching the core mechanisms of $\lambda$-greedy \cite{white2016greedy}.

The target used in \cite{white2016greedy} is biased toward $\lambda = 1$, as the $\lambda$'s into the future are assumed to be $1$. Thus we do not think it is helpful to conduct tests on environments with very low variance. For example, RingWorld is low-variance, as the state transitions are deterministic. Also, the policies adopted in \cite{white2016greedy} is very greedy. Such setting further reduces the variance. This is the reason why we have tested different policies (less greedy) in our experiments.

\subsubsection{Buffer Period and Learning Rate}
In the prediction tasks, we used the first $10\%$ of the episodes as the buffer period. The learning rate hyperparameters of the auxiliary learners are set to be the twice of that of the value learner. These settings were not considered in \cite{white2016greedy}, in which there were no buffer period and identical learning rates were used for all learners. 

For the control task of MountainCar, $\lambda$-greedy and \algoname{} will both crash without these two additional settings.

\subsubsection{Details for Non-Parametric $\lambda(\cdot)$}
To disable the generalization of the parametric $\bm{{\bm{\lambda}}}$ for ``\algoname{}(np)'', we replaced the feature vectors for each state with onehot-encoded features.

\subsubsection{More Policies for Prediction}
For RingWorld, we have done $6$ different behavior-target policy pairs (3 on-policy \& 3 off-policy). The off-policy pair that we have shown in the manuscript shares the same patterns as the rest of the pairs. The accuracy improvement brought by \algoname{} is significant across these pairs of policies; For FrozenLake, we have done two pairs of policies (on- and off-policy). We observe the same pattern as in the RingWorld tests.

\subsubsection{Implementation}
The source code could be found at \url{https://github.com/PwnerHarry/META}. The implementation is based on numpy and python, with massive parallelization for hyperprocessing workers. These are to ensure fast experimental results on large scale CPUs.

Due to the estimation instability, sometimes the \algoname{} updates could bring state $\lambda$ values outside $[0, 1]$. In the implementation, whenever we detect such kind of update, we simply cancel that operation.

\end{document}